\newcommand{\captiona}{{\em (a)}}
\newcommand{\captionb}{{\em (b)}}
\def\eqref#1{equation~\ref{#1}}
\def\1{\bm{1}}
\def\eps{{\epsilon}}
\def\va{{\bm{a}}}
\def\vh{{\bm{h}}}
\def\mI{{\bm{I}}}
\def\mW{{\bm{W}}}
\DeclareMathAlphabet{\mathsfit}{\encodingdefault}{\sfdefault}{m}{sl}
\SetMathAlphabet{\mathsfit}{bold}{\encodingdefault}{\sfdefault}{bx}{n}
\newcommand{\affnum}[1]{{\normalsize\rm\textsuperscript{\,#1}}}
\newcommand{\affiliation}[2]{\normalsize\rm\textsuperscript{#1}#2}
\newcommand{\nname}[1]{\textbf{#1}}
\title{Learnable Graph Convolutional\\ Attention Networks}
\newcounter{phase}
\newcommand{\added}[2]{\ifthenelse{\value{phase}=#1}{\orange{#2}}{#2}}
\newcommand{\removed}[2]{\ifthenelse{\value{phase}=#1}{\sout{#2}}{\nop{}}}
\newcommand{\replaced}[3]{\ifthenelse{\value{phase}=#1}{\red{\sout{#2}} \orange{$\rightarrow$ #3}}{#3}}
\newenvironment{added*}[1]{\ifthenelse{\value{phase}=#1}{\color{orange}}{}}{\color{defaultcolor}}
\crefname{Corollary}{Cor.}{Corollaries}
\crefname{corollary}{Cor.}{Corollaries}
\crefname{Theorem}{Thm.}{Theorems}
\crefname{theorem}{Thm.}{Theorems}
\crefname{Appendix}{App.}{Apps.}
\crefname{appendix}{App.}{Apps.}
\author{%
\nname{Adri\'an~Javaloy}\affnum{1,}\thanks{Equal contribution. Correspondence to: \texttt{\href{mailto:ajavaloy@cs.uni-saarland.de}{\{ajavaloy,sanchez\}@cs.uni-saarland.de}}.}   \hspace{5pt}
\nname{Pablo~S\'anchez-Mart\'in}\affnum{1,2,$*$}  \hspace{5pt}
\nname{Amit~Levi}\affnum{3}  \hspace{5pt}
\nname{Isabel~Valera}\affnum{1,4} \\[6pt]
\affiliation{1}{Department of Computer Science of Saarland University, Saarbrücken, Germany} \\
\affiliation{2}{Max Planck Institute for Intelligent Systems, Tübingen, Germany}\\
\affiliation{3}{Huawei Noah’s Ark Lab, Montreal, Canada}\\
\affiliation{4}{Max Planck Institute for Software Systems, Saarbrücken, Germany}\\
}
\begin{document}
	\doparttoc %
	\faketableofcontents %

    \maketitle

\vspace{-0.4cm}
\begin{abstract}

Existing Graph Neural Networks (GNNs) compute the message exchange between nodes by either aggregating uniformly (\emph{convolving})  the features of all the neighboring nodes, or by applying a non-uniform  
score (\emph{attending}) to the features. %
Recent works have shown %
the strengths and weaknesses of the resulting GNN architectures, respectively, GCNs and GATs. 
In this work, we aim at exploiting the strengths of both approaches to their full extent. 
To this end, we first introduce the graph convolutional attention layer (\ours), which relies on convolutions to compute the attention scores. 
Unfortunately, as in the case of GCNs and GATs, we  show that there exists no clear winner between the three---neither theoretically nor in practice---as their performance %
directly depends on the nature of the data (i.e., of the graph and features). 
This result brings us to the main contribution of our work, the learnable graph convolutional attention network (\ours*): a GNN architecture that  automatically interpolates between GCN, GAT and \ours in each layer, by adding  two scalar parameters. %
Our results demonstrate that \ours* is able to efficiently combine different GNN layers along the network, outperforming competing methods in a wide range of datasets, and resulting in a more robust model that reduces the need of cross-validating. %
\end{abstract}

\section{Introduction\pages{1}} \label{sec:intro}

\msg{Intro about the importance and raise of GNNs. For ideas see \url{https://dl.acm.org/doi/pdf/10.1145/3495161}}

In recent years, Graph Neural Networks~(GNNs)~\citep{scarselli2008gnns} have become ubiquitous in machine learning, emerging as the standard approach in many settings.
For example, they have been successfully applied for tasks such as topic prediction in citation networks~\citep{sen2008collective}; molecule prediction~\citep{gilmer2017neural}; and link prediction in recommender systems~\citep{wu2020graph}.
These applications typically make use of message-passing GNNs~\citep{gilmer2017neural}, whose idea is fairly simple: 
in each layer, nodes are updated by aggregating the information (messages) coming from their neighboring nodes.

Depending on how this aggregation is implemented, we can define different types of GNN layers.
Two important and widely adopted layers are graph convolutional networks~(GCNs)~\citep{kipf2016semi}, which uniformly average the neighboring information; and graph attention networks~(GATs)~\citep{velivckovic2017graph}, which instead perform a weighted average, based on an attention score between receiver and sender nodes.
\msg{Previous work by Kimon et. al. studied the scenarios and limits of both convolutional and attention-based layers on a controlled environment (stochastic block model). KEY IDEA: hard to tell apriori which one to use.}
More recently, a number of works have shown the strengths and limitations of both approaches from a theoretical~\citep{kimongat,kimongcn,baranwal2022effects}, and empirical~\citep{knyazev2019understanding} point of view.\ajnote{more refs?}
These results show that their performance depends on the nature of the data at hand (i.e., the graph and the features), 
thus the standard approach %
is to select between GCNs and GATs %
via computationally demanding cross-validation. %

\msg{We leverage these studies and propose to enhance GAT models by convolving the input of the score. We add two learnable parameters that control the amount of convolution and convolved-score, unifying GCNs and GATs.}

In this work, we aim to  exploit the benefits of both convolution and attention operations in the design of GNN architectures. 
To this end, we first introduce a novel  graph convolutional attention layer (\ours), which extends existing attention layers by 
taking the convolved features as inputs of the score function. 
Following~\citep{kimongat}, we rely on  a contextual stochastic block model to theoretically compare  GCN, GAT, and \ours architectures. %
Our analysis shows that, unfortunately, no free lunch exists among these three GNN architectures since their performance, as expected, is fully data-dependent.

This motivates the main contribution of the paper, the \emph{learnable graph convolutional attention network} (\ours*): a novel GNN  which, in each layer, automatically interpolates between the three operations by introducing only two scalar parameters.
As a result, \ours* is able to learn the proper operation to apply at each  layer, thus  combining different layer types in the same GNN architecture while overcoming the need to cross-validate---a process that was prohibitively expensive prior to this work.
Our extensive empirical analysis demonstrates the capabilities of \ours* on a wide range of datasets, outperforming existing baseline GNNs in terms of both performance, and robustness to input noise and network initialization.

\vspace{-0.1cm}
\section{Preliminaries\pages{0.75}} \label{sec:preliminaries}

\msg{Graph}

Assume as input an undirected graph ${G} = ({V}, {E})$, where ${V} = [n]$ denotes the set of vertices of the graph, and ${E} \subseteq {V} \times {V}$ the set of edges. Each node $i\in[n]$ is represented by a $d$-dimensional feature vector $\bX_i\in \R^d$, and the goal is to produce a set of predictions $\{\hat \by_i\}_{i=1}^n$.
\msg{Message-passing GNNs. Explain what are they in general terms.}
To this end, %
a message-passing GNN layer yields a representation $\tilde\vh_i \in \R^{d^\prime}$ for each node $i$, by collecting and aggregating the information from each of its neighbors into a single message; and using the aggregated message to update its representation from the previous layer, $\vh_i \in \R^{d}$.
For the purposes of this work, we can define this operation as the following:
\begin{equation}
    \tilde\vh_i = f(\vh_i^\prime) \quad \text{where} \quad \vh_i^\prime \eqdef \sum_{j\in N_i^*} \gamma_{ij} \mW_v \vh_j\;, 
    \label{eq:gnn}
\end{equation}
where $N_i^*$ is the set of neighbors of node $i$ (including $i$), 
$\mW_v\in\R^{d^\prime\times d}$ a learnable  matrix, $f$ an elementwise function, and $\gamma_{ij}\in[0,1]$ are coefficients such that $\sum_{j} \gamma_{ij} = 1$ for each node $i$.

Let the input features be $\vh_i^0 = \bX_i$, and $\vh_i^L = \hat \by_i$ the predictions, then we can readily define a message-passing GNN~\citep{gilmer2017neural} as a sequence of $L$ layers as defined above.
Depending on the way the coefficients $\gamma_{ij}$ are computed, we identify different GNN flavors. 

\msg{Graph Convolutional Neural Networks. Explain the initial formulation by Kipf.}

\paragraph{Graph convolutional networks (GCNs)}~\citep{kipf2016semi} 
are simple yet effective. 
In short, GCNs
compute the average of the messages, \ie, they assign the same coefficient $\gamma_{ij} = {1}/{|N_i^*|}$ to every neighbor:
\begin{equation}
    \tilde\vh_i = f(\vh_i^\prime) \quad \text{where} \quad \vh_i^\prime \eqdef \frac{1}{|N_i^*|} \sum_{j \in N_i^*} \mW_v\vh_j\;, 
    \label{eq:gcn}
\end{equation}

\msg{Graph Attention Networks. Explain GAT and GAT2.}

\paragraph{Graph attention networks} 
take a different approach. Instead of assigning a fixed value to each coefficient $\gamma_{ij}$, they  compute it as a function of the sender and receiver nodes.
A general formulation for these models can be written as follows:
\begin{equation} 
	\tilde\vh_i = f(\vh_i^\prime) \quad \text{where} \quad \vh_i^\prime \eqdef \sum_{j \in N_i^*} \gamma_{ij}\mW_v\vh_j\quad\text{and}\quad \gamma_{ij} \eqdef \frac{\exp({\Psi(\vh_i, \vh_j) })}{\sum_{\ell\in N^*_i} \exp({\Psi(\vh_i, \vh_\ell)}) }\;\label{eq:gat}. 
\end{equation}
Here, $\Psi(\vh_i, \vh_j) \eqdef \alpha(\mW_q\vh_i, \mW_k\vh_j)$ is known as the \emph{score function} (or \emph{attention architecture}), and provides a score value between the messages $\vh_i$ and $\vh_j$ (or more generally, between a learnable mapping of the messages).
From these scores, the (attention) coefficients are obtained by normalizing them, such that $\sum_j \gamma_{ij} = 1$.
We can find in the literature different attention layers and, %
throughout this work, we focus on the original GAT~\citep{velivckovic2017graph} and its extension GATv2~\citep{brody2021attentive}:
\begin{align}
    \text{GAT:} & \quad \Psi(\vh_i, \vh_j) = \LeakyRelu\left( \va^\top [\mW_q\vh_i || \mW_k \vh_j] \right)\;,
    \label{eq:score_gatv2} \\
    \text{GATv2:} & \quad \Psi(\vh_i, \vh_j) =  \va^\top \LeakyRelu \left( \mW_q\vh_i +  \mW_k\vh_j \right)\;,
    \label{eq:score_gat}
\end{align}
where the learnable parameters are now the attention vector $\va$; and the matrices $\mW_q$, $\mW_k$, and $\mW_v$. 
Following previous work \citep{velivckovic2017graph,brody2021attentive}, we assume that these matrices are coupled, \ie, $\mW_q = \mW_k = \mW_v$.
Note that the difference between the two layers lies in the position of the vector $\va$: by taking it out of the nonlinearity, \citet{brody2021attentive} increased the expressiveness of GATv2. Now, the product of $\va$ and a weight matrix does not collapse into another vector. 
More importantly, the addition of two different attention layers will help us show the versatility of the proposed models later in \cref{sec:experiments}.

\section{Previous work\pages{0.5}} \label{sec:related-work}

In recent years, there has been a surge of research in GNNs. %
Here, we discuss other GNN models, attention mechanisms, and the recent findings on the limitations of GCNs and GATs.

The literature on GNNs is extensive \citep{wu2020comprehensive, hamilton2017representation,battaglia2018relational,lee2019attention}, and more abstract definitions of a message-passing GNN are possible, leading to other lines of work trying different ways to compute messages, aggregate them, or update the final message~\citep{hamilton2017inductive,xu2018powerful,corso2020principal}\ajnote{double check}.
Alternatively, another line of work fully abandons message-passing, working instead with higher-order interactions~\citep{morris2019weisfeiler}.
While some of this work is orthogonal---or directly applicable---to the proposed model, %
in the main paper we focus on convolutional and attention graph layers, as
they are the most widely used (and cited) as of today. %

\msg{\todopablo Mention all the different variants of GCN/GATs that there are.}

While we consider the original GAT~\citep{velivckovic2017graph} and GATv2~\citep{brody2021attentive}, our work can be directly applied to any attention model that sticks to the formulation in \cref{eq:gat}. 
For example, some works propose different metrics for the score function, like the dot-product~\citep{brody2021attentive}, cosine similarity~\citep{thekumparampil2018attention}, or a combination of various functions~\citep{kim2022find}. 
Other works introduce transformer-based mechanisms~\citep{vaswani2017attention} based on positional encoding~\citep{dwivedi2020generalization,kreuzer2021rethinking} or on the set transformer~\citep{wang2020multi}. 
Finally, there also exist attention approaches designed for specific type of graphs, such as relational~\citep{yun2019graph,busbridge2019relational} or heterogeneous graphs~\citep{wang2019heterogeneous,hu2020heterogeneous}.  

\subsection{On the limitations of GCN and GAT networks\pages{0.75}} \label{subsec:limitations}

\msg{\todokimon \todoamit Introduce the stochastic block model example}

\ajnote{I rewrote this part to match the notation and definitions we've given so far.}

\msg{\todokimon \todoamit Explain how this example has been used before \citep{kimongcn} to show the strengths and weaknesses of GCNs (as a function of $p-q$)}

\citet{kimongcn} studied
classification %
on a simple stochastic block model, %
showing that,
when the graph is neither too sparse nor noisy, applying one layer of graph convolution increases the regime in which the data is linearly separable. 
However, this result is highly sensitive to the graph structure, as convolutions essentially collapse the data to the same value in the presence of enough noise.
\msg{\todokimon \todoamit Explain how this example has been used before \citep{kimongat} to show the strengths and weaknesses of GATs (easy vs. hard regimes)}
More recently, 
\citet{kimongat} showed that GAT is able to remedy the above issue, and provides perfect node separability regardless of the noise level in the graph.
However, a classical argument (see~\citet{anderson1962introduction}) states that
\emph{in this particular setting}
a linear classifier already achieves perfect separability. %
These works, in summary, showed scenarios for which GCNs can be beneficial in the absence of noise, and that GAT can outperform GCNs in other scenarios, leaving open the question of which architecture is preferable in terms of performance.

\section{Convolved attention: benefits and hurdles}
\label{sec:convolved-GAT}
\msg{\todokimon \todoamit Introductory paragraph talking about the interpretation/intuition of the limitations of GAT in the toy example, and how to solve it with convolved score inputs}

\msg{\todoadri Introduce \ours (GAT with convolved score).}

In this section, we propose to combine attention with convolution operations.
To motivate it, we complement the results of~\citet{kimongat}, providing a synthetic dataset for which \emph{any} $1$-layer GCN fails, but $1$-layer GAT does not. 
Thus, proving a clear distinction between GAT and GCN layers.
Besides, we show that convolution helps GAT as long as the graph noise is reasonable.
The proofs for the two statements in this section appear in Appendix~\ref{app:theory} and follow similar arguments as in~\citet{kimongat}.

This dataset is based on the \emph{contextual stochastic block model} ($\CSBM$)~\citep{deshpande2018contextual}.  
Let $\eps_1,\ldots,\eps_n$ be iid. uniform samples from $\{-1,0,1\}$.
Let $C_k=\{j\in [n]\mid \eps_j=k\}$ for $k\in \{-1,0,1\}$. We set the feature vector $\bX_i\sim \Normal(\eps_i\bmu, \bI\cdot \sigma^2)$ where $\bmu\in \R^d$, $\sigma\in \R$, and $\bI\in \zo^{d\times d}$ is the identity matrix. For a given pair $p,q\in [0,1]$ we consider the stochastic adjacency matrix $\bA\in\zo^{n\times n}$ defined as follows: for $i,j\in [n]$ in the same class (\emph{intra-edge}), we set $a_{ij}\sim \Ber(p)$;\footnote{$\Ber(\cdot)$ denote the Bernoulli distribution.} for $i,j$ in different classes (\emph{inter-edge}), we set $a_{ij}\sim\Ber(q)$. We denote by $(\bold{X},\bA)\sim \CSBM(n,p,q,\bmu,\sigma^2)$ a sample obtained according to the above random process. Our task is then to distinguish (or separate) nodes from $C_0$ vs. $C_{-1}\cup C_1$.
Note that, in general, it is impossible to separate $C_0$ from $C_{-1}\cup C_1$ with a linear classifier and, 
using one convolutional layer is detrimental for node classification on the CSBM:\footnote{We note that this problem can be easily solved by two layers of GCN~\citep{baranwal2022effects}.}
 although the convolution brings the means closer and shrinks the variance, the geometric structure of the problem does not change.
On the other hand, we prove that GAT is able to achieve perfect node separability when the graph is not too sparse: 
\begin{theorem} \label{prop:gat} Suppose that $p,q=\Omega(\log^2 n/n)$ and $\|\bmu\|_2=\omega(\sigma \sqrt{\log n})$. Then, there exists a choice of attention architecture $\Psi$ such that, with probability at least $1-o_n(1)$ over the data $(\bold{X},\bA) \sim \CSBM(n,p,q,\bmu,\sigma^2)$, GAT separates nodes $C_0$ from $C_1\cup C_{-1}$.
\end{theorem}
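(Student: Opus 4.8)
The plan is to reduce everything to the one–dimensional projection along the signal direction and then to exhibit an attention architecture that \emph{denoises by class}. Write $\hat{\bmu}\eqdef\bmu/\|\bmu\|_2$, $M\eqdef\|\bmu\|_2$, and $s_i\eqdef\langle \bX_i,\hat{\bmu}\rangle\sim\Normal(\eps_i M,\sigma^2)$. Conditioning on the good event that all noise projections satisfy $|s_i-\eps_iM|\lesssim \sigma\sqrt{\log n}$ (Gaussian maxima plus a union bound over $n$ nodes), the hypothesis $\|\bmu\|_2=\omega(\sigma\sqrt{\log n})$ places every $s_i$ within $o(M)$ of its class centre $\eps_i M\in\{-M,0,M\}$. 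Hence along $\hat{\bmu}$ the three classes are cleanly separated, and $|s_i|$ alone distinguishes $C_0$ (where $|s_i|=o(M)$) from $C_1\cup C_{-1}$ (where $|s_i|=M(1-o(1))$). The entire difficulty is that this separation is \emph{non-linear} — the middle class must be told apart from the two extremes — so it has to be manufactured by the single nonlinearity that a one-layer GAT affords, namely in the scores and in the elementwise $f$.

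I would choose the score to reward same-sign, large-magnitude pairs, $\Psi(\vh_i,\vh_j)=\beta\big(|s_i+s_j|-|s_i-s_j|\big)=2\beta\,\sign(s_is_j)\min(|s_i|,|s_j|)$, for a temperature $\beta$ to be fixed. This lives in the family of \eqref{eq:gat}: using $|x|\propto \LeakyRelu(x)+\LeakyRelu(-x)$, it is realised by the GATv2 score \eqref{eq:score_gat} with $\mW_q\vh_i=s_i(1,-1,1,-1)^\top$, $\mW_k\vh_j=s_j(1,-1,-1,1)^\top$ (rows proportional to $\hat{\bmu}^\top$) and $\va\propto\beta(1,1,-1,-1)^\top$. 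The point of this score is its behaviour on the three class centres: a $C_1$ node sees $\approx 2\beta M$ on $C_1$ neighbours, $\approx 0$ on $C_0$, and $\approx -2\beta M$ on $C_{-1}$ (symmetrically for $C_{-1}$), whereas a $C_0$ node, having $s_i\approx 0$, assigns \emph{all} neighbours a score $\approx 0$.

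The core of the argument is then a concentration statement for the resulting coefficients $\gamma_{ij}$. On the good event the degrees concentrate (Chernoff, using $pn,qn=\Omega(\log^2 n)$), so each node has $\Theta(pn)$ intra-class and $\Theta(qn)$ inter-class neighbours. For $i\in C_{\pm1}$, taking $\beta$ with $\beta M=\omega(\log n)$ drives the softmax leakage onto the other two classes to $o(1)$, whence $\bar s_i\eqdef\sum_{j\in N_i^*}\gamma_{ij}s_j=\pm M\,(1-o(1))$; for $i\in C_0$, symmetry of the $C_1$ and $C_{-1}$ neighbour counts (their difference being a lower-order fluctuation) forces $\bar s_i=o(M)$. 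Finally I take $\mW_v$ with rows $\pm\hat{\bmu}^\top$, so that $\vh_i'=(\bar s_i,-\bar s_i)$, and observe that after the elementwise $f$ (e.g.\ a rectifier) the linear read-out $\vone^\top f(\vh_i')$ is monotone in $|\bar s_i|$: it is $o(M)$ on $C_0$ and $\Omega(M)$ on $C_{\pm1}$. A fixed threshold, together with a union bound over the $n$ nodes, yields the claimed separation with probability $1-o_n(1)$.

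The main obstacle is the simultaneous tuning of the temperature $\beta$. Suppressing cross-class leakage for the extreme classes — dominated by the $\Theta(qn)$ middle-class neighbours, which receive score $\approx 0$ rather than $-2\beta M$ — requires $\beta M$ large. But a $C_0$ node is fragile: its own noise $g_i\eqdef s_i-0$ shifts the scores of its $C_1$ versus $C_{-1}$ neighbours by $\pm 2\beta g_i$, so a too-sharp attention collapses it onto a single extreme, with $\bar s_i\to\pm M\tanh(2\beta g_i)$, producing a false positive. Preventing this forces $\beta\max_{i\in C_0}|g_i|=o(1)$, i.e.\ $\beta\,\sigma\sqrt{\log n}=o(1)$. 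Reconciling $\beta M=\omega(\log n)$ with $\beta\,\sigma\sqrt{\log n}=o(1)$ is exactly where the separation margin is spent, and the quantitative softmax-leakage estimates together with the Gaussian and Bernstein/Chernoff concentration bounds that underpin both requirements are the genuinely technical part, paralleling the easy-regime analysis of \citet{kimongat}.
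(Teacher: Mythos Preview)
Your proposal contains a genuine gap. The two temperature constraints you isolate in the last paragraph are, in the worst case allowed by the hypotheses, \emph{incompatible}: if $\beta\|\bmu\|=\omega(\log n)$ and $\beta\sigma\sqrt{\log n}=o(1)$ then dividing gives $\|\bmu\|/(\sigma\sqrt{\log n})=\omega(\log n)$, i.e.\ $\|\bmu\|=\omega(\sigma\log^{3/2}n)$, strictly stronger than the assumed $\|\bmu\|=\omega(\sigma\sqrt{\log n})$. And the first constraint is not removable. The theorem must cover, say, $p=\Theta(\log^2 n/n)$ and $q=\Theta(1)$; then a node $i\in C_1$ has only $\Theta(\log^2 n)$ same-class neighbours against $\Theta(n)$ neighbours in $C_0$ (each receiving score $\approx 0$ under your $\Psi$), and the softmax cannot favour $C_1$ unless $e^{2\beta\|\bmu\|}\gg n/\log^2 n$. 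The root cause is structural: your score is linear in $s_i$, so for $i\in C_0$ with $s_i=g_i$ the score gap between its $C_1$ and $C_{-1}$ neighbours is $4\beta g_i$ --- pure noise --- and any increase in $\beta$ amplifies it.

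The paper's construction avoids this tension by inserting an additive bias $\bb$ of order $\|\bmu\|$ \emph{before} the $\LeakyRelu$. The eight pre-activations then act as indicators of the class pair $(\eps_i,\eps_j)$: on the good event each entry of $\LeakyRelu(\bDelta_{ij})$ equals a deterministic class-dependent value times $(1\pm o(1))$, so the final score $\boldr^\top\LeakyRelu(\bDelta_{ij})$ has signal and noise that scale identically with the free parameter $R$ in $\boldr$. One can therefore take $R\|\bmu\|$ as large as desired (in particular $\omega(\log n)$) without any opposing constraint from $C_0$, because a $C_0$ node's scores toward $C_1$ and toward $C_{-1}$ are now both $-\Theta(R\|\bmu\|)$ rather than $\pm\Theta(\beta g_i)$. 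A second, smaller difference: the paper's $\Psi$ is deliberately asymmetric --- it routes $C_{-1}$ nodes to attend to their $C_1$ neighbours --- so after aggregation every node in $C_1\cup C_{-1}$ sits near $+\|\bmu\|$ while $C_0$ sits near $0$, and a single affine threshold separates them with no output nonlinearity needed.
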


Moreover, we show using methods from~\citet{kimongcn}, that the above classification threshold $\|\bmu\|$ can be improved when the graph noise is reasonable. %
Specifically, \emph{by applying convolution prior to the attention score}, the variance of the data is greatly reduced, and if the graph is not too noisy, the operation dramatically lowers the bound  in \cref{prop:gat}.
We  exploit this insight by introducing the \emph{graph convolutional attention layer} (\ours):
\begin{equation}
	\Psi(\vh_i, \vh_j) =  \alpha(\mW\conv\vh_i, \mW\conv\vh_j) \quad\text{where}\quad \conv\vh_i = \frac{1}{|N^*_i|} \sum_{\ell\in N^*_i} \vh_\ell\;, %
	\label{eq:ours-score}
\end{equation}%
where $\conv\vh_i$ are the convolved features of the neighborhood of node $i$. 
As we show now, \ours improves over GAT by combining convolutions with attention, when the graph noise is low.
\begin{corollary} \label{prop:cat} Suppose $p,q=\Omega(\log^2 n/n)$ and  $\|\bmu\|\ge \omega\left(\sigma \sqrt{\frac{(p+2q)\log n}{n(p-q)^2}}\right)$. Then, there is a choice of attention architecture $\Psi$ such that \ours separates nodes $C_0$ from $C_1\cup C_{-1}$, with probability at least $1-o(1)$ over the data $(\bold{X},\bA)\sim\CSBM(n,p,q,\bmu,\sigma^2)$.
\end{corollary}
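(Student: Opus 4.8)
The plan is to mirror the argument behind \cref{prop:gat}, but to feed the score function the convolved features $\conv\bX_i=\frac{1}{|N_i^*|}\sum_{\ell\in N_i^*}\bX_\ell$ in place of the raw features $\bX_i$, thereby trading a shrunken class mean for a sharply reduced variance. Since the entire improvement is encoded in the law of $\conv\bX_i$, the first step is to characterize it class by class. Splitting the neighborhood of $i$ into its same-class and different-class parts and using $\mathbb{E}[\bX_\ell]=\eps_\ell\bmu$, a direct computation gives that the conditional mean of $\conv\bX_i$ is $\approx 0$ when $i\in C_0$ (the $C_1$ and $C_{-1}$ contributions cancel in expectation) and $\approx \pm\frac{p-q}{p+2q}\bmu$ when $i\in C_{\pm 1}$, while each coordinate has variance $\approx \sigma^2/|N_i^*|$. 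Thus convolution shrinks the separating signal by the factor $\frac{p-q}{p+2q}$ but shrinks the noise standard deviation by the much larger factor $\sqrt{|N_i^*|}$, and it is the competition between these two effects that produces the threshold in the statement.

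Next I would make this rigorous by concentration. Because $p,q=\Omega(\log^2 n/n)$, every node has expected degree $\Omega(\log^2 n)$, so a Chernoff bound shows that $|N_i^*|$, together with the counts of same- and different-class neighbors, concentrate around $\frac{n}{3}(p+2q)$ (resp. $\frac{n}{3}p$ and $\frac{2n}{3}q$) with failure probability $o(1/n)$; the class sizes $|C_k|$ concentrate around $n/3$ in the same way. Conditioned on the graph, $\conv\bX_i$ is a fixed linear combination of independent Gaussians, hence Gaussian with the mean above and per-coordinate standard deviation $\Theta(\sigma/\sqrt{n(p+2q)})$; a Gaussian tail bound and a union bound over the $n$ nodes then confine every $\conv\bX_i$ to a ball of radius $O(\sigma\sqrt{\log n/(n(p+2q))})$ around its class mean, simultaneously, with probability $1-o(1)$.

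With this in hand I would reuse the attention construction from the proof of \cref{prop:gat}, now evaluated on $\conv\bX_i$: choose $\mW$ to project onto the direction $\bmu/\|\bmu\|$ and pick $\alpha$ so that the induced scores detect each node's membership by testing the magnitude of the projected convolved feature (a band test, reflecting that $C_0$ sits between $C_1$ and $C_{-1}$). The hypothesis $\|\bmu\|\ge\omega\!\left(\sigma\sqrt{(p+2q)\log n/(n(p-q)^2)}\right)$ is exactly what forces the shrunken gap $\frac{p-q}{p+2q}\|\bmu\|$ between the convolved class means to dominate the concentration radius $O(\sigma\sqrt{\log n/(n(p+2q))})$, so the score reliably classifies each node; the attention coefficients $\gamma_{ij}$ then concentrate on the correct neighbors and place $C_0$ and $C_1\cup C_{-1}$ on opposite sides of the decision region (the separability demanded of the value aggregation is no stronger than this once the score is reliable). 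A final union bound over all $n$ nodes promotes per-node correctness to simultaneous separation with probability $1-o(1)$.

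The main obstacle is the dependency structure, exactly as in \cref{prop:gat}: the convolved features of two adjacent or two-hop nodes share summands and are therefore correlated, and $\gamma_{ij}$ is a nonlinear (softmax) function of scores that themselves depend on the whole neighborhood, so one cannot treat the coefficients as independent of the features being aggregated. I would handle this as in \citet{kimongat}, conditioning on the high-probability event that all degrees and convolved features are well-behaved and controlling the residual fluctuations through the fact that each Gaussian contributes only an $O(1/|N_i^*|)$ fraction of any convolved sum, so no single correlated term can push a score across its threshold. Checking that convolution does not amplify these correlations beyond what the variance reduction absorbs—i.e., that all error terms remain $o$ of the shrunken signal $\frac{p-q}{p+2q}\|\bmu\|$—is the delicate quantitative heart of the argument.
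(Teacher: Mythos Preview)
Your approach is correct and matches the paper's: the paper's entire proof is the one-liner ``Combining \cref{thm:main_GAT} with \cref{lem:Dist_after_conv}, we immediately get \cref{prop:cat},'' where \cref{lem:Dist_after_conv} is exactly your first step (showing $(\bD^{-1}\bA\bX)_i\sim\Normal(\eps_i\frac{p-q}{p+2q}\bmu,\frac{\sigma^2}{n(p+2q)})$ on the high-probability degree-concentration event), and \cref{thm:main_GAT} is the GAT separability result you propose to re-invoke on the convolved features with the rescaled parameters.

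One remark: you flag the correlation among the $\conv\bX_i$ as the ``main obstacle,'' whereas the paper does not address it at all in the one-line reduction. This is not really a gap on either side, because the proof of \cref{thm:main_GAT} never uses independence of the $\bX_i$ across nodes; it only uses that the estimator $\hat x_i$ is a Lipschitz function of a single jointly Gaussian vector $\bg\in\R^{nd}$ (Lemma~\ref{lem:hat_x_sub_g}) together with per-node marginal concentration and a union bound. Since the convolved features are fixed linear images of the same $\bg$, the Lipschitz-of-Gaussian machinery transfers verbatim, so your worry, while legitimate to raise, dissolves once you trace through the proof of \cref{thm:main_GAT} rather than treating it as a black box on independent inputs.
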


\msg{\todoamit \todokimon Here we introduce the new toy example, as well as the theoretical analysis we might have on its benefits and limitations.}

The above proposition shows that under the $\CSBM$ data model, convolving prior to attention changes the regime for perfect node separability by a factor of $|p-q|\sqrt{{n}/{(p+2q)}}$. This is desirable when  $|p-q|\sqrt{{n}/{(p+2q)}}>1$, since the regime for perfect classification is increased. 
Otherwise, applying convolution prior to attention reduces the regime for perfect separability. %
Therefore, it is not always clear whether convolving prior to attention is beneficial.

\section{L-CAT: Learning to interpolate\pages{0.5}}

From the previous analysis, we can conclude that it is hard to know \textit{a priori} whether attention, convolution, or convolved attention, will perform the best. 
In this section, we argue that this issue can be easily overcome by learning to interpolate between the three.

\msg{\todoadri Introduce first $\lambda_1$ using the general formulation.}

\msg{\todoadri Introduce the final model with interpolations by introducing $\lambda2$.}

First, note that GCN and GAT only differ in that GCN weighs all neighbors equally (\cref{eq:gcn}) and, the more similar the attention scores are (\cref{eq:gat}), the more uniform the coefficients~$\gamma_{ij}$ are. 
Thus, we can interpolate between GCN and GAT by introducing a learnable parameter. %
Similarly, the formulation of GAT (\cref{eq:gat}) and \ours (\cref{eq:ours-score}) differ in the convolution within the score, which can be interpolated with another learnable parameter. %

Following this observation, we propose the \emph{learnable convolutional attention layer} (\ours*), which can be formulated as an attention layer with the following score:
\begin{equation}
	\Psi(\vh_i, \vh_j) = \lambda_1 \cdot \alpha(\mW\conv\vh_i, \mW\conv\vh_j) \quad\text{where}\quad \conv\vh_i = \frac{\vh_i + \lambda_2 \sum_{\ell\in N_i} \vh_\ell}{1 + \lambda_2 |N_i|}\;, 
    \label{eq:ours-score-lmbda}
\end{equation}
where $\lambda_1, \lambda_2 \in [0, 1]$. %
As mentioned before, 
this formulation lets
\ours* learn to interpolate between GCN ($\lambda_1 = 0$), GAT ($\lambda_1 = 1$ and $\lambda_2 = 0$), and \ours ($\lambda_1 = 1$ and $\lambda_2 = 1$). 

\ours* enables a number of non-trivial benefits.
Not only can it switch between existing layers, but it also learns the amount of attention necessary for each use-case.
Moreover, by comprising the three layers in a single learnable formulation, it removes the necessity of cross-validating the type of layer, as their performance is data-dependent (see \cref{subsec:limitations,sec:convolved-GAT}).
Remarkably, it allows to easily combine different layer types within the same architecture.

While we focus on GCN, \ours* can be easily used with other GNN architectures such as PNA~\citep{corso2020principal} and GCNII~\citep{chen2020simple}, as \ours* interpolates between two different adjacency matrices. For further details and results, refer to~\cref{app:extra-results-baselines}.

\msg{\todoadri Talk about the interpolation}

\vspace{-0.2cm}
\section{Experiments}\label{sec:experiments}

In this section, {we first} validate our theoretical findings on synthetic data (\cref{sec:exps-synhetic-data}).
Then, we show through various node classification tasks that \bothours is as competitive as the baseline models~(\cref{sec:experiments-graphgym}). %
Lastly, we move to more demanding scenarios from the Open Graph Benchmark~\citep{hu2020open}, \replaced{3}{assessing their performance and robustness to feature and edge noise, as well as network initialization}{demonstrating that \ours* is a more flexible and robust alternative to its baseline methods} (\cref{sec:ogb-experiments}) that reduces the need of cross-validating without giving up on performance. \added{3}{Refer to \cref{app:toy_v2,app:dataset,app:extra-results-node,app:ogb,app:extra-results-baselines} for details and additional results.} The code to reproduce the experiments can be found at \url{https://github.com/psanch21/LCAT}.

\subsection{Synthetic data\pages{1}} \label{sec:exps-synhetic-data}
\begin{figure}[!t]
\vspace{-0.6cm}
	\centering
	\begin{subfigure}{.33\textwidth} %
		\centering
		\includegraphics[width=.99\linewidth]{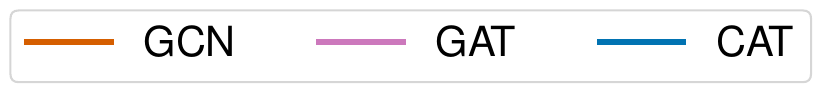}
	\end{subfigure}
	\begin{subfigure}{.24\textwidth}  %
		\centering
		\includegraphics[width=.99\linewidth]{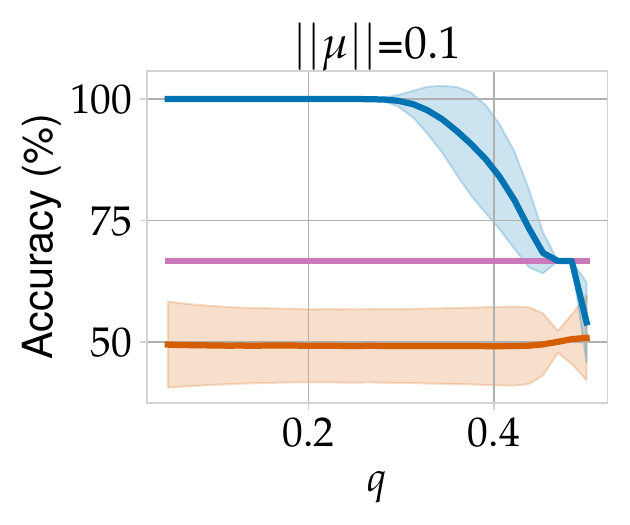}
	\end{subfigure}%
	\begin{subfigure}{.24\textwidth}  %
		\centering
		\includegraphics[width=.99\linewidth]{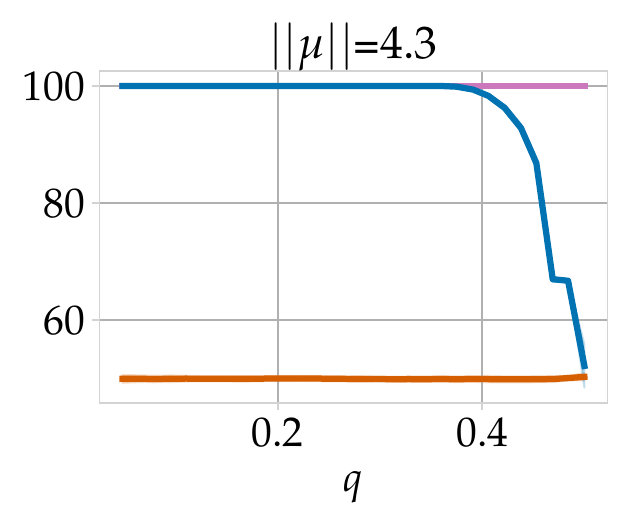}
	\end{subfigure}%
	\begin{subfigure}{.24\textwidth}  %
		\centering
		\includegraphics[width=.99\linewidth]{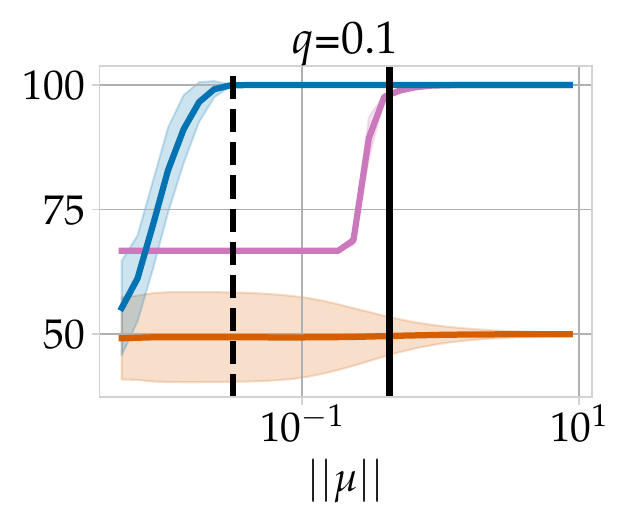}
	\end{subfigure}%
	\begin{subfigure}{.24\textwidth}  %
		\centering
		\includegraphics[width=.99\linewidth]{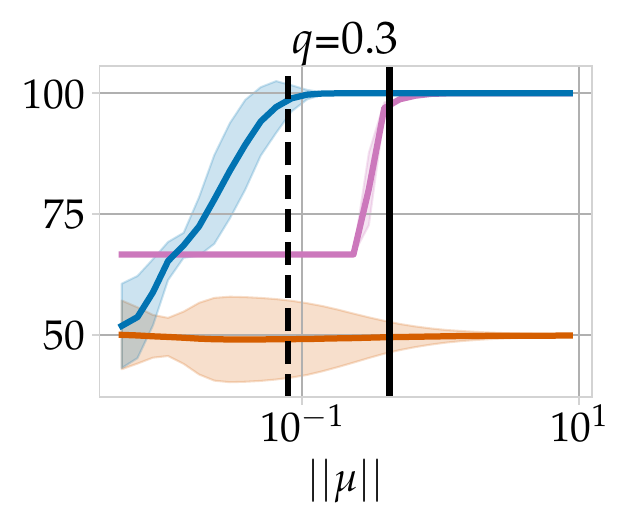}
	\end{subfigure}%
\vspace{-0.3cm}
	\caption{Synthetic data results. The left-most plots show accuracy as we vary the noise level $q$ for $\|\bmu\|=0.1$ and  $\|\bmu\|=4.3$. The right-most plots show the accuracy as we change the norm of the means $\|\bmu\|$ for $q=0.1$ and $q=0.3$. We use two vertical lines to present the classification threshold stated in \cref{prop:gat} (solid line) and \cref{prop:cat} (dashed line).}
	\label{fig:toy_v2_ansatz}
 \vspace{-0.2cm}
\end{figure}
\raggedbottom

\msg{\todopablo \todoamit \todokimon Experiments on the stochastic block model experiments, where we reproduce the notebooks from the previous papers, trying to understand the behaviour of the model.}

First, we empirically validate our theoretical results (\cref{prop:gat} and \cref{prop:cat}). 
We aim to better understand the behavior of each layer as the properties of the data change, \ie, the noise level $q$ (proportion of inter-edges) and the distance between the means of consecutive classes $\|\bmu\|$.  
We provide extra results and additional experiments in \cref{app:toy_v2}.

\paragraph{Experimental setup.} As data model, we use the proposed $\CSBM$ (see \cref{sec:convolved-GAT}) with $n=10000$, $p=0.5$, $\sigma=0.1$, and $d=n/\left(5\log^2(n)\right)$.
All results are averaged over $50$ runs, %
and parameters are set as described in \cref{app:theory}.
We conduct two experiments to assess the sensitivity to structural noise.
First, we vary the noise level $q$ between \num{0} and \num{0.5}, leaving the mean vector $\bmu$ fixed. 
We test two values of~$\|\bmu\|$: 
the first corresponds to the \textit{easy} regime ($\|\bmu\| = 10\sigma\sqrt{2\log n}$) where classes are far apart;
and the second correspond to the \textit{hard} regime ($\|\bmu\| = \sigma$) where %
clusters are close. 
In the second experiment we instead sweep $\|\bmu\|$ in the range $\left[\sigma/20, 20\sigma\sqrt{2\log n}\right]$, covering the transition from hard (small $\|\bmu\|$) to easy (large $\|\bmu\|$) settings. Here, we fix $q$ to \num{0.1} (low noise) and \num{0.3} (high noise).
In both cases, we compare the behavior of 1-layer GAT and \ours, and include GCN as the baseline.

\paragraph{Results.} 
The two left-most plots of \cref{fig:toy_v2_ansatz} show \replaced1{the }node classification performance for the hard and easy regimes, respectively, as we vary the noise level $q$. 
In the hard regime, we observe that GAT is unable to achieve separation for any value of $q$, whereas \ours achieves perfect classification when $q$ is small enough.
This exemplifies the advantage of \ours over GAT as stated in \cref{prop:cat}. When the distance between the means is large enough, we see that GAT achieves perfect results independently of $q$, as stated in \cref{prop:gat}. 
In contrast, when \ours fails to satisfy the condition in \cref{prop:cat} (as we increase $q$), it achieves inferior performance.

The right-most part of \cref{fig:toy_v2_ansatz} shows the results when we fix $q$ and sweep $\|\bmu\|$.
In these two plots, we can appreciate the transition in the accuracy of both GAT and \ours as a function of $\|\bmu\|$. We observe that GAT achieves perfect accuracy when the distance between the means satisfies the condition in \cref{prop:gat} (solid vertical line in \cref{fig:toy_v2_ansatz}). Moreover, we can see the improvement \ours obtains over GAT. Indeed, when $\|\bmu\|$ satisfies the conditions of \cref{prop:cat} (dashed vertical line in \cref{fig:toy_v2_ansatz}), the classification threshold is improved. As we increase $q$ we see that the gap between the two vertical lines decreases, which means that the improvement of \ours over GAT decreases as $q$ increments, exactly as stated in \cref{prop:cat}.

\msg{\todopablo \todoamit \todokimon Experiments similar as the previous one but with the new example, where our model is the only one capable of performing well.}

\subsection{Real data\pages{1}} \label{sec:experiments-graphgym}

\msg{Intro paragraph}
We study now the performance of the proposed models in a comprehensive set of real-world experiments, in order to gain further insights of the settings in which they excel.
Specifically, we found \ours and \ours* to outperform their baselines as the average node degree increases.
For a detailed description of the datasets and additional results, refer to \cref{app:dataset,app:extra-results-node,app:extra-results-baselines}.

\msg{Dataset and setup. Emphasize freezing all layers}

\paragraph{Models.} We consider as baselines a simple GCN layer~\citep{kipf2016semi}, the original GAT layer~\citep{velivckovic2017graph} and its recent extension, GATv2~\citep{brody2021attentive}. 
Based on the two attention models, we consider their \ours and \ours* extensions. 
To ensure fair comparisons, all layers use the same number of parameters and implementation.

\paragraph{Datasets.} We consider six node classification datasets. %
The \spname{Facebook}\-/\spname{GitHub}\-/\spname{TwitchEN} datasets involve social-network graphs~\citep{rozemberczki2021multi}, whose nodes represent verified pages/developers/streamers; and where the task is to predict the topic/expertise/explicit-language-use of the node.  
The \spname{Coauthor Physics} dataset~\citep{shchur2018pitfalls} represents a co-authorship network whose nodes represent authors, and the task is to infer their main research field. 
The \spname{Amazon} datasets represent two product-similarity graphs~\citep{shchur2018pitfalls}, where each node is a product, and the task is to infer its category.

\paragraph{Experimental setup.} 
To ensure the best results, we cross-validate all optimization-related hyperparameters for each model using GraphGym~\citep{you2020design}.
All models use four GNN layers with hidden size of~\num{32}, and thus have an equal number of parameters.
For evaluation, we take the best-validation configuration during training, and report test-set performance.
For further details, refer to \cref{app:extra-results-node}.

\begin{table*}[!t]
    \vspace{-0.5cm}
	\centering
	\caption{Test accuracy (\%) of the considered models for different datasets (sorted by their average node degree), and averaged over ten runs. Bold numbers are statistically different to their baseline model ($\alpha = 0.05$). Best average performance is underlined.}
	\label{tab:performance_node_small}
	\vspace{-0.1cm}
	\resizebox{\textwidth}{!}{
		\begin{tabular}{lcccccc}
			\toprule
			Dataset & \thead{\spname{Amazon}\\\spname{Computers}} & \thead{\spname{Amazon}\\\spname{Photo}} & \spname{GitHub} & \thead{\spname{Facebook}\\\spname{PagePage}} & \thead{\spname{Coauthor}\\\spname{Physics}} & \spname{TwitchEN} \\
			Avg. Deg. & 35.76 & 31.13 & 15.33 & 15.22 & 14.38 & 10.91 \\ \midrule
			GCN & \best{90.59 $\pm$ 0.36} & \best{95.13 $\pm$ 0.57} & 84.13 $\pm$ 0.44 & 94.76 $\pm$ 0.19 & 96.36 $\pm$ 0.10 & 57.83 $\pm$ 1.13 \\
            \midrule
            GAT & 89.59 $\pm$ 0.61 & 94.02 $\pm$ 0.66 & 83.31 $\pm$ 0.18 & 94.16 $\pm$ 0.48 & 96.36 $\pm$ 0.10 & 57.59 $\pm$ 1.20 \\
            \ours & \better{90.58 $\pm$ 0.40} & \better{94.77 $\pm$ 0.47} & \better{84.11 $\pm$ 0.66} & \better{94.71 $\pm$ 0.30} & \best{96.40 $\pm$ 0.10} & \best{58.09 $\pm$ 1.61} \\
            \ours* & \better{90.34 $\pm$ 0.47} & \better{94.93 $\pm$ 0.37} & \better{84.05 $\pm$ 0.70} & \best{\better{94.81 $\pm$ 0.25}} & 96.35 $\pm$ 0.10 & 57.88 $\pm$ 2.07 \\
            \midrule
            GATv2 & 89.49 $\pm$ 0.53 & 93.47 $\pm$ 0.62 & 82.92 $\pm$ 0.45 & 93.44 $\pm$ 0.30 & 96.24 $\pm$ 0.19 & 57.70 $\pm$ 1.17 \\
            \ours[v2] & \better{90.44 $\pm$ 0.46} & \better{94.81 $\pm$ 0.55} & \better{84.10 $\pm$ 0.88} & \better{94.27 $\pm$ 0.31} & 96.34 $\pm$ 0.12 & 57.99 $\pm$ 2.02 \\
            \ours*[v2] & \better{90.33 $\pm$ 0.44} & \better{94.79 $\pm$ 0.61} & \best{\better{84.31 $\pm$ 0.59}} & \better{94.44 $\pm$ 0.39} & 96.29 $\pm$ 0.13 & 57.89 $\pm$ 1.53 \\
            \bottomrule
		\end{tabular}
	}
\end{table*}
\raggedbottom

\paragraph{Results}
 are presented in \cref{tab:performance_node_small}. %
In contrast with \cref{sec:exps-synhetic-data}, we here find GCN to be a strong contender, reinforcing its viability in real-world data despite its simplicity.
We observe both \ours and \ours* not only holding up the performance with respect to their baselines models for all datasets, but in most cases also improving the test accuracy in a statistically significant manner.
These results validate the effectiveness of \ours as a GNN layer, and show the viability of \emph{\ours* as a drop-in replacement}, achieving good results on all datasets.

\msg{Analysis: correlation with average degree}

\begin{wrapfigure}[9]{r}{.3\textwidth}
	\centering
	\vspace{-1.5em}
	\includegraphics[width=\linewidth, keepaspectratio]{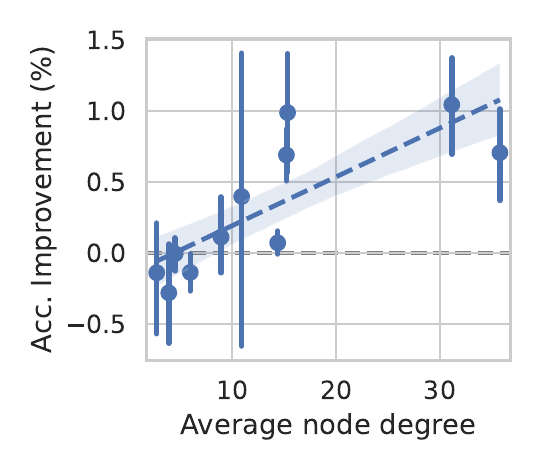}
\end{wrapfigure}
As explained in \cref{sec:convolved-GAT}, \ours differs from a usual GAT in that the score is computed with respect to the convolved features.
Intuitively, this means that \ours should excel in those settings where nodes are better connected, allowing \ours to extract more information from their neighborhoods. 
Indeed, in the inset figure we can observe the improvement in accuracy of \ours with respect to its baseline model, as a function of the average node degree of the dataset, and the linear regression fit of these results (dashed line).
This plot includes all datasets (from the manuscript and \cref{app:extra-results-node}), and shows a positive trend between node connectivity and improved performance achieved by \ours.

\subsection{Open Graph Benchmark\pages{1}} \label{sec:ogb-experiments}

\msg{Intro, why}

In this section, we assess the robustness of the proposed models, %
in order to fully understand their benefits.
For further details and additional results, refer to \cref{app:ogb}.

\msg{Dataset and experimental setup}

\paragraph{Datasets.} We consider four datasets from the OGB suite~\citep{hu2020open}: \spname{proteins}, \spname{products}, \spname{arxiv}, and \spname{mag}. Note that these datasets are significantly larger than those from \cref{sec:experiments-graphgym} and correspond to more difficult tasks, \eg,  \spname{arxiv} is a 40-class classification problem  (see \cref{tab:datasets-statistics} in \cref{app:dataset} for details). This makes them more suitable for the proposed analysis. 

\paragraph{Experimental setup.} We adopt the same experimental setup as~\citet{brody2021attentive} for the \spname{proteins}, \spname{products}, and \spname{mag} datasets. For the \spname{arxiv} dataset, we use instead the example code from OGB~\citep{hu2020open}, as it yields better performance than that of \citet{brody2021attentive}.
Just as in \cref{sec:experiments-graphgym}, we compare with GCN~\citep{kipf2016semi}, GAT~\citep{velivckovic2017graph}, GATv2~\citep{brody2021attentive}, and their \ours and \ours* counterparts.
We cross-validate the number of heads (\num{1} and \num{8}), %
and select the best-validation models during training. 
All models are identical except for their $\lambda$ values.

\msg{Table 2. Overall node accuracy results.}

\begin{table*}[!t]
	\centering
	\caption{Test performance of the considered models on four OGB datasets, averaged over five runs. Bold numbers are statistically different to their baseline model ($\alpha = 0.05$). Best average performance is underlined. Left table: accuracy (\%); right table: AUC-ROC (\%).}
	\label{tab:performance_ogb}
	
	\begin{subtable}{.7\linewidth}
		\centering
		{
			\begin{tabular}{lccc}
				\toprule
				Dataset & \spname{arxiv} & \spname{products} & \spname{mag} \\
				\midrule
				GCN & 71.58 $\pm$ 0.20 & 74.12 $\pm$ 1.20 & \best{32.77 $\pm$ 0.36} \\
				\midrule
				GAT & 71.58 $\pm$ 0.16 & \best{78.53 $\pm$ 0.91} & 32.15 $\pm$ 0.31 \\
				\ours & \best{\better{72.14 $\pm$ 0.21}} & \worse{77.38 $\pm$ 0.36} & 31.98 $\pm$ 0.46 \\
				\ours* & \better{71.99 $\pm$ 0.08} & 77.19 $\pm$ 1.11 & 32.47 $\pm$ 0.38 \\
				\midrule
				GATv2 & 71.73 $\pm$ 0.24 & 76.40 $\pm$ 0.71 & 32.76 $\pm$ 0.18 \\
				\ours[v2] & \better{72.03 $\pm$ 0.09} & 74.81 $\pm$ 1.12 & \worse{32.43 $\pm$ 0.22}  \\
				\ours*[v2] & 71.97 $\pm$ 0.22 & \worse{76.37 $\pm$ 0.92} & 32.68 $\pm$ 0.50 \\
				\bottomrule
			\end{tabular}
		}
	\end{subtable}%
	\begin{subtable}{.2\linewidth}
		\centering
		{
			\begin{tabular}{c}
				\toprule
				\spname{proteins} \\
				\midrule
				\best{80.10 $\pm$ 0.55} \\
				\midrule
				79.08 $\pm$ 1.47 \\
				73.26 $\pm$ 1.65 \\
				79.63 $\pm$ 0.71 \\
				\midrule
				78.65 $\pm$ 1.44 \\
				74.33 $\pm$ 0.94 \\
				\better{79.07 $\pm$ 0.98} \\
				\bottomrule
			\end{tabular}
		}
	\end{subtable}
\end{table*}

\paragraph{Results} are summarized in \cref{tab:performance_ogb}. %
Here we do not observe a clear preferred baseline: %
GCN performs really well in \spname{proteins} and \spname{mag}; GAT excels in \spname{products}; and GATv2 does well in \spname{arxiv} and \spname{mag}.
While \ours obtains the best results on \spname{arxiv}, its performance on \spname{proteins} and \spname{products} is significantly worse than the baseline model. Presumably, an excessive amount of inter-edges could explain why convolving the features prior to computing the score is harmful, as seen in \cref{sec:exps-synhetic-data}. 
As we explore in \cref{sec:exp-robustness-init}, however, \ours improves over its baseline for most \spname{proteins} scenarios, specially with a single head.
In stark contrast, \ours* performs remarkably well, improving the baseline models in all datasets but \spname{products}---even on those in which \ours fails---demonstrating the adaptability of \ours* to different scenarios.

\msg{\todoadri Plot the evolution of the lambdas during training.}

To better understand the training dynamics of the models, we plot in \cref{fig:arxiv-test} the test accuracy of GCN and the GATv2 models during training on the \spname{arxiv} dataset. 
Interestingly, despite all models obtaining similar final results, \emph{\ours[v2] and \ours*[v2] drastically improved their convergence speed and stability with respect to GATv2}, matching that of GCN. 
To understand the behavior of \ours*[v2], \cref{fig:arxiv-lambdas} shows the evolution of the $\lambda$ parameters.
We observe that, to achieve these results, \ours*[v2] converged to a GNN network that combines three types of layers: the first layer is a \ours[v2] layer, taking advantage of the neighboring information; the second layer is a quasi-GCN layer, in which scores are almost uniform and some neighboring information is still used in the score computation; and the third layer is a pure GCN layer, in which all scores are uniformly distributed.
It is important to remark that these dynamics are fairly consistent, as \ours*[v2] reached the same $\lambda$ values over all five runs.

\begin{figure}
	\centering
	\begin{subfigure}{.4\linewidth}
		\centering
		\includegraphics[width=\textwidth, keepaspectratio]{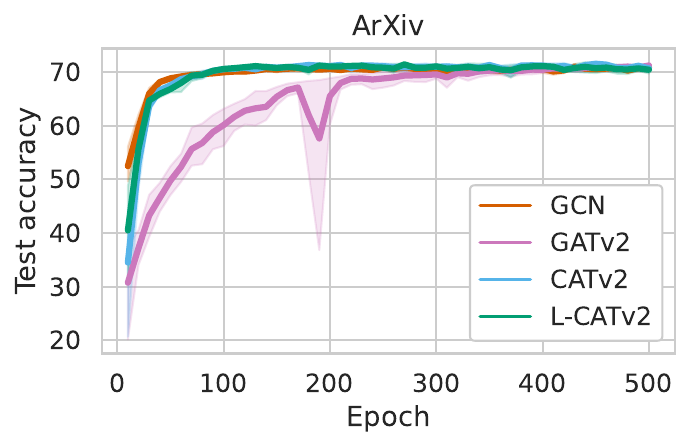}
		\caption{Test accuracy.} \label{fig:arxiv-test}
	\end{subfigure} \hspace{.05\linewidth}
	\begin{subfigure}{.4\linewidth}
		\centering
		\includegraphics[width=\textwidth, keepaspectratio]{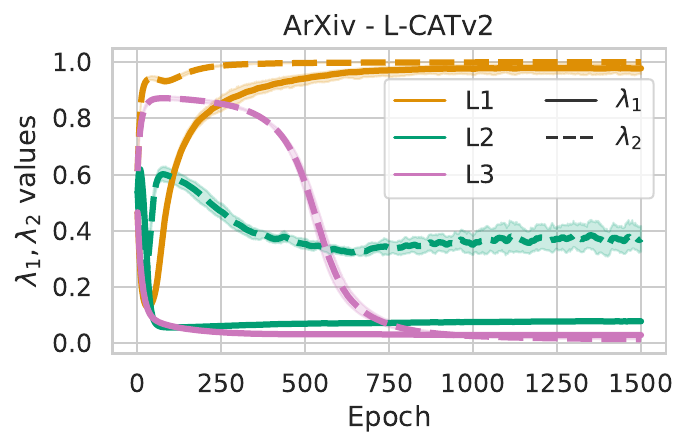}
		\caption{Evolution of $\lambda_1,\lambda_2$.} \label{fig:arxiv-lambdas}
	\end{subfigure}
	\caption{Behavior of GCN and GATv2-based models during training on the \spname{arxiv} dataset. \captiona~\ours and \ours* converge quicker and more stably than their baseline model. 
	\captionb~\ours* consistently converges to the same architecture: a \ours$\rightarrow$quasi-GCN$\rightarrow$GCN network.}
	\label{fig:arxiv-training}
\end{figure}
\raggedbottom

\subsubsection{Robustness to noise} \label{subsec:exp-noise}

\msg{Intro}

One intrinsic aspect of real world data is the existence of noise. In this section, we explore the robustness of the proposed models to different levels of noise,
\ie, we attempt to simulate scenarios where there exist measurement inaccuracies in the input features and edges. %

\msg{Setup}

\begin{wrapfigure}[20]{r}{.33\linewidth}
	\centering
	\vspace{-\baselineskip}
	\includegraphics[width=\linewidth, keepaspectratio]{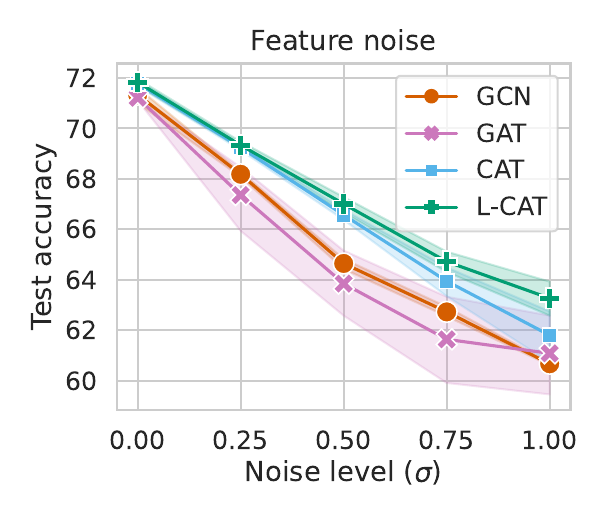} %
	\includegraphics[width=\linewidth, keepaspectratio]{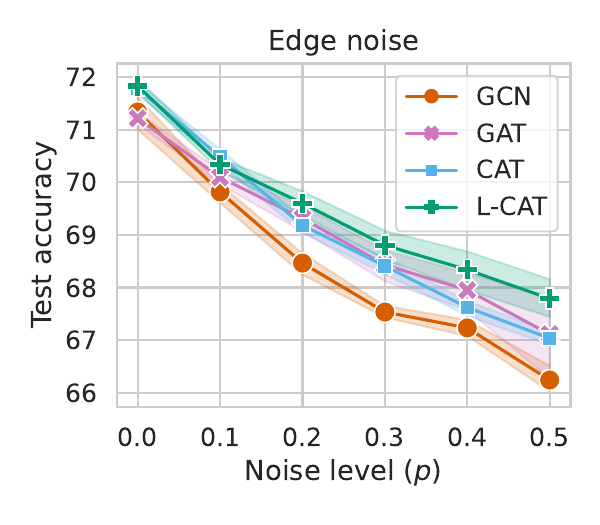}
\end{wrapfigure}
\paragraph{Experimental setup.} 
We consider the \spname{arxiv} dataset, and the same experimental setup as in \cref{sec:ogb-experiments}. 
We conduct two experiments. 
First, we introduce to the node features additive noise of the form $\Normal(\bm{0}, \bm{1}\sigma)$, and consider different levels of noise, $\sigma\in\{0, 0.25, 0.5, 0.75, 1\}$.
Then, as in \citep{brody2021attentive}, we simulate edge noise by adding fake edges with probability $Bern(p)$ for $p\in\{0, 0.1, 0.2, 0.3, 0.4, 0.5\}$.

\msg{\todoadri Show results on the robustness to homoscedastic noise on arxiv.}

\paragraph{Results} 
are shown in the inset figures for feature (top) and edge noise (bottom),
summarizing the performance of all models over five runs and two numbers of heads (\num{1} and \num{8}). 
Baseline attention models are quite sensitive to feature noise, but are more robust to edge noise, as they can drop inter-class edges (see \cref{subsec:limitations}).
GCNs, as expected, are instead more robust to feature noise, but suffer more in the presence of edge noise.
In concordance with the synthetic experiments (see \cref{sec:convolved-GAT,sec:exps-synhetic-data}), \ours is able to leverage convolutions as a variance-reduction technique, reducing the variance and improving its robustness to feature noise.
Remarkably, \ours* proves to be the most robust for both types of noise: by adapting the amount of attention used in each layer, it outperforms existing methods and reduces the variance.

\subsubsection{Robustness to network initialization} \label{sec:exp-robustness-init}

\msg{Intro}

Another important aspect for real-world applications is robustness to network initialization, \ie, the ability to obtain satisfying performance independently of the initial parameters. 
Otherwise, a practitioner can waste lots of resources trying initilizations or, even worse, give up on a model just because they did not try the initial parameters that yield great results.

\msg{Experimental setup}

\paragraph{Experimental setup.} We follow once again the same setup for \spname{proteins} as in \cref{sec:ogb-experiments}. We consider two different network initializations. 
The first one, \spname{uniform}, uses uniform Glorot initilization~\citep{pmlr-v9-glorot10a} with a gain of \num{1}, which is the standard initialization used throughout this work.
The second one, \spname{normal}, uses instead normal Glorot initialization~\citep{pmlr-v9-glorot10a} with a gain of $\sqrt{2}$. This is the initialization employed on the original GATv2 paper~\citep{brody2021attentive} exclusively for the \spname{proteins} dataset.

\begin{table*}[!t]
\centering
\caption{Test AUC-ROC (\%) on the \spname{proteins} dataset for attention models with two different network initializations (see \cref{sec:exp-robustness-init}), using \num{1} head (top) and \num{8} heads (bottom).}
\label{tab:results-proteins-init}

\resizebox{\textwidth}{!}
{
\begin{tabular}{llcccccc}
\toprule
& & GAT &  \ours & \ours* & GATv2 & \ours[v2] & \ours*[v2] \\
\midrule
\multirow{2}{*}{\rotatebox[origin=c]{90}{1h}} & \spname{uniform} & 59.73 $\pm$ 3.61 & \better{64.32 $\pm$ 2.33} & \better{77.77 $\pm$ 1.28} & 59.85 $\pm$ 2.73 & \better{64.32 $\pm$ 2.33} & \best{\better{79.08 $\pm$ 0.95}} \\
& \spname{normal} & 66.38 $\pm$ 6.94 & 73.26 $\pm$ 1.65 & \better{78.06 $\pm$ 1.25} & 69.13 $\pm$ 8.48 & 74.33 $\pm$ 0.94 & \best{\better{79.07 $\pm$ 0.98}} \\ \midrule
\multirow{2}{*}{\rotatebox[origin=c]{90}{8h}} & \spname{uniform} & 72.23 $\pm$ 2.86 & 73.60 $\pm$ 1.14 & \best{\better{78.85 $\pm$ 1.57}} & 75.21 $\pm$ 1.61 & 74.16 $\pm$ 1.30 & \better{78.77 $\pm$ 0.97} \\
& \spname{normal}  & 79.08 $\pm$ 1.47 & \worse{74.67 $\pm$ 1.15} & \best{79.63 $\pm$ 0.71} & 78.65 $\pm$ 1.44  & \worse{73.40 $\pm$ 0.56} & 79.30 $\pm$ 0.49 \\ \midrule \midrule
\multicolumn{2}{r}{average} & 69.36 $\pm$ 8.52 & 73.93 $\pm$ 1.35 & 78.58 $\pm$ 1.48 & 70.71 $\pm$ 8.70 & 71.55 $\pm$ 4.54 & \best{79.05 $\pm$ 0.91} \\
\bottomrule
\end{tabular}
}
\end{table*}
\raggedbottom

\msg{\todoadri Show performance results of the models for two different parametrizations.}

\paragraph{Results}---segregated by number of heads---are shown in \cref{tab:results-proteins-init}, while the results for GCN appear in the inset table. %
These results show that the baseline models perform poorly on the \spname{uniform} initialization. %
However, this is somewhat alleviated when using \num{8} heads in the attention models. %
Moreover, all baselines significantly improve with \spname{normal} initialization, being GCN the best model, and attention models obtaining \SI{79}{\percent} accuracy on average with \num{8}~heads.
\begin{wraptable}[6]{r}{.26\linewidth}
	\vspace{-\baselineskip}
	\centering
	\resizebox{\linewidth}{!}
	{
		\begin{tabular}{lc}
			\toprule
			& GCN \\
			\midrule
			\spname{uniform} & 61.08 $\pm$ 2.56 \\
			\spname{normal} & 80.10 $\pm$ 0.55 \\ \midrule \midrule
			average & 70.59 $\pm$ 10.21 \\
			\bottomrule
		\end{tabular}
	}
\end{wraptable}
Compared to the baselines, \ours does a good job and improves the performance in all cases except for \spname{normal} with \num{8} heads.
Remarkably, \ours* consistently obtains high accuracy in all scenarios and runs.
To emphasize consistency, bottom row shows the average accuracy across runs, 
showing that \ours* is clearly more robust to parameter initialization than competing models.

\begin{figure}
    \centering
    \includegraphics[width=\linewidth, keepaspectratio]{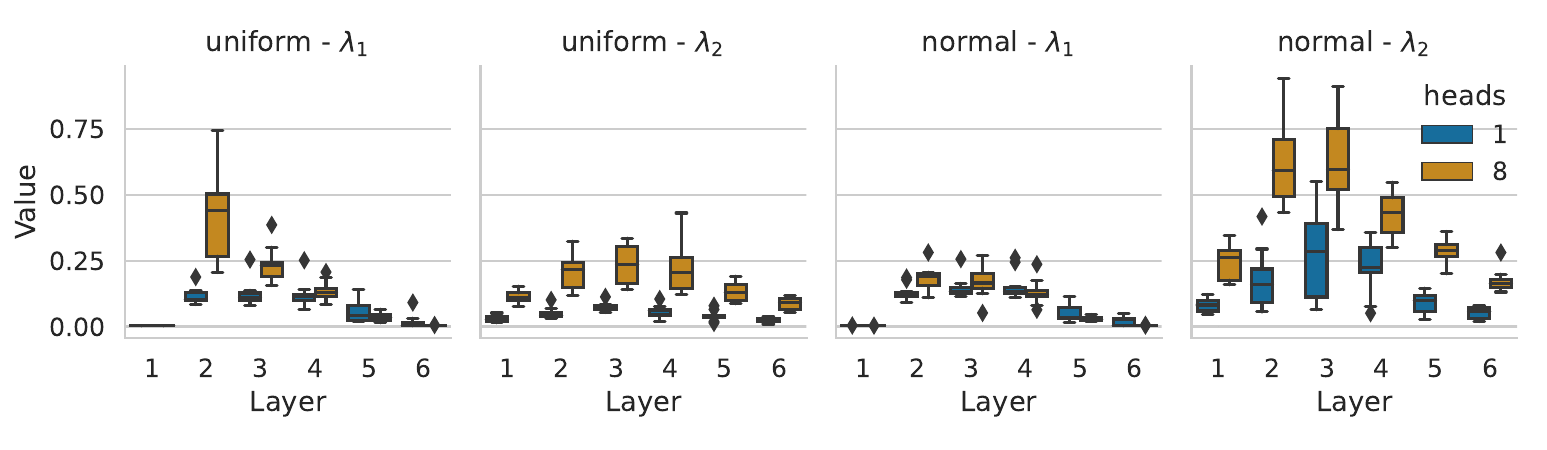}
    \caption{Distribution of $\lambda_1,\lambda_2$ on \spname{proteins} dataset for \ours* across initializations.}
    \label{fig:lambdas-proteins}
\end{figure}
\raggedbottom

To understand this performance, we inspect the distribution of $\lambda_1,\lambda_2$ for \ours* in \cref{fig:lambdas-proteins}. 
Here, we can spot a few interesting patterns. 
Consistently, the first and last layers are always GCNs, while the inner layers progressively admit less attention.
Second, the number of heads affects the amount of attention allowed in the network; the more heads, the more expressive the layer tends to be, and  more attention  is permitted. 
Third, \ours* adapts to the initialization used: in \spname{uniform}, it allows more attention in the second layer; in \spname{normal}, it allows more attention in the score inputs.
These results consolidate the flexibility of \ours*.

\section{Conclusions and future work\pages{0.25}} \label{sec:conclusions}

\msg{Talk about conclusions, what we've shown}

In this work, we studied how to combine the strengths of convolution and attention layers in GNNs.
We proposed \ours, which computes attention with respect to the convolved features, and analyzed its benefits and limitations on a new synthetic dataset. 
This analysis revealed different regimes where one model is preferred over the others, reinforcing the idea that selecting between GCNs, GATs, and now CATs, is a difficult task. %
For this reason, we proposed \ours*, a model which interpolates between the three via two learnable parameters.
Extensive experimental results demonstrated the effectiveness of \ours*, yielding great results while being more robust than other methods. 
As a result, \ours* proved to be a viable drop-in replacement that removes the need to cross-validate the layer type.

\msg{Societal impact}

\msg{Talk about possible future work.}

We strongly believe learnable interpolation can get us a long way, and we hope \ours* to motivate new and exciting work.
Specially, we are eager to see \ours* in real applications, and thus finding out what combining different GNN layers across a model (without the hurdle of cross-validating all layer combinations) can lead to in the real-world.

\section*{Ethic statement}

Given the nature of this work, we do not see any direct ethical concerns.
On the contrary, \ours* eases the application of GNNs to the practitioner, and removes the need of cross-validating the layer type, which can potentially benefit other areas and applications, as GNNs have already proven.

\section*{Reproducibility statement}

For the theoretical results, we describe the data model used in \cref{sec:convolved-GAT}, and provide all the detailed proofs in \cref{app:theory}.
For the experimental results, we include in the supplementary material the necessary code and scripts required to reproduce our experiments, and all required datasets are freely available. Complete details about the experimental setup can be found in \cref{app:toy_v2,app:extra-results-node,app:ogb}, and we report in our results the mean and standard deviation computed using five trials or more. In addition, we highlight in bold the results that are statistically significant. Moreover, we include details of computational resources used for the all sets of experiments in \cref{app:toy_v2}, \cref{app:extra-results-node} and \cref{app:ogb}.

\section*{Acknowledgements}

We would like to thank Batuhan Koyuncu, Jonas Klesen, Miriam Rateike, and Maryam Meghdadi for helpful
feedback and discussions.
Pablo S\'anchez Mart\'in thanks the German Research Foundation through the Cluster of Excellence “Machine Learning – New Perspectives for Science”, EXC 2064/1, project number 390727645 for generous funding support. 
The authors thank the International Max Planck Research School for Intelligent Systems (IMPRS-IS) for supporting Pablo S\'anchez Mart\'in.

	\bibliographystyle{iclr2023_conference}
	\bibliography{references}

\begin{thebibliography}{50}
\providecommand{\natexlab}[1]{#1}
\providecommand{\url}[1]{\texttt{#1}}
\expandafter\ifx\csname urlstyle\endcsname\relax
  \providecommand{\doi}[1]{doi: #1}\else
  \providecommand{\doi}{doi: \begingroup \urlstyle{rm}\Url}\fi

\bibitem[Anderson(2003)]{anderson1962introduction}
T.W. Anderson.
\newblock \emph{An introduction to multivariate statistical analysis}.
\newblock John Wiley \& Sons, 2003.

\bibitem[Ba et~al.(2016)Ba, Kiros, and Hinton]{ba2016layernorm}
Jimmy~Lei Ba, Jamie~Ryan Kiros, and Geoffrey~E Hinton.
\newblock Layer normalization.
\newblock \emph{arXiv preprint arXiv:1607.06450}, 2016.
\newblock URL \url{https://arxiv.org/abs/1607.06450}.

\bibitem[Baranwal et~al.(2021)Baranwal, Fountoulakis, and Jagannath]{kimongcn}
Aseem Baranwal, Kimon Fountoulakis, and Aukosh Jagannath.
\newblock Graph convolution for semi-supervised classification: Improved linear
  separability and out-of-distribution generalization.
\newblock In \emph{International Conference on Machine Learning (ICML)}. PMLR,
  2021.

\bibitem[Baranwal et~al.(2022)Baranwal, Fountoulakis, and
  Jagannath]{baranwal2022effects}
Aseem Baranwal, Kimon Fountoulakis, and Aukosh Jagannath.
\newblock Effects of graph convolutions in deep networks.
\newblock \emph{arXiv preprint arXiv:2204.09297}, 2022.
\newblock URL \url{https://arxiv.org/abs/2204.09297}.

\bibitem[Battaglia et~al.(2018)Battaglia, Hamrick, Bapst, Sanchez-Gonzalez,
  Zambaldi, Malinowski, Tacchetti, Raposo, Santoro, Faulkner,
  et~al.]{battaglia2018relational}
Peter~W Battaglia, Jessica~B Hamrick, Victor Bapst, Alvaro Sanchez-Gonzalez,
  Vinicius Zambaldi, Mateusz Malinowski, Andrea Tacchetti, David Raposo, Adam
  Santoro, Ryan Faulkner, et~al.
\newblock Relational inductive biases, deep learning, and graph networks.
\newblock \emph{arXiv preprint arXiv:1806.01261}, 2018.
\newblock URL \url{https://arxiv.org/abs/1806.01261}.

\bibitem[Bhatia et~al.(2016)Bhatia, Dahiya, Jain, Kar, Mittal, Prabhu, and
  Varma]{Bhatia16}
K.~Bhatia, K.~Dahiya, H.~Jain, P.~Kar, A.~Mittal, Y.~Prabhu, and M.~Varma.
\newblock The extreme classification repository: Multi-label datasets and code,
  2016.
\newblock URL \url{http://manikvarma.org/downloads/XC/XMLRepository.html}.

\bibitem[Bojchevski \& G{\"{u}}nnemann(2018)Bojchevski and
  G{\"{u}}nnemann]{bojchevski2017deep}
Aleksandar Bojchevski and Stephan G{\"{u}}nnemann.
\newblock Deep gaussian embedding of graphs: Unsupervised inductive learning
  via ranking.
\newblock In \emph{6th International Conference on Learning Representations,
  {ICLR} 2018, Vancouver, BC, Canada, April 30 - May 3, 2018, Conference Track
  Proceedings}. OpenReview.net, 2018.
\newblock URL \url{https://openreview.net/forum?id=r1ZdKJ-0W}.

\bibitem[Brody et~al.(2022)Brody, Alon, and Yahav]{brody2021attentive}
Shaked Brody, Uri Alon, and Eran Yahav.
\newblock How attentive are graph attention networks?
\newblock In \emph{International Conference on Learning Representations
  (ICLR)}, 2022.

\bibitem[Busbridge et~al.(2019)Busbridge, Sherburn, Cavallo, and
  Hammerla]{busbridge2019relational}
Dan Busbridge, Dane Sherburn, Pietro Cavallo, and Nils~Y Hammerla.
\newblock Relational graph attention networks.
\newblock \emph{arXiv preprint arXiv:1904.05811}, 2019.
\newblock URL \url{https://arxiv.org/abs/1904.05811}.

\bibitem[Chen et~al.(2020)Chen, Wei, Huang, Ding, and Li]{chen2020simple}
Ming Chen, Zhewei Wei, Zengfeng Huang, Bolin Ding, and Yaliang Li.
\newblock Simple and deep graph convolutional networks.
\newblock In \emph{Proceedings of the 37th International Conference on Machine
  Learning, {ICML} 2020, 13-18 July 2020, Virtual Event}, volume 119 of
  \emph{Proceedings of Machine Learning Research}, pp.\  1725--1735. {PMLR},
  2020.
\newblock URL \url{http://proceedings.mlr.press/v119/chen20v.html}.

\bibitem[Corso et~al.(2020)Corso, Cavalleri, Beaini, Li{\`{o}}, and
  Velickovic]{corso2020principal}
Gabriele Corso, Luca Cavalleri, Dominique Beaini, Pietro Li{\`{o}}, and Petar
  Velickovic.
\newblock Principal neighbourhood aggregation for graph nets.
\newblock In Hugo Larochelle, Marc'Aurelio Ranzato, Raia Hadsell,
  Maria{-}Florina Balcan, and Hsuan{-}Tien Lin (eds.), \emph{Advances in Neural
  Information Processing Systems 33: Annual Conference on Neural Information
  Processing Systems 2020, NeurIPS 2020, December 6-12, 2020, virtual}, 2020.
\newblock URL
  \url{https://proceedings.neurips.cc/paper/2020/hash/99cad265a1768cc2dd013f0e740300ae-Abstract.html}.

\bibitem[Deshpande et~al.(2018)Deshpande, Sen, Montanari, and
  Mossel]{deshpande2018contextual}
Yash Deshpande, Subhabrata Sen, Andrea Montanari, and Elchanan Mossel.
\newblock Contextual stochastic block models.
\newblock In Samy Bengio, Hanna~M. Wallach, Hugo Larochelle, Kristen Grauman,
  Nicol{\`{o}} Cesa{-}Bianchi, and Roman Garnett (eds.), \emph{Advances in
  Neural Information Processing Systems 31: Annual Conference on Neural
  Information Processing Systems 2018, NeurIPS 2018, December 3-8, 2018,
  Montr{\'{e}}al, Canada}, pp.\  8590--8602, 2018.
\newblock URL
  \url{https://proceedings.neurips.cc/paper/2018/hash/08fc80de8121419136e443a70489c123-Abstract.html}.

\bibitem[Dwivedi \& Bresson(2020)Dwivedi and
  Bresson]{dwivedi2020generalization}
Vijay~Prakash Dwivedi and Xavier Bresson.
\newblock A generalization of transformer networks to graphs.
\newblock \emph{arXiv preprint arXiv:2012.09699}, 2020.
\newblock URL \url{https://arxiv.org/abs/2012.09699}.

\bibitem[Fey \& Lenssen(2019)Fey and Lenssen]{fey-pytorch-geometric}
Matthias Fey and Jan~E. Lenssen.
\newblock Fast graph representation learning with {PyTorch Geometric}.
\newblock In \emph{ICLR Workshop on Representation Learning on Graphs and
  Manifolds}, 2019.

\bibitem[Fountoulakis et~al.(2022)Fountoulakis, Levi, Yang, Baranwal, and
  Jagannath]{kimongat}
Kimon Fountoulakis, Amit Levi, Shenghao Yang, Aseem Baranwal, and Aukosh
  Jagannath.
\newblock Graph attention retrospective.
\newblock \emph{arXiv preprint arXiv:2202.13060}, 2022.
\newblock URL \url{https://arxiv.org/abs/2202.13060}.

\bibitem[Gilmer et~al.(2017)Gilmer, Schoenholz, Riley, Vinyals, and
  Dahl]{gilmer2017neural}
Justin Gilmer, Samuel~S. Schoenholz, Patrick~F. Riley, Oriol Vinyals, and
  George~E. Dahl.
\newblock Neural message passing for quantum chemistry.
\newblock In Doina Precup and Yee~Whye Teh (eds.), \emph{Proceedings of the
  34th International Conference on Machine Learning, {ICML} 2017, Sydney, NSW,
  Australia, 6-11 August 2017}, volume~70 of \emph{Proceedings of Machine
  Learning Research}, pp.\  1263--1272. {PMLR}, 2017.
\newblock URL \url{http://proceedings.mlr.press/v70/gilmer17a.html}.

\bibitem[Glorot \& Bengio(2010)Glorot and Bengio]{pmlr-v9-glorot10a}
Xavier Glorot and Yoshua Bengio.
\newblock Understanding the difficulty of training deep feedforward neural
  networks.
\newblock In \emph{International Conference on Artificial Intelligence and
  Statistics (AISTATS)}, 2010.

\bibitem[Hamilton et~al.(2017{\natexlab{a}})Hamilton, Ying, and
  Leskovec]{hamilton2017representation}
William~L. Hamilton, Rex Ying, and Jure Leskovec.
\newblock Representation learning on graphs: Methods and applications.
\newblock \emph{{IEEE} Data Eng. Bull.}, 40, 2017{\natexlab{a}}.

\bibitem[Hamilton et~al.(2017{\natexlab{b}})Hamilton, Ying, and
  Leskovec]{hamilton2017inductive}
William~L. Hamilton, Zhitao Ying, and Jure Leskovec.
\newblock Inductive representation learning on large graphs.
\newblock In Isabelle Guyon, Ulrike von Luxburg, Samy Bengio, Hanna~M. Wallach,
  Rob Fergus, S.~V.~N. Vishwanathan, and Roman Garnett (eds.), \emph{Advances
  in Neural Information Processing Systems 30: Annual Conference on Neural
  Information Processing Systems 2017, December 4-9, 2017, Long Beach, CA,
  {USA}}, pp.\  1024--1034, 2017{\natexlab{b}}.
\newblock URL
  \url{https://proceedings.neurips.cc/paper/2017/hash/5dd9db5e033da9c6fb5ba83c7a7ebea9-Abstract.html}.

\bibitem[He et~al.(2015)He, Zhang, Ren, and Sun]{he2015delving}
Kaiming He, Xiangyu Zhang, Shaoqing Ren, and Jian Sun.
\newblock Delving deep into rectifiers: Surpassing human-level performance on
  imagenet classification.
\newblock In \emph{2015 {IEEE} International Conference on Computer Vision,
  {ICCV} 2015, Santiago, Chile, December 7-13, 2015}, pp.\  1026--1034. {IEEE}
  Computer Society, 2015.
\newblock \doi{10.1109/ICCV.2015.123}.
\newblock URL \url{https://doi.org/10.1109/ICCV.2015.123}.

\bibitem[Hu et~al.(2020{\natexlab{a}})Hu, Fey, Zitnik, Dong, Ren, Liu, Catasta,
  and Leskovec]{hu2020open}
Weihua Hu, Matthias Fey, Marinka Zitnik, Yuxiao Dong, Hongyu Ren, Bowen Liu,
  Michele Catasta, and Jure Leskovec.
\newblock Open graph benchmark: Datasets for machine learning on graphs.
\newblock In Hugo Larochelle, Marc'Aurelio Ranzato, Raia Hadsell,
  Maria{-}Florina Balcan, and Hsuan{-}Tien Lin (eds.), \emph{Advances in Neural
  Information Processing Systems 33: Annual Conference on Neural Information
  Processing Systems 2020, NeurIPS 2020, December 6-12, 2020, virtual},
  2020{\natexlab{a}}.
\newblock URL
  \url{https://proceedings.neurips.cc/paper/2020/hash/fb60d411a5c5b72b2e7d3527cfc84fd0-Abstract.html}.

\bibitem[Hu et~al.(2020{\natexlab{b}})Hu, Dong, Wang, and
  Sun]{hu2020heterogeneous}
Ziniu Hu, Yuxiao Dong, Kuansan Wang, and Yizhou Sun.
\newblock Heterogeneous graph transformer.
\newblock In Yennun Huang, Irwin King, Tie{-}Yan Liu, and Maarten van Steen
  (eds.), \emph{{WWW} '20: The Web Conference 2020, Taipei, Taiwan, April
  20-24, 2020}, pp.\  2704--2710. {ACM} / {IW3C2}, 2020{\natexlab{b}}.
\newblock \doi{10.1145/3366423.3380027}.
\newblock URL \url{https://doi.org/10.1145/3366423.3380027}.

\bibitem[Ioffe \& Szegedy(2015)Ioffe and Szegedy]{ioffe2015batch}
Sergey Ioffe and Christian Szegedy.
\newblock Batch normalization: Accelerating deep network training by reducing
  internal covariate shift.
\newblock In Francis~R. Bach and David~M. Blei (eds.), \emph{Proceedings of the
  32nd International Conference on Machine Learning, {ICML} 2015, Lille,
  France, 6-11 July 2015}, volume~37 of \emph{{JMLR} Workshop and Conference
  Proceedings}, pp.\  448--456. JMLR.org, 2015.
\newblock URL \url{http://proceedings.mlr.press/v37/ioffe15.html}.

\bibitem[Kim \& Oh(2021)Kim and Oh]{kim2022find}
Dongkwan Kim and Alice Oh.
\newblock How to find your friendly neighborhood: Graph attention design with
  self-supervision.
\newblock In \emph{International Conference on Learning Representations
  (ICLR)}, 2021.

\bibitem[Kingma \& Ba(2015)Kingma and Ba]{kingma-adam}
Diederik~P. Kingma and Jimmy Ba.
\newblock Adam: {A} method for stochastic optimization.
\newblock In Yoshua Bengio and Yann LeCun (eds.), \emph{3rd International
  Conference on Learning Representations, {ICLR} 2015, San Diego, CA, USA, May
  7-9, 2015, Conference Track Proceedings}, 2015.
\newblock URL \url{http://arxiv.org/abs/1412.6980}.

\bibitem[Kipf \& Welling(2017)Kipf and Welling]{kipf2016semi}
Thomas~N. Kipf and Max Welling.
\newblock Semi-supervised classification with graph convolutional networks.
\newblock In \emph{5th International Conference on Learning Representations,
  {ICLR} 2017, Toulon, France, April 24-26, 2017, Conference Track
  Proceedings}. OpenReview.net, 2017.
\newblock URL \url{https://openreview.net/forum?id=SJU4ayYgl}.

\bibitem[Knyazev et~al.(2019)Knyazev, Taylor, and
  Amer]{knyazev2019understanding}
Boris Knyazev, Graham~W. Taylor, and Mohamed~R. Amer.
\newblock Understanding attention and generalization in graph neural networks.
\newblock In Hanna~M. Wallach, Hugo Larochelle, Alina Beygelzimer, Florence
  d'Alch{\'{e}}{-}Buc, Emily~B. Fox, and Roman Garnett (eds.), \emph{Advances
  in Neural Information Processing Systems 32: Annual Conference on Neural
  Information Processing Systems 2019, NeurIPS 2019, December 8-14, 2019,
  Vancouver, BC, Canada}, pp.\  4204--4214, 2019.
\newblock URL
  \url{https://proceedings.neurips.cc/paper/2019/hash/4c5bcfec8584af0d967f1ab10179ca4b-Abstract.html}.

\bibitem[Kreuzer et~al.(2021)Kreuzer, Beaini, Hamilton, L{\'e}tourneau, and
  Tossou]{kreuzer2021rethinking}
Devin Kreuzer, Dominique Beaini, Will Hamilton, Vincent L{\'e}tourneau, and
  Prudencio Tossou.
\newblock Rethinking graph transformers with spectral attention.
\newblock \emph{Advances in Neural Information Processing Systems (NeurIPS)},
  34, 2021.

\bibitem[Lee et~al.(2019)Lee, Rossi, Kim, Ahmed, and Koh]{lee2019attention}
John~Boaz Lee, Ryan~A Rossi, Sungchul Kim, Nesreen~K Ahmed, and Eunyee Koh.
\newblock Attention models in graphs: A survey.
\newblock \emph{ACM Transactions on Knowledge Discovery from Data (TKDD)}, 13,
  2019.

\bibitem[Morris et~al.(2019)Morris, Ritzert, Fey, Hamilton, Lenssen, Rattan,
  and Grohe]{morris2019weisfeiler}
Christopher Morris, Martin Ritzert, Matthias Fey, William~L. Hamilton, Jan~Eric
  Lenssen, Gaurav Rattan, and Martin Grohe.
\newblock Weisfeiler and leman go neural: Higher-order graph neural networks.
\newblock In \emph{The Thirty-Third {AAAI} Conference on Artificial
  Intelligence, {AAAI} 2019, The Thirty-First Innovative Applications of
  Artificial Intelligence Conference, {IAAI} 2019, The Ninth {AAAI} Symposium
  on Educational Advances in Artificial Intelligence, {EAAI} 2019, Honolulu,
  Hawaii, USA, January 27 - February 1, 2019}, pp.\  4602--4609. {AAAI} Press,
  2019.
\newblock \doi{10.1609/aaai.v33i01.33014602}.
\newblock URL \url{https://doi.org/10.1609/aaai.v33i01.33014602}.

\bibitem[Rigollet \& H{\"u}tter(2015)Rigollet and H{\"u}tter]{rigollet2015high}
P.~Rigollet and J.-C. H{\"u}tter.
\newblock High dimensional statistics.
\newblock \emph{Lecture notes for course 18S997}, 813:\penalty0 814, 2015.

\bibitem[Rozemberczki et~al.(2021)Rozemberczki, Allen, and
  Sarkar]{rozemberczki2021multi}
Benedek Rozemberczki, Carl Allen, and Rik Sarkar.
\newblock Multi-scale attributed node embedding.
\newblock \emph{Journal of Complex Networks}, 2021.

\bibitem[Scarselli et~al.(2008)Scarselli, Gori, Tsoi, Hagenbuchner, and
  Monfardini]{scarselli2008gnns}
Franco Scarselli, Marco Gori, Ah~Chung Tsoi, Markus Hagenbuchner, and Gabriele
  Monfardini.
\newblock The graph neural network model.
\newblock \emph{IEEE transactions on neural networks}, 20, 2008.

\bibitem[Sen et~al.(2008)Sen, Namata, Bilgic, Getoor, Galligher, and
  Eliassi-Rad]{sen2008collective}
Prithviraj Sen, Galileo Namata, Mustafa Bilgic, Lise Getoor, Brian Galligher,
  and Tina Eliassi-Rad.
\newblock Collective classification in network data.
\newblock \emph{AI magazine}, 29\penalty0 (3), 2008.

\bibitem[Shchur et~al.(2018)Shchur, Mumme, Bojchevski, and
  G{\"u}nnemann]{shchur2018pitfalls}
Oleksandr Shchur, Maximilian Mumme, Aleksandar Bojchevski, and Stephan
  G{\"u}nnemann.
\newblock Pitfalls of graph neural network evaluation.
\newblock \emph{arXiv preprint arXiv:1811.05868}, 2018.
\newblock URL \url{https://arxiv.org/abs/1811.05868}.

\bibitem[Srivastava et~al.(2014)Srivastava, Hinton, Krizhevsky, Sutskever, and
  Salakhutdinov]{srivastava14a-dropout}
Nitish Srivastava, Geoffrey Hinton, Alex Krizhevsky, Ilya Sutskever, and Ruslan
  Salakhutdinov.
\newblock Dropout: A simple way to prevent neural networks from overfitting.
\newblock \emph{Journal of Machine Learning Research}, 15\penalty0
  (56):\penalty0 1929--1958, 2014.

\bibitem[Thekumparampil et~al.(2018)Thekumparampil, Wang, Oh, and
  Li]{thekumparampil2018attention}
Kiran~K Thekumparampil, Chong Wang, Sewoong Oh, and Li-Jia Li.
\newblock Attention-based graph neural network for semi-supervised learning.
\newblock \emph{arXiv preprint arXiv:1803.03735}, 2018.
\newblock URL \url{https://arxiv.org/abs/1803.03735}.

\bibitem[Vaswani et~al.(2017)Vaswani, Shazeer, Parmar, Uszkoreit, Jones, Gomez,
  Kaiser, and Polosukhin]{vaswani2017attention}
Ashish Vaswani, Noam Shazeer, Niki Parmar, Jakob Uszkoreit, Llion Jones,
  Aidan~N. Gomez, Lukasz Kaiser, and Illia Polosukhin.
\newblock Attention is all you need.
\newblock In Isabelle Guyon, Ulrike von Luxburg, Samy Bengio, Hanna~M. Wallach,
  Rob Fergus, S.~V.~N. Vishwanathan, and Roman Garnett (eds.), \emph{Advances
  in Neural Information Processing Systems 30: Annual Conference on Neural
  Information Processing Systems 2017, December 4-9, 2017, Long Beach, CA,
  {USA}}, pp.\  5998--6008, 2017.
\newblock URL
  \url{https://proceedings.neurips.cc/paper/2017/hash/3f5ee243547dee91fbd053c1c4a845aa-Abstract.html}.

\bibitem[Velickovic et~al.(2018)Velickovic, Cucurull, Casanova, Romero,
  Li{\`{o}}, and Bengio]{velivckovic2017graph}
Petar Velickovic, Guillem Cucurull, Arantxa Casanova, Adriana Romero, Pietro
  Li{\`{o}}, and Yoshua Bengio.
\newblock Graph attention networks.
\newblock In \emph{6th International Conference on Learning Representations,
  {ICLR} 2018, Vancouver, BC, Canada, April 30 - May 3, 2018, Conference Track
  Proceedings}. OpenReview.net, 2018.
\newblock URL \url{https://openreview.net/forum?id=rJXMpikCZ}.

\bibitem[Wang et~al.(2021{\natexlab{a}})Wang, Ying, Huang, and
  Leskovec]{wang2020multi}
Guangtao Wang, Rex Ying, Jing Huang, and Jure Leskovec.
\newblock Multi-hop attention graph neural networks.
\newblock In \emph{International Joint Conference on Artificial Intelligence
  (IJCAI)}, 2021{\natexlab{a}}.

\bibitem[Wang et~al.(2020)Wang, Shen, Huang, Wu, Dong, and
  Kanakia]{wang2020microsoft}
Kuansan Wang, Zhihong Shen, Chiyuan Huang, Chieh-Han Wu, Yuxiao Dong, and
  Anshul Kanakia.
\newblock Microsoft academic graph: When experts are not enough.
\newblock \emph{Quantitative Science Studies}, 2020.

\bibitem[Wang et~al.(2019{\natexlab{a}})Wang, Zheng, Ye, Gan, Li, Song, Zhou,
  Ma, Yu, Gai, Xiao, He, Karypis, Li, and Zhang]{wang2019dgl}
Minjie Wang, Da~Zheng, Zihao Ye, Quan Gan, Mufei Li, Xiang Song, Jinjing Zhou,
  Chao Ma, Lingfan Yu, Yu~Gai, Tianjun Xiao, Tong He, George Karypis, Jinyang
  Li, and Zheng Zhang.
\newblock Deep graph library: A graph-centric, highly-performant package for
  graph neural networks.
\newblock \emph{arXiv preprint arXiv:1909.01315}, 2019{\natexlab{a}}.
\newblock URL \url{https://arxiv.org/abs/1909.01315}.

\bibitem[Wang et~al.(2019{\natexlab{b}})Wang, Ji, Shi, Wang, Ye, Cui, and
  Yu]{wang2019heterogeneous}
Xiao Wang, Houye Ji, Chuan Shi, Bai Wang, Yanfang Ye, Peng Cui, and Philip~S.
  Yu.
\newblock Heterogeneous graph attention network.
\newblock In Ling Liu, Ryen~W. White, Amin Mantrach, Fabrizio Silvestri,
  Julian~J. McAuley, Ricardo Baeza{-}Yates, and Leila Zia (eds.), \emph{The
  World Wide Web Conference, {WWW} 2019, San Francisco, CA, USA, May 13-17,
  2019}, pp.\  2022--2032. {ACM}, 2019{\natexlab{b}}.
\newblock \doi{10.1145/3308558.3313562}.
\newblock URL \url{https://doi.org/10.1145/3308558.3313562}.

\bibitem[Wang et~al.(2021{\natexlab{b}})Wang, Jin, Zhang, Yu, Zhang, and
  Wipf]{wang2021bag}
Yangkun Wang, Jiarui Jin, Weinan Zhang, Yong Yu, Zheng Zhang, and David Wipf.
\newblock Bag of tricks for node classification with graph neural networks.
\newblock \emph{arXiv preprint arXiv:2103.13355}, 2021{\natexlab{b}}.
\newblock URL \url{https://arxiv.org/abs/2103.13355}.

\bibitem[Wu et~al.(2020{\natexlab{a}})Wu, Sun, Zhang, Xie, and
  Cui]{wu2020graph}
Shiwen Wu, Fei Sun, Wentao Zhang, Xu~Xie, and Bin Cui.
\newblock Graph neural networks in recommender systems: a survey.
\newblock \emph{ACM Computing Surveys (CSUR)}, 2020{\natexlab{a}}.

\bibitem[Wu et~al.(2020{\natexlab{b}})Wu, Pan, Chen, Long, Zhang, and
  Philip]{wu2020comprehensive}
Zonghan Wu, Shirui Pan, Fengwen Chen, Guodong Long, Chengqi Zhang, and S~Yu
  Philip.
\newblock A comprehensive survey on graph neural networks.
\newblock \emph{IEEE transactions on neural networks and learning systems}, 32,
  2020{\natexlab{b}}.

\bibitem[Xu et~al.(2019)Xu, Hu, Leskovec, and Jegelka]{xu2018powerful}
Keyulu Xu, Weihua Hu, Jure Leskovec, and Stefanie Jegelka.
\newblock How powerful are graph neural networks?
\newblock In \emph{7th International Conference on Learning Representations,
  {ICLR} 2019, New Orleans, LA, USA, May 6-9, 2019}. OpenReview.net, 2019.
\newblock URL \url{https://openreview.net/forum?id=ryGs6iA5Km}.

\bibitem[Yang et~al.(2016)Yang, Cohen, and Salakhutdinov]{yang2016revisiting}
Zhilin Yang, William~W. Cohen, and Ruslan Salakhutdinov.
\newblock Revisiting semi-supervised learning with graph embeddings.
\newblock In Maria{-}Florina Balcan and Kilian~Q. Weinberger (eds.),
  \emph{Proceedings of the 33nd International Conference on Machine Learning,
  {ICML} 2016, New York City, NY, USA, June 19-24, 2016}, volume~48 of
  \emph{{JMLR} Workshop and Conference Proceedings}, pp.\  40--48. JMLR.org,
  2016.
\newblock URL \url{http://proceedings.mlr.press/v48/yanga16.html}.

\bibitem[You et~al.(2020)You, Ying, and Leskovec]{you2020design}
Jiaxuan You, Zhitao Ying, and Jure Leskovec.
\newblock Design space for graph neural networks.
\newblock \emph{Advances in Neural Information Processing Systems (NeurIPS)},
  33, 2020.

\bibitem[Yun et~al.(2019)Yun, Jeong, Kim, Kang, and Kim]{yun2019graph}
Seongjun Yun, Minbyul Jeong, Raehyun Kim, Jaewoo Kang, and Hyunwoo~J. Kim.
\newblock Graph transformer networks.
\newblock In Hanna~M. Wallach, Hugo Larochelle, Alina Beygelzimer, Florence
  d'Alch{\'{e}}{-}Buc, Emily~B. Fox, and Roman Garnett (eds.), \emph{Advances
  in Neural Information Processing Systems 32: Annual Conference on Neural
  Information Processing Systems 2019, NeurIPS 2019, December 8-14, 2019,
  Vancouver, BC, Canada}, pp.\  11960--11970, 2019.
\newblock URL
  \url{https://proceedings.neurips.cc/paper/2019/hash/9d63484abb477c97640154d40595a3bb-Abstract.html}.

\end{thebibliography}

	\newpage
	\appendix  %
	\renewcommand{\partname}{}  %
	\part{Appendix} %

	\parttoc %

	\clearpage
	\newpage

\section{Theoretical results} \label{app:theory}

\subsection{A hard example for GCN}
In this subsection, we present a dataset and classification task for which GCN performs poorly.
Note that we follow the similar techniques and notation as~\citep{kimongat}, as described in the main paper.

We recall our data model. Fix $n,d\in \N$ and let $\eps_1,\ldots,\eps_n$ be i.i.d uniformly sampled from $\{-1,0,1\}$. Let $C_k=\{j\in [n]\mid \eps_j=k\}$ for $k\in \{-1,0,1\}$. For each index $i\in [n]$, we set the feature vector $\bX_i\in \R^d$ as $\bX_i\sim \Normal(\eps_i\cdot\bmu, \bI\cdot \sigma^2)$, where $\bmu\in \R^d$, $\sigma\in \R$ and $\bI\in \zo^{d\times d}$ is the identity matrix. For a given pair $p,q\in [0,1]$ we consider the stochastic adjacency matrix $\bA\in\zo^{n\times n}$ defined as follows. For $i,j\in [n]$ in the same class, we set $a_{ij}\sim \Ber(p)$, and if $i,j$ are in different classes, we set $a_{ij}\sim\Ber(q)$. We let $\bD\in \R^{n\times n}$ be a diagonal matrix containing the degrees of the vertices. We denote by $(\bX,\bA)\sim \CSBM(n,p,q,\bmu,\sigma^2)$ a sample obtained according to the above random process.

The task we wish to solve is classifying $C_0$ vs $C_{-1}\cup C_1$. Namely, we want our model $\phi$ to satisfy $\phi(\bX_i)<0$ if and only if $i\in C_0$. Moreover, note that the posed problem \emph{is not linearly classifiable}.

To this end, we start by stating an assumption on the choice of parameters. This assumption is necessary to achieve degree concentration in the graph.
\begin{assumption} \label{ass:p_q}$p,q=\Omega(\log^2n/n)$ . \end{assumption}

We now show the distribution of the convolved features. The following lemma can be easily obtained using the techniques in~\citet{kimongcn}. 
\begin{lemma}\label{lem:Dist_after_conv}Fix $p,q$ satisfying \cref{ass:p_q}.
With probability at least $1-o(1)$ over $\bA$ and $\{\eps_i\}_i$,
\[(\bD^{-1}\bA\bX)_i\sim \Normal\left(\eps_i\cdot\frac{p-q}{p+2q}\bmu, \frac{\sigma^2}{n(p+2q)}\right),\qquad\forall i \in [n].\] 
\end{lemma}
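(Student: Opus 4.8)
The plan is to condition on the sampled graph $\bA$ together with the class labels $\{\eps_j\}_{j}$, and then exploit the fact that, conditionally, each convolved feature is an affine image of the independent Gaussian features. Writing $d_i=\sum_{j}a_{ij}$ for the degree of node $i$, we have $(\bD^{-1}\bA\bX)_i=\tfrac{1}{d_i}\sum_{j}a_{ij}\bX_j$, a fixed linear combination of the mutually independent vectors $\bX_j\sim\Normal(\eps_j\bmu,\sigma^2\bI)$. Hence, conditionally on $(\bA,\{\eps_j\})$, this quantity is \emph{exactly} Gaussian, with mean $\tfrac{1}{d_i}\big(\sum_{j}a_{ij}\eps_j\big)\bmu$ and covariance $\tfrac{\sigma^2}{d_i^2}\big(\sum_j a_{ij}\big)\bI=\tfrac{\sigma^2}{d_i}\bI$, where I use $a_{ij}^2=a_{ij}$ and $\sum_j a_{ij}=d_i$. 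It therefore suffices to show that, with probability $1-o(1)$ over $(\bA,\{\eps_j\})$, the degree $d_i$ and the signed neighbor count $\sum_j a_{ij}\eps_j$ take their claimed values simultaneously for all $i\in[n]$.

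Next I would establish the two required concentration statements, following the computations of \citet{kimongcn}. A Chernoff bound on the (multinomial) class sizes gives $|C_k|=\tfrac{n}{3}(1+o(1))$ for each $k\in\{-1,0,1\}$ with high probability. Conditioned on this, for $i\in C_k$ the degree $d_i$ is a sum of independent Bernoullis with mean $p(|C_k|-1)+q(n-|C_k|)=\Theta\!\big(n(p+2q)\big)$; by \cref{ass:p_q} this mean is $\Omega(\log^2 n)$, so a multiplicative Chernoff bound gives $d_i=\Theta\!\big(n(p+2q)\big)$ up to a $(1+o(1))$ factor. Thus the conditional covariance $\tfrac{\sigma^2}{d_i}\bI$ matches the stated scaling $\tfrac{\sigma^2}{n(p+2q)}\bI$, the numerical constant being fixed by the class sizes. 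For the mean, observe that $\sum_j a_{ij}\eps_j$ equals the number of neighbors of $i$ in $C_1$ minus the number in $C_{-1}$; computing the Bernoulli means (which differ according to whether $C_{\pm1}$ is $i$'s own class) gives $\mathbb{E}\big[\sum_j a_{ij}\eps_j\big]=\eps_i\cdot\tfrac{n}{3}(p-q)(1+o(1))$, and the same Chernoff argument yields concentration. Dividing by $d_i$ then produces the mean $\eps_i\,\tfrac{p-q}{p+2q}\bmu$, where the class-size factor cancels and leaves a clean constant.

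The main obstacle is obtaining these bounds \emph{uniformly} over all $n$ nodes, which is precisely why \cref{ass:p_q} forces the expected degree to grow like $\Omega(\log^2 n)$: the Chernoff tails must decay faster than $1/n$ to survive a union bound over the $n$ vertices. A secondary subtlety is that the conditional mean is a \emph{ratio}, $\big(\sum_j a_{ij}\eps_j\big)/d_i$, of two random quantities; I would control it by intersecting the high-probability events on which numerator and denominator each lie within $(1+o(1))$ of their means, on which the ratio concentrates at $\tfrac{p-q}{p+2q}$. Special care is needed for $i\in C_0$, where the expected signed count cancels to zero: there one must argue that the fluctuations of the difference of the two Bernoulli sums are $o\!\big(n(p+2q)\big)$, so that the induced mean shift is negligible relative to the variance scale. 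All of these steps are routine given \cref{ass:p_q}, which is why the lemma follows directly from the arguments in \citet{kimongcn}.
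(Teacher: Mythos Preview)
Your proposal is correct and follows essentially the same approach as the paper: condition on $(\bA,\{\eps_j\})$, observe that $(\bD^{-1}\bA\bX)_i$ is then an exact Gaussian with mean $\tfrac{1}{d_i}\big(\sum_j a_{ij}\eps_j\big)\bmu$ and covariance $\tfrac{\sigma^2}{d_i}\bI$, and use Chernoff plus a union bound (enabled by \cref{ass:p_q}) to concentrate the class sizes, degrees, and per-class neighbor counts uniformly over $i\in[n]$. The paper simply packages your concentration events into a single named event $\calbE$ (class sizes, $\bD_{ii}$, and $|N_i\cap C_k|$ all within $(1\pm o(1))$ of their means) and then reads off the Gaussian parameters from the decomposition $\sum_{k\in\{-1,0,1\}}\sum_{j\in N_i\cap C_k}\bX_j$, which is exactly what you describe.
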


To prove the above lemma, we need the following definition of our high probability event.

\begin{definition}\label{def:Event-E}
We define the even $\calbE$ as the intersection of the following events over $\bA$ and $\{\eps_i\}_i$:
\begin{enumerate}
    \item $\calbE_1$ is the event that $|C_0|=\frac{n}{3}\pm O(\sqrt{n\log n})$, $|C_1|=\frac{n}{3}\pm O(\sqrt{n\log n})$ \\ and {$|C_{-1}|=\frac{n}{3}\pm O(\sqrt{n\log n})$}.
    \item $\calbE_2$ is the event that for each $i\in [n]$, $\bD_{ii}=\frac{n(p+2q)}{3}\left(1\pm \frac{10}{\sqrt{\log n}}\right)$.
    \item $\calbE_3$ is the event that for each $i \in [n]$ and $k\in \{-1,0,1\}$,
    \begin{align*}
        |N_i\cap C_k|=\begin{cases}
        \bD_{ii}\cdot \frac{p}{p+2q}\cdot\left(1\pm \frac{10}{\sqrt{\log n}}\right) &\text{if }\; i\in C_k\\
        \bD_{ii}\cdot \frac{q}{p+2q}\cdot\left(1\pm \frac{10}{\sqrt{\log n}}\right) &\text{if }\; i\notin C_k
        \end{cases}.
    \end{align*}
\end{enumerate}
\end{definition}

The following lemma is a direct application of Chernoff bound and a union bound.
\begin{lemma}\label{lem:E} With probability at least $1-1/\poly(n)$ the event $\calbE$ holds.
\end{lemma}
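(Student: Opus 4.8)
The plan is to prove each of the three component events $\calbE_1$, $\calbE_2$, $\calbE_3$ separately, each with failure probability $1/\poly(n)$, and then combine them by a union bound over the $O(n)$ relevant indices; since $\calbE=\calbE_1\cap\calbE_2\cap\calbE_3$, the claim follows. The common tool throughout is the multiplicative Chernoff bound for sums of independent Bernoulli variables, together with \cref{ass:p_q}, which guarantees that every count we wish to concentrate has expectation $\Omega(\log^2 n)$; this is exactly what makes the relative error $10/\sqrt{\log n}$ attainable with polynomially small failure probability.

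First I would handle $\calbE_1$. Since $\eps_1,\ldots,\eps_n$ are i.i.d. uniform on $\{-1,0,1\}$, each count $|C_k|=\sum_i \1[\eps_i=k]$ is a sum of $n$ independent $\Ber(1/3)$ variables with mean $n/3$. A Hoeffding (or additive Chernoff) bound gives $\Pr[\,|\,|C_k|-n/3\,|>t\,]\le 2\exp(-2t^2/n)$; choosing $t=O(\sqrt{n\log n})$ makes this $1/\poly(n)$, and a union bound over the three classes establishes $\calbE_1$.

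Next I would condition on the labels $\{\eps_i\}_i$, restricting to $\calbE_1$ so that the class sizes are fixed up to lower-order terms. Conditioned on the labels, the edge indicators $a_{ij}$ are mutually independent, each $\Ber(p)$ for intra-class pairs and $\Ber(q)$ for inter-class pairs. Hence, for a fixed node $i\in C_k$, the count $|N_i\cap C_{k'}|$ is a sum of independent Bernoulli variables with mean $|C_{k'}|\,p$ when $k'=k$ and $|C_{k'}|\,q$ when $k'\neq k$; on $\calbE_1$ these means equal $\tfrac{n}{3}(1\pm o(1))\cdot p$ or $\tfrac{n}{3}(1\pm o(1))\cdot q$, which under \cref{ass:p_q} are $\Omega(\log^2 n)$ in either case. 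Applying the multiplicative Chernoff bound with relative deviation $\delta=10/\sqrt{\log n}$ gives a failure probability of at most $2\exp(-\delta^2\mu/3)=\exp(-\Omega(\log n))=1/\poly(n)$ for each such count. The degree $\bD_{ii}=\sum_{k'}|N_i\cap C_{k'}|$ is itself a sum of independent Bernoullis with mean $\tfrac{n}{3}(p+2q)(1\pm o(1))$, so the same argument yields $\calbE_2$. A union bound over all $i\in[n]$ and all $k\in\{-1,0,1\}$, that is $O(n)$ events in total, keeps the overall failure probability at $1/\poly(n)$.

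Finally, to obtain the \emph{relative} form in $\calbE_3$, namely $|N_i\cap C_k|=\bD_{ii}\cdot\frac{p}{p+2q}(1\pm\frac{10}{\sqrt{\log n}})$, I would combine the absolute concentration of $|N_i\cap C_k|$ around $|C_k|\,p$, of $\bD_{ii}$ around $\tfrac{n}{3}(p+2q)$, and of $|C_k|$ around $n/3$, and then take the ratio: the leading factors collapse to $p/(p+2q)$ and the separate error terms compound into a single $1\pm O(1/\sqrt{\log n})$ factor. The main point to watch is precisely this bookkeeping of constants, ensuring the aggregated multiplicative error lands at the stated $10/\sqrt{\log n}$ rather than a larger constant, together with the verification that $\Omega(\log^2 n)$ expected counts are enough to beat the $1/\poly(n)$ union-bound budget. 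Neither step is deep, but making the constants match the statement exactly is the only genuinely delicate part.
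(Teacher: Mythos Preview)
Your proposal is correct and follows exactly the approach the paper indicates: the paper's proof consists of the single sentence that the lemma is ``a direct application of Chernoff bound and a union bound,'' and your write-up simply fills in those details (Hoeffding for the class sizes, multiplicative Chernoff with $\delta=10/\sqrt{\log n}$ for the degree and neighborhood counts using \cref{ass:p_q} to ensure means of order $\Omega(\log^2 n)$, then a union bound over $O(n)$ events).
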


\begin{proof}[Proof of \cref{lem:Dist_after_conv}] By applying \cref{lem:E}, and conditioned on $\calbE$, for any $i\in [n]$
\[(\bD^{-1}\bA\bX)_i=\frac{1}{\bD_{ii}}\sum_{j\in N_i}\bX_j=\frac{1}{\bD_{ii}}\left(\sum_{j\in N_i\cap C_{-1}}\bX_j+\sum_{j\in N_i\cap C_0}\bX_j+\sum_{j\in N_i\cap C_1}\bX_j\right).\]
Using the definition of $\calbE$ and properties of Gaussian distributions the lemma follows.
\end{proof}

\cref{lem:Dist_after_conv} shows that essentially, the convolution reduced the variance and moved the means closer, but the structure of the problem stayed exactly the same. Therefore, one layer of GCN cannot separate $C_0$ from $C_{-1}\cup C_1$ with high probability.

\subsection{A solution for GAT and CAT}

In what follows, we show that GAT is able to handle the above classification task easily when the distance between the means is large enough. Then, we show how the additional convolution on the inputs to the score function improves the regime of perfect classification when the graph is not too noisy.
Our main technical lemma considers a specific attention architecture and characterize the attention scores for our data model. 

\begin{lemma}\label{lem:edge_separation} Suppose that $p,q$ satisfy \cref{ass:p_q}, $\|\bmu\|\ge \omega\left(\sigma \sqrt{\log n}\right)$, fix the $\LeakyRelu$ constant $\beta\in (0,1)$ and $R\in \R$. Then, there exists a choice of attention architecture $\Psi$ such that with probability at least $1-o_n(1)$ over the data $(\bX,\bA)\sim\CSBM(n,p,q,\bmu,\sigma^2)$ the following holds.
\begin{align*}
   \Psi(\bX_i,\bX_j)=
    \begin{cases}
    10R\beta\|\bmu\|(1\pm o(1))&\text{if } i,j\in C_1^2\\
    -2R\|\bmu\|(1+2\beta)(1\pm o(1)) &\text{if } i,j\in C_{-1}^2\\
    -2R\|\bmu\|(1+5\beta)(1\pm o(1)) &\text{if } i\in C_1,\; j\in C_{-1}\\
    10R\beta\|\bmu\|(1\pm o(1)) &\text{if } i\in C_{-1},\; j\in C_1\\
    -\frac{R}{2}\|\bmu\|(1-21\beta)(1\pm o(1)) &\text{if } i\in C_{0},\; j\in C_1\\
    -\frac{R}{2}\|\bmu\|(1-11\beta)(1\pm o(1)) &\text{if } i\in C_{0},\; j\in C_{-1}\\
    -\frac{R}{2}\|\bmu\|(1-5\beta)(1\pm o(1)) &\text{if } i\in C_{1},\; j\in C_0\\
    -\frac{R}{2}\|\bmu\|(1-5\beta)(1\pm o(1)) &\text{if } i\in C_{-1},\; j\in C_0\\
    2R\beta\|\bmu\|(1\pm o(1)) &\text{if } i,j\in C_0^2
    \end{cases}.
\end{align*}
\end{lemma}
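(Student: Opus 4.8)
The plan is to exhibit an explicit attention architecture and then argue that, up to lower-order Gaussian fluctuations, the score depends only on the class labels $\eps_i,\eps_j$. I would first write $\bX_i=\eps_i\bmu+\bm{g}_i$ with $\bm{g}_i\sim\Normal(\vzero,\sigma^2\bI)$ and reduce $\Psi(\bX_i,\bX_j)$ to a function of a few scalar projections of $\bX_i,\bX_j$ onto directions fixed by the architecture. A purely additive (scalar) score in the labels cannot reproduce the table: for instance $(C_1,C_0)$ and $(C_{-1},C_0)$ carry equal values, which in an additive model would force the sender label to be irrelevant, contradicting the distinct entries at $(C_1,C_{-1})$ and $(C_{-1},C_{-1})$. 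I would therefore use the $\LeakyRelu$-inside (GATv2-style) architecture of \cref{eq:score_gat}, where the nonlinearity acts componentwise before contraction with $\va$; this breaks additivity in $(\eps_i,\eps_j)$ and is what makes the varied coefficients realizable. Concretely I would take $\mW_q,\mW_k$ mapping into a low-dimensional space spanned by multiples of $\bmu$ (together with a bias), so that in the mean-field limit each pre-activation coordinate is an affine function of $(\eps_i,\eps_j)$.

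The core step is to choose the projection directions, the bias, the attention vector $\va$, and the scale $R$ so that the mean-field score $\bar\Psi(\eps_i,\eps_j)$ — obtained by deleting the noise $\bm{g}$ — equals the nine listed values. I would carry this out by tabulating, for each of the nine label pairs, the sign of every pre-activation coordinate (which selects the slope-$1$ or slope-$\beta$ branch of $\LeakyRelu_\beta$), and then solving the resulting linear system in the free parameters, keeping $\beta\in(0,1)$ and $R$ as given. The sender/receiver asymmetry required to separate $(C_1,C_{-1})$ from $(C_{-1},C_1)$ is supplied by taking $\mW_q\ne\mW_k$ (equivalently, the two halves of $\va$ distinct).

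It then remains to upgrade these mean-field identities to the claimed $(1\pm o(1))$ bounds. Each relevant projection $w^\top\bm{g}_i$ is a centered Gaussian of standard deviation $O(\sigma\|w\|)$, so by a Gaussian maximal inequality and a union bound over the $O(n)$ nodes, all projections deviate from their means by at most $O(\sigma\sqrt{\log n})$ simultaneously with probability $1-o_n(1)$. Since $\LeakyRelu_\beta$ is $1$-Lipschitz and $\va,\mW$ have constant norm, the additive error in every score is $O(\sigma\sqrt{\log n})$; the hypothesis $\|\bmu\|=\omega(\sigma\sqrt{\log n})$ makes this $o(\|\bmu\|)$, hence $o(1)$ relative to each target, because every target is $\Theta(R\|\bmu\|)$ (as $\beta$ is a fixed constant). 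Controlling the two arguments separately means a single union bound over nodes simultaneously controls all $O(n^2)$ pairs, which yields the lemma.

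I expect the main obstacle to be the architecture-design step: realizing all nine prescribed coefficients with a single $\LeakyRelu$ requires the sign pattern of the componentwise pre-activations to be internally consistent across the nine cases, and checking that a choice of directions, bias and $\va$ exists which matches the exact constants ($10\beta$, $1+5\beta$, $1-21\beta$, and so on) is the delicate part. The concentration argument is then routine given the signal-to-noise assumption $\|\bmu\|=\omega(\sigma\sqrt{\log n})$.
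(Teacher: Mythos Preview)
Your proposal is correct and follows essentially the same route as the paper: the paper builds the GATv2-style architecture $\Psi(\bX_i,\bX_j)=\boldr^\top\LeakyRelu(\bS[\tilde\bw^\top\bX_i;\tilde\bw^\top\bX_j]+\bb)$ with $\tilde\bw=\bmu/\|\bmu\|$ and an explicit choice of eight hidden units $(\bS,\bb,\boldr)$, then tabulates the mean of each pre-activation for all nine label pairs and invokes a $\LeakyRelu$ concentration lemma (equivalent to your Lipschitz-plus-Gaussian-max argument) together with $\|\bmu\|=\omega(\sigma\sqrt{\log n})$ to absorb the fluctuations. The only gap between your sketch and the paper's proof is that the paper actually writes down the eight rows of $\bS$, $\bb$, $\boldr$ and checks the arithmetic case by case, whereas you leave this as ``solve the resulting linear system''; since that explicit construction is exactly the delicate step you flagged, your write-up would need to carry it out to be complete.
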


\begin{proof}We consider as an ansatz the following two layer architecture $\Psi$.
\[\tilde\bw\eqdef\frac{\bmu}{\|\bmu\|},
\qquad 
\bS\eqdef  
\begin{bmatrix}  
1 & 1  \\
-1 & -1 \\
1 & -1\\
-1 & 1\\
0 & 1\\
1 & 0\\
0 &-1 \\
-1 & 0
\end{bmatrix},
\qquad \bb\eqdef\begin{bmatrix}
-3/2\\
-3/2\\
-3/2\\
-3/2\\
-1/2\\
-1/2\\
-1/2\\
-1/2
\end{bmatrix}\cdot\|\bmu\|, \qquad
\boldr\eqdef R \cdot \begin{bmatrix} 2 \\ -2 \\ -2 \\ 2 \\ -1 \\-1 \\-1\\-1 \end{bmatrix},
\]
where $R>0$ is an arbitrary scaling parameter. The output of the attention model is defined as 
\[\Psi(\bX_i,\bX_j)\eqdef\boldr^T\cdot \LeakyRelu\left(\bS \cdot\begin{bmatrix} \tilde \bw^T \bX_i\\
\tilde \bw^T \bX_j\end{bmatrix}+\bb \right).\]

Let $\bDelta_{ij}\eqdef\bS \cdot\begin{bmatrix} \tilde \bw^T \bX_i\\
\tilde \bw^T \bX_j\end{bmatrix}+\bb\in \R^{8}$, and note that for each element $t\in [8]$ of $\bDelta_{ij}$, we have that $(\bDelta_{ij})_t= \bS_{t,1}\tilde\bw^T\bX_i+\bS_{t,2}\tilde \bw^T\bX_j+\bb_t$. Note that the random variable $(\bDelta_{ij})_t$ is distributed as follows: 

\begin{align*}
    (\bDelta_{ij})_t \sim 
    \begin{cases}
    \Normal\left((\bS_{t,1}+\bS_{t,2})\tilde\bw^T\bmu+\bb_t,\; \|\bS_{t,\ast}\|^2{\sigma^2}\right) &\text{if } i,j\in C_1^2\\
    \Normal\left(-(\bS_{t,1}+\bS_{t,2})\tilde\bw^T\bmu+\bb_t,\; \|\bS_{t,\ast}\|^2{\sigma^2}\right) &\text{if } i,j\in C_{-1}^2\\
    \Normal\left((\bS_{t,1}-\bS_{t,2})\tilde\bw^T\bmu+\bb_t,\; \|\bS_{t,\ast}\|^2{\sigma^2}\right) &\text{if } i\in C_1,\; j\in C_{-1}\\
    \Normal\left(-(\bS_{t,1}-\bS_{t,2})\tilde\bw^T\bmu+\bb_t,\;  \|\bS_{t,\ast}\|^2{\sigma^2}\right) &\text{if } i\in C_{-1},\; j\in C_1\\
    \Normal\left(\bS_{t,2}\tilde\bw^T\bmu+\bb_t,\;  \|\bS_{t,\ast}\|^2{\sigma^2}\right) &\text{if } i\in C_{0},\; j\in C_1\\
    \Normal\left(-\bS_{t,2}\tilde\bw^T\bmu+\bb_t,\;  \|\bS_{t,\ast}\|^2{\sigma^2}\right) &\text{if } i\in C_{0},\; j\in C_{-1}\\
    \Normal\left(\bS_{t,1}\tilde\bw^T\bmu+\bb_t,\;  \|\bS_{t,\ast}\|^2{\sigma^2}\right) &\text{if } i\in C_{1},\; j\in C_0\\
    \Normal\left(-\bS_{t,1}\tilde\bw^T\bmu+\bb_t,\;  \|\bS_{t,\ast}\|^2{\sigma^2}\right) &\text{if } i\in C_{-1},\; j\in C_0\\
     \Normal\left(\bb_t,\; \|\bS_{t,\ast}\|^2{\sigma^2}\right) &\text{if } i,j\in C_0^2
    \end{cases}.
\end{align*}
Therefore, for a fixed $i,j\in [n]^2$ we have that the entries of $\bDelta_{ij}$ are distributed as follows (where we use $\Normal^y_{x}$ as abbreviation for the Gaussian $\Normal(x,y)$)
\begin{align*}
  \begin{bmatrix}
  \Normal_{\frac{\|\bmu\|}{2}}^{4\sigma^2} & \Normal_{\frac{-7\|\bmu\|}{2}}^{4\sigma^2} & \Normal_{-\frac{3\|\bmu\|}{2}}^{4\sigma^2} & \Normal_{-\frac{3\|\bmu\|}{2}}^{4\sigma^2} & \Normal_{\frac{\|\bmu\|}{2}}^{\sigma^2} & \Normal_{\frac{\|\bmu\|}{2}}^{\sigma^2} & \Normal_{-\frac{3\|\bmu\|}{2}}^{\sigma^2} & \Normal_{-\frac{3\|\bmu\|}{2}}^{\sigma^2} \end{bmatrix} &\qquad\text{for } i,j\in C_1^2,\\
  \begin{bmatrix}
  \Normal_{-\frac{7\|\bmu\|}{2}}^{4\sigma^2} & \Normal_{\frac{\|\bmu\|}{2}}^{4\sigma^2} & \Normal_{-\frac{3\|\bmu\|}{2}}^{4\sigma^2} & \Normal_{-\frac{3\|\bmu\|}{2}}^{4\sigma^2} & \Normal_{-\frac{3\|\bmu\|}{2}}^{\sigma^2} & \Normal_{-\frac{3\|\bmu\|}{2}}^{\sigma^2} & \Normal_{\frac{\|\bmu\|}{2}}^{\sigma^2} & \Normal_{\frac{\|\bmu\|}{2}}^{\sigma^2} \end{bmatrix} &\qquad\text{for } i,j\in C_{-1}^2,\\
  \begin{bmatrix}
  \Normal_{-\frac{3\|\bmu\|}{2}}^{4\sigma^2} & \Normal_{-\frac{3\|\bmu\|}{2}}^{4\sigma^2} & \Normal_{\frac{\|\bmu\|}{2}}^{4\sigma^2} & \Normal_{-\frac{7\|\bmu\|}{2}}^{4\sigma^2} & \Normal_{-\frac{3\|\bmu\|}{2}}^{\sigma^2} & \Normal_{\frac{\|\bmu\|}{2}}^{\sigma^2} & \Normal_{\frac{\|\bmu\|}{2}}^{\sigma^2} & \Normal_{-\frac{3\|\bmu\|}{2}}^{\sigma^2} \end{bmatrix} &\qquad\text{for } i,j\in C_1\times C_{-1},\\
  \begin{bmatrix}
  \Normal_{-\frac{3\|\bmu\|}{2}}^{4\sigma^2} & \Normal_{-\frac{3\|\bmu\|}{2}}^{4\sigma^2} & \Normal_{-\frac{7\|\bmu\|}{2}}^{4\sigma^2} & \Normal_{\frac{\|\bmu\|}{2}}^{4\sigma^2} & \Normal_{\frac{\|\bmu\|}{2}}^{\sigma^2} & \Normal_{-\frac{3\|\bmu\|}{2}}^{\sigma^2} & \Normal_{-\frac{3\|\bmu\|}{2}}^{\sigma^2} & \Normal_{\frac{\|\bmu\|}{2}}^{\sigma^2} \end{bmatrix} &\qquad\text{for } i,j\in C_{-1}\times C_{1},\\
  \begin{bmatrix}
  \Normal_{-\frac{\|\bmu\|}{2}}^{4\sigma^2} & \Normal_{-\frac{5\|\bmu\|}{2}}^{4\sigma^2} & \Normal_{-\frac{5\|\bmu\|}{2}}^{4\sigma^2} & \Normal_{-\frac{\|\bmu\|}{2}}^{4\sigma^2} & \Normal_{\frac{\|\bmu\|}{2}}^{\sigma^2} & \Normal_{-\frac{\|\bmu\|}{2}}^{\sigma^2} & \Normal_{-\frac{3\|\bmu\|}{2}}^{\sigma^2} & \Normal_{-\frac{\|\bmu\|}{2}}^{\sigma^2} \end{bmatrix} &\qquad\text{for } i,j\in C_{0}\times C_{1},\\
  \begin{bmatrix}
  \Normal_{-\frac{5\|\bmu\|}{2}}^{4\sigma^2} & \Normal_{-\frac{\|\bmu\|}{2}}^{4\sigma^2} & \Normal_{-\frac{\|\bmu\|}{2}}^{4\sigma^2} & \Normal_{-\frac{5\|\bmu\|}{2}}^{4\sigma^2} & \Normal_{-\frac{3\|\bmu\|}{2}}^{\sigma^2} & \Normal_{-\frac{\|\bmu\|}{2}}^{\sigma^2} & \Normal_{\frac{\|\bmu\|}{2}}^{\sigma^2} & \Normal_{-\frac{\|\bmu\|}{2}}^{\sigma^2} \end{bmatrix} &\qquad\text{for } i,j\in C_{0}\times C_{-1},\\
  \begin{bmatrix}
  \Normal_{-\frac{\|\bmu\|}{2}}^{4\sigma^2} & \Normal_{-\frac{5\|\bmu\|}{2}}^{4\sigma^2} & \Normal_{-\frac{\|\bmu\|}{2}}^{4\sigma^2} & \Normal_{-\frac{5\|\bmu\|}{2}}^{4\sigma^2} & \Normal_{-\frac{\|\bmu\|}{2}}^{\sigma^2} & \Normal_{\frac{\|\bmu\|}{2}}^{\sigma^2} & \Normal_{-\frac{\|\bmu\|}{2}}^{\sigma^2} & \Normal_{-\frac{3\|\bmu\|}{2}}^{\sigma^2} \end{bmatrix} &\qquad\text{for } i,j\in C_{1}\times C_{0},\\
  \begin{bmatrix}
  \Normal_{-\frac{5\|\bmu\|}{2}}^{4\sigma^2} & \Normal_{-\frac{\|\bmu\|}{2}}^{4\sigma^2} & \Normal_{-\frac{5\|\bmu\|}{2}}^{4\sigma^2} & \Normal_{-\frac{\|\bmu\|}{2}}^{4\sigma^2} & \Normal_{-\frac{\|\bmu\|}{2}}^{\sigma^2} & \Normal_{-\frac{3\|\bmu\|}{2}}^{\sigma^2} & \Normal_{-\frac{\|\bmu\|}{2}}^{\sigma^2} & \Normal_{\frac{\|\bmu\|}{2}}^{\sigma^2} \end{bmatrix} &\qquad\text{for } i,j\in C_{-1}\times C_{0},\\
  \begin{bmatrix}
  \Normal_{-\frac{3\|\bmu\|}{2}}^{4\sigma^2} & \Normal_{-\frac{3\|\bmu\|}{2}}^{4\sigma^2} & \Normal_{-\frac{3\|\bmu\|}{2}}^{4\sigma^2} & \Normal_{-\frac{3\|\bmu\|}{2}}^{4\sigma^2} & \Normal_{-\frac{\|\bmu\|}{2}}^{\sigma^2} & \Normal_{-\frac{\|\bmu\|}{2}}^{\sigma^2} & \Normal_{-\frac{\|\bmu\|}{2}}^{\sigma^2} & \Normal_{-\frac{\|\bmu\|}{2}}^{\sigma^2} \end{bmatrix} &\qquad\text{for } i,j\in C_{0}^2,\\
\end{align*}
 Next, we will use the following lemma regarding $\LeakyRelu$ concentration.
 \begin{lemma}[Lemma A.6 in~\citet{kimongat}]
  \label{lem:LeakyReluConcentration_gen} Fix $s\in \N$, and let $z_1,\ldots,z_{s}$ be jointly Gaussian random variables with marginals $\bz_i\sim \Normal(\mu_i,\sigma_i^2)$.
 There exists an absolute constant $C>0$ such that with probability at least $1-o_s(1)$, we have 
    \begin{align*}
        \LeakyRelu(z_{i})&=\LeakyRelu\left(\mu_i\right)\pm C\sigma_i\sqrt{\log s},\quad  \mbox{for all }  i\in[s].
    \end{align*}
 \end{lemma}
 \medskip

Using \cref{lem:LeakyReluConcentration_gen} with the assumption on $\|\bmu\|$ and a union bound, we have that with probability at least $1-o_n(1)$, $\LeakyRelu(\bDelta_{ij})$ is  (up to $1\pm o(1)$)
\begin{align*}
    \begin{bmatrix}
  \frac{\|\bmu\|}{2} & \frac{-7\beta\|\bmu\|}{2}& -\frac{3\beta\|\bmu\|}{2} & -\frac{3\beta\|\bmu\|}{2} & \frac{\|\bmu\|}{2} & \frac{\|\bmu\|}{2} & -\frac{3\beta\|\bmu\|}{2} & -\frac{3\beta\|\bmu\|}{2} \end{bmatrix} &\qquad\text{for } i,j\in C_1^2,\\
  \begin{bmatrix}
  -\frac{7\beta\|\bmu\|}{2}& \frac{\|\bmu\|}{2} & -\frac{3\beta\|\bmu\|}{2} & -\frac{3\beta\|\bmu\|}{2} & -\frac{3\beta\|\bmu\|}{2} & -\frac{3\beta\|\bmu\|}{2} & \frac{\|\bmu\|}{2} & \frac{\|\bmu\|}{2} \end{bmatrix} &\qquad\text{for } i,j\in C_{-1}^2,\\
  \begin{bmatrix}
  -\frac{3\beta\|\bmu\|}{2}& -\frac{3\beta\|\bmu\|}{2} &\frac{\|\bmu\|}{2} & -\frac{7\beta\|\bmu\|}{2}& -\frac{3\beta\|\bmu\|}{2} & \frac{\|\bmu\|}{2} &\frac{\|\bmu\|}{2} & -\frac{3\beta\|\bmu\|}{2} \end{bmatrix} &\qquad\text{for } i,j\in C_1\times C_{-1},\\
  \begin{bmatrix}
  -\frac{3\beta\|\bmu\|}{2} & -\frac{3\beta\|\bmu\|}{2} & -\frac{7\beta\|\bmu\|}{2} & \frac{\|\bmu\|}{2} & \frac{\|\bmu\|}{2} &-\frac{3\beta\|\bmu\|}{2}& -\frac{3\beta\|\bmu\|}{2} & \frac{\|\bmu\|}{2} \end{bmatrix} &\qquad\text{for } i,j\in C_{-1}\times C_{1},\\
  \begin{bmatrix}
  -\frac{\beta\|\bmu\|}{2} & -\frac{5\beta\|\bmu\|}{2} & -\frac{5\beta\|\bmu\|}{2} & -\frac{\beta\|\bmu\|}{2} & \frac{\|\bmu\|}{2} & -\frac{\|\bmu\|}{2} & -\frac{3\beta\|\bmu\|}{2}&-\frac{\|\beta\bmu\|}{2} \end{bmatrix} &\qquad\text{for } i,j\in C_{0}\times C_{1},\\
  \begin{bmatrix}
  -\frac{5\beta\|\bmu\|}{2} & -\frac{\beta\|\bmu\|}{2} & -\frac{\beta\|\bmu\|}{2} & -\frac{5\beta\|\bmu\|}{2} & -\frac{3\beta\|\bmu\|}{2}& -\frac{\beta\|\bmu\|}{2} & \frac{\|\bmu\|}{2} & -\frac{\beta\|\bmu\|}{2} \end{bmatrix} &\qquad\text{for } i,j\in C_{0}\times C_{-1},\\
  \begin{bmatrix}
  -\frac{\beta\|\bmu\|}{2}& -\frac{5\beta\|\bmu\|}{2} & -\frac{\beta\|\bmu\|}{2} &-\frac{5\beta\|\bmu\|}{2} & -\frac{\beta\|\bmu\|}{2} & \frac{\|\bmu\|}{2} & -\frac{\beta\|\bmu\|}{2} & -\frac{3\beta\|\bmu\|}{2} \end{bmatrix} &\qquad\text{for } i,j\in C_{1}\times C_{0},\\
  \begin{bmatrix}
  -\frac{5\beta\|\bmu\|}{2}& -\frac{\beta\|\bmu\|}{2} & -\frac{5\beta\|\bmu\|}{2} & -\frac{\beta\|\bmu\|}{2} & -\frac{\beta\|\bmu\|}{2} & -\frac{3\beta\|\bmu\|}{2} & -\frac{\|\beta\bmu\|}{2} & \frac{\|\bmu\|}{2} \end{bmatrix} &\qquad\text{for } i,j\in C_{-1}\times C_{0},\\
  \begin{bmatrix}
  -\frac{3\beta\|\bmu\|}{2} & -\frac{3\beta\|\bmu\|}{2} & -\frac{3\beta\|\bmu\|}{2} & -\frac{3\beta\|\bmu\|}{2} & -\frac{\beta\|\bmu\|}{2} & -\frac{\beta\|\bmu\|}{2} & -\frac{\beta\|\bmu\|}{2} & -\frac{\beta\|\bmu\|}{2} \end{bmatrix} &\qquad\text{for } i,j\in C_{0}^2.\\
\end{align*}
Then, 
\begin{align*}
    \boldr^T\cdot\LeakyRelu(\bDelta_{ij})=
    \begin{cases}
    10R\beta\|\bmu\|(1\pm o(1))&\text{if } i,j\in C_1^2\\
    -2R\|\bmu\|(1+2\beta)(1\pm o(1)) &\text{if } i,j\in C_{-1}^2\\
    -2R\|\bmu\|(1+5\beta)(1\pm o(1)) &\text{if } i\in C_1,\; j\in C_{-1}\\
    10R\beta\|\bmu\|(1\pm o(1)) &\text{if } i\in C_{-1},\; j\in C_1\\
    -\frac{R}{2}\|\bmu\|(1-21\beta)(1\pm o(1)) &\text{if } i\in C_{0},\; j\in C_1\\
    -\frac{R}{2}\|\bmu\|(1-11\beta)(1\pm o(1)) &\text{if } i\in C_{0},\; j\in C_{-1}\\
    -\frac{R}{2}\|\bmu\|(1-5\beta)(1\pm o(1)) &\text{if } i\in C_{1},\; j\in C_0\\
    -\frac{R}{2}\|\bmu\|(1-5\beta)(1\pm o(1)) &\text{if } i\in C_{-1},\; j\in C_0\\
    2R\beta\|\bmu\|(1\pm o(1)) &\text{if } i,j\in C_0^2
    \end{cases},
\end{align*}
and the proof is complete.
\end{proof}

Next we will define our high probability event.
\begin{definition}\label{def:Event-E'}
 $\calbE'\eqdef\calbE\cap \calbE^*$, where $\calbE^*$ is the event that for a fixed $\bw\in \R^d$, all $i \in [n]$ satisfy $| \bw^T\bX_i-\Ex[\bw^T\bX_i]|\le 10\sigma\|\bw\|_2\sqrt{\log n}$.

\end{definition}
The following lemma is obtained by using \cref{lem:E} with standard Gaussian concentration and a union bound.

\begin{lemma}\label{lem:E'} With probability at least $1-1/\poly(n)$ event $\calbE'$ holds.
\end{lemma}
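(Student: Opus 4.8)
The plan is to prove that $\calbE^*$ holds with high probability via a one-dimensional Gaussian concentration bound, and then intersect with \cref{lem:E} through a union bound to obtain $\calbE'$. Since \cref{lem:E} already controls $\calbE$, the only new ingredient needed is the tail estimate for the single projected coordinate appearing in \cref{def:Event-E'}.

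First I would fix the deterministic vector $\bw\in\R^d$ from \cref{def:Event-E'}. For each $i\in[n]$, since $\bX_i\sim\Normal(\eps_i\bmu,\sigma^2\bI)$, the projection $\bw^T\bX_i$ is a scalar Gaussian whose variance is $\bw^T(\sigma^2\bI)\bw=\sigma^2\|\bw\|_2^2$ (the diagonal covariance contributes exactly this, regardless of the random label $\eps_i$, which only shifts the mean). Hence the centered quantity $\bw^T\bX_i-\Ex[\bw^T\bX_i]$ is a mean-zero Gaussian with standard deviation $\sigma\|\bw\|_2$.

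Next I would apply the standard Gaussian tail bound: for $Z\sim\Normal(0,s^2)$ one has $\Pr[\,|Z|>ts\,]\le 2\exp(-t^2/2)$. Setting $s=\sigma\|\bw\|_2$ and $t=10\sqrt{\log n}$, the threshold $ts$ equals precisely $10\sigma\|\bw\|_2\sqrt{\log n}$, so each index violates the bound defining $\calbE^*$ with probability at most $2\exp(-50\log n)=2n^{-50}$. A union bound over the $n$ indices then gives $\Pr[(\calbE^*)^c]\le 2n^{-49}=1/\poly(n)$.

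Finally, since $\calbE'=\calbE\cap\calbE^*$, its complement is contained in $\calbE^c\cup(\calbE^*)^c$, so one more union bound together with \cref{lem:E} yields $\Pr[(\calbE')^c]\le\Pr[\calbE^c]+\Pr[(\calbE^*)^c]\le 1/\poly(n)$, which is exactly the claim. There is no substantial obstacle here: the only points requiring care are computing the projected variance $\sigma^2\|\bw\|_2^2$ correctly and checking that the constant $10$ makes the exponent $50\log n$ comfortably dominate the $\log n$ loss incurred by the union bound, which it does with a large margin.
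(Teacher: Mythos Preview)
Your proof is correct and follows precisely the approach the paper indicates: standard Gaussian concentration on the one-dimensional projection $\bw^T\bX_i$, a union bound over $i\in[n]$ to control $\calbE^*$, and then combining with \cref{lem:E} via another union bound. The paper's own proof is just the one-line sketch ``obtained by using \cref{lem:E} with standard Gaussian concentration and a union bound,'' which your write-up fleshes out faithfully.
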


\begin{corollary}\label{cor:gammas}
Suppose that $p,q$ satisfy \cref{ass:p_q}, $\|\bmu\|=\omega(\sigma\sqrt{\log n})$ and fix $R\in \R$. Then, there exists a choice of attention architecture $\Psi$ such that with probability $1-o_n(1)$ over $(\bA,\bX)\sim\CSBM(n,p,q,\bmu,\sigma^2)$ it holds that
\begin{align*}
    \gamma_{ij}=
    \begin{cases}
    \frac{3}{np}(1\pm o(1)) & \text{if}\; i,j\in C_0^2\cup C_1^2\\
    \frac{3}{nq}(1\pm o(1)) & \text{if}\; i,j\in C_{-1}\times C_1\\
    \frac{3}{nq}\exp(-\Theta(R\|\bmu\|)) & \text{if}\; i,j\in C_{-1}\times C_{-1}\cup C_0 \\
    \frac{3}{np}\exp(-\Theta(R\|\bmu\|)) & \text{otherwise}
    \end{cases},
\end{align*}
where $R$ is a parameter of the architecture.
\end{corollary}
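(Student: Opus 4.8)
The plan is to compute the softmax normalization directly, reading the closed-form attention scores off \cref{lem:edge_separation} and combining them with the class-size and degree concentration packaged in the high-probability event $\calbE'$. First I would fix the two-layer architecture $\Psi$ (together with its parameters $\beta$ and $R$) supplied by \cref{lem:edge_separation}, and condition on $\calbE'$ from \cref{def:Event-E'}. This event simultaneously controls the class counts $|N_i\cap C_k|$ in every neighborhood (through $\calbE$) and, via the projection bound in $\calbE^*$, the value of every score up to the $(1\pm o(1))$ factor stated in \cref{lem:edge_separation}. By \cref{lem:E'} it holds with probability $1-1/\poly(n)$, which already delivers the claimed $1-o_n(1)$.

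Next I would expand the denominator $Z_i \eqdef \sum_{\ell\in N_i^*}\exp(\Psi(\bX_i,\bX_\ell))$ by grouping neighbors $\ell$ according to their class. Since every score depends only on the ordered pair $(\mathrm{class}(i),\mathrm{class}(\ell))$, the degree concentration in \cref{def:Event-E}, namely $|N_i\cap C_k|=\tfrac{np}{3}(1\pm o(1))$ when $k$ is $i$'s own class and $\tfrac{nq}{3}(1\pm o(1))$ otherwise, collapses $Z_i$ into a weighted sum of three exponentials with explicitly known exponents.

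The crux is identifying the dominant exponential. Reading off \cref{lem:edge_separation}, for each source class exactly one score is positive of order $R\beta\|\bmu\|$ while the other two are negative of order $R\|\bmu\|$; because $\|\bmu\|=\omega(\sigma\sqrt{\log n})\to\infty$, the largest-score term swamps the rest and $Z_i$ equals that single term up to a relative error $\exp(-\Theta(R\|\bmu\|))$. For $i\in C_1$ and $i\in C_0$ the maximizing neighbor lies in $i$'s own class (neighborhood count $\tfrac{np}{3}$), whereas for $i\in C_{-1}$ the maximizing score is the $C_{-1}\times C_1$ entry of \cref{lem:edge_separation}, attained at a $C_1$ neighbor (neighborhood count $\tfrac{nq}{3}$); this asymmetry is precisely what produces the two different normalizations $\tfrac{3}{np}$ and $\tfrac{3}{nq}$. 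Dividing each numerator $\exp(\Psi(\bX_i,\bX_j))$ by $Z_i$ then yields $\tfrac{3}{np}(1\pm o(1))$ or $\tfrac{3}{nq}(1\pm o(1))$ when $j$ lies in the dominant class, and the same prefactor times $\exp(-\Theta(R\|\bmu\|))$ otherwise, matching the four cases verbatim.

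The main obstacle I expect is error bookkeeping rather than any new idea: I must propagate the multiplicative $(1\pm o(1))$ on the scores and the $(1\pm O(1/\sqrt{\log n}))$ on the class counts through the exponentials and the sum, and verify that the subdominant classes contribute only a relative $\exp(-\Theta(R\|\bmu\|))$. This reduces to checking that the score gaps, which are $\Theta(R\|\bmu\|)$, dominate the per-edge fluctuations, which are $O(R\sigma\sqrt{\log n})$ through the projection bound in $\calbE^*$; the hypothesis $\|\bmu\|=\omega(\sigma\sqrt{\log n})$ is exactly what guarantees this separation and lets all fluctuations be absorbed into the $\Theta(\cdot)$ inside the exponent and the $(1\pm o(1))$ in the dominant prefactor.
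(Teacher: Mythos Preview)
Your proposal is correct and matches the paper's approach exactly; the paper's own proof is a single sentence invoking \cref{lem:edge_separation}, \cref{lem:E'}, and the definition of $\gamma_{ij}$. The only detail you leave implicit is the choice of the LeakyReLU slope: your claim that exactly one score per source class is positive requires $\beta$ small enough (the paper fixes $\beta<1/25$), since for instance the $C_0\times C_1$ score $-\tfrac{R}{2}\|\bmu\|(1-21\beta)$ flips sign at $\beta=1/21$.
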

\begin{proof}The proof is immediate. First applying the ansatz from \cref{lem:edge_separation} with $\beta<1/25$, Lemma~\ref{lem:E'} and a union bound.  Using the definition of $\gamma_{ij}$ concludes the proof. 
\end{proof}

Next, we prove \cref{prop:gat}  that the model distinguish nodes from $C_0$ for any choice of $p,q$ satisfying \cref{ass:p_q}. We restate the theorem for convince.
\begin{theorem}[Formal restatement of \cref{prop:gat}]\label{thm:main_GAT} Suppose that $p,q$ satisfy \cref{ass:p_q} and $\|\bmu\|_2=\omega(\sigma \sqrt{\log n})$. Then, there exists a choice of attention architecture $\Psi$ such that with probability at least $1-o_n(1)$ over the data $(\bX,\bA) \sim \CSBM(n,p,q,\bmu,\sigma^2)$, the estimator
\[\hat{x}_i\eqdef\sum_{j\in N_i}\gamma_{ij}\tilde{\bw}^T\bX_j+b\;\;\text{where}\;\tilde\bw=\bmu/\|\bmu\|,\;b=-\|\bmu\|/2\]
satisfies $\hat{x}_i<0$ if and only if $i\in C_0$.
\end{theorem}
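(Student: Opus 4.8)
The plan is to feed the attention coefficients characterised in \cref{cor:gammas} into the estimator $\hat{x}_i=\sum_{j\in N_i}\gamma_{ij}\tilde{\bw}^T\bX_j+b$ and to evaluate it case by case according to which of $C_0,C_1,C_{-1}$ contains the receiver node $i$. First I would fix the ansatz constants ($\beta<1/25$ and a scale $R$ to be chosen below, using the architecture $\Psi$ supplied by \cref{cor:gammas}) and condition on the event $\calbE'$, which by \cref{lem:E'} holds with probability $1-1/\poly(n)$. On $\calbE'$ two facts are available simultaneously: the neighbourhood counts concentrate, so each node has $\tfrac{np}{3}(1\pm o(1))$ same-class neighbours and $\tfrac{nq}{3}(1\pm o(1))$ neighbours in each other class; and, applying $\calbE^*$ with the fixed vector $\bw=\tilde{\bw}$ (so $\|\tilde{\bw}\|_2=1$), the projected features satisfy $\tilde{\bw}^T\bX_j=\eps_j\|\bmu\|\pm 10\sigma\sqrt{\log n}$ for every $j$.

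Next I would split $\hat{x}_i$ along the neighbour classes and substitute the $\gamma_{ij}$ from \cref{cor:gammas}. For $i\in C_0$, the only unsuppressed weights $\tfrac{3}{np}$ land on $C_0$ neighbours, whose projected features centre at $0$, while the $C_{\pm1}$ neighbours carry the factor $e^{-\Theta(R\|\bmu\|)}$ and their $\pm\|\bmu\|$ contributions cancel to leading order; hence $\hat{x}_i=0\pm O(\sigma\sqrt{\log n})+b=-\tfrac{\|\bmu\|}{2}+o(\|\bmu\|)<0$. For $i\in C_1$ the dominant weight again sits on the same-class ($C_1$) neighbours, centred at $+\|\bmu\|$, so $\hat{x}_i=\|\bmu\|-\tfrac{\|\bmu\|}{2}+o(\|\bmu\|)>0$. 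The crucial case is $i\in C_{-1}$, where \cref{cor:gammas} instead places the dominant weight $\tfrac{3}{nq}$ on the \emph{$C_1$} neighbours; the estimator therefore reads off a $+\|\bmu\|$ signal and $\hat{x}_i=\tfrac{\|\bmu\|}{2}+o(\|\bmu\|)>0$, correctly grouping $C_{-1}$ with $C_1$ rather than with $C_0$.

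To convert these approximate identities into the strict inequalities defining correct classification, I would bound two error terms on $\calbE'$. The accumulated feature noise is controlled because on each branch the relevant coefficients sum to $1\pm o(1)$ and every per-node deviation is at most $10\sigma\sqrt{\log n}$, so the net noise is $O(\sigma\sqrt{\log n})=o(\|\bmu\|)$ by the hypothesis $\|\bmu\|=\omega(\sigma\sqrt{\log n})$. The suppressed cross-class terms contribute at most $O\!\big(\max\{\tfrac{p}{q},\tfrac{q}{p}\}\,e^{-\Theta(R\|\bmu\|)}\|\bmu\|\big)$, so it suffices to pick the free scale $R$ large enough that $R\|\bmu\|=\omega(\log n)$; this is always possible because $p/q$ and $q/p$ are at most $O(n/\log^2 n)$ under \cref{ass:p_q}, and enlarging $R$ only sharpens the softmax without affecting the concentration of the pre-activations $\bDelta_{ij}$. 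Combining the three cases gives $\hat{x}_i<0\iff i\in C_0$ on $\calbE'$, and since $\Pr[\calbE']\ge 1-o_n(1)$ the theorem follows.

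The step I expect to be the main obstacle is the $i\in C_{-1}$ case together with the choice of $R$: one must verify quantitatively that routing the dominant attention onto the $C_1$ neighbours beats the competing same-class ($C_{-1}$) term $-\tfrac{p}{q}e^{-\Theta(R\|\bmu\|)}\|\bmu\|$, which carries the \emph{wrong} sign, uniformly over all admissible $p,q$. This is exactly where the interplay between the exponential suppression and the degree ratio $p/q$ must be made precise, and it is what dictates how large $R$ needs to be.
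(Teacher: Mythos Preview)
Your proposal is correct, and in fact takes a somewhat more elementary route than the paper's own proof.

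\textbf{What the paper does.} The paper separates the argument into two stages: it first computes $\Ex[\hat{x}_i\mid\calbE']$ (obtaining $\pm\tfrac{\|\bmu\|}{2}(1\pm o(1))$ according to whether $i\in C_0$ or $i\in C_{-1}\cup C_1$), and then establishes concentration of $\hat{x}_i$ around this mean via a Lipschitz/sub-Gaussian argument (\cref{lem:hat_x_sub_g}) followed by a maximal inequality (\cref{lem:sub-g-concentration}). The Lipschitz step views $\hat{x}_i$ as a function of the underlying Gaussian field $\bg$, bounds its Lipschitz constant on $\calbE'$ by $O(\sigma/\sqrt{np})$ (or $O(\sigma/\sqrt{nq})$ for $i\in C_{-1}$), and concludes sub-Gaussianity with that parameter.

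\textbf{What you do.} You stay entirely on the event $\calbE'\cap\{\text{\cref{cor:gammas} holds}\}$ and argue deterministically: write $\tilde{\bw}^T\bX_j=\eps_j\|\bmu\|+\delta_j$ with $|\delta_j|\le 10\sigma\sqrt{\log n}$ (this is exactly $\calbE^*$), split $\hat{x}_i$ into a signal part $\sum_j\gamma_{ij}\eps_j\|\bmu\|$ and a noise part $\sum_j\gamma_{ij}\delta_j$, and bound the latter by the triangle inequality using $\sum_j\gamma_{ij}=1$. This gives noise $O(\sigma\sqrt{\log n})=o(\|\bmu\|)$ directly from the hypothesis, with no need for the Lipschitz machinery.

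\textbf{Comparison.} Both arguments are valid. Yours is shorter and avoids \cref{lem:hat_x_sub_g} and \cref{lem:sub-g-concentration} altogether; the price is a cruder fluctuation bound ($O(\sigma\sqrt{\log n})$ rather than $O(\sigma\sqrt{\log n}/\sqrt{np})$), which however is already sufficient for the stated theorem. Your handling of the scale $R$---insisting on $R\|\bmu\|=\omega(\log n)$ to kill the $\max\{p/q,q/p\}$ factor uniformly over all $p,q$ permitted by \cref{ass:p_q}---is actually more careful than the paper's stated choice $R\|\bmu\|=\omega(1)$, and your identification of the $i\in C_{-1}$ case (where attention must reroute to $C_1$ against a wrong-sign same-class term of size $\tfrac{p}{q}e^{-\Theta(R\|\bmu\|)}\|\bmu\|$) as the place where this matters is exactly right.
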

\begin{proof} Let $\Psi$ be the architecture from \cref{cor:gammas} and let $R$ satisfy $R\|\bmu\|_2 = \omega(1)$. We will compute the mean and variance of the estimator $\hat{x}_i$ conditioned on $\calbE'$. Suppose that $i\in C_0$. By using \cref{cor:gammas}, \cref{def:Event-E'} and our assumption on $\|\bmu\|$ and $R$, we have
\[\max\left\{\frac{3}{np}\exp(-\Theta(R\|\bmu\|)),\frac{3}{nq}\exp(-\Theta(R\|\bmu\|))\right\}=o\left(\frac{1}{n(p+2q)}\right),\]
and therefore
\begin{align*}
    \Ex\left[\hat x_i\mid \calbE'\right]&=\Ex\left[\sum_{k\in\{-1,0,1\}}\sum_{j\in N_i\cap C_k}\gamma_{ij}\tilde \bw^T\bX_j\;\mid\calbE'\right]-\frac{\|\bmu\|}{2}\\
    &=\Ex[|C_0\cap N_i|\mid\calbE']\left(\pm\frac{3}{np}(1\pm o(1))\cdot 10\sigma\sqrt{\log n}\right)\\
    &\qquad+\Ex[|C_1\cap N_i|\mid\calbE']\left( o\left(\frac{1}{n(p+2q)}\right)\cdot(\|\bmu\|\pm 10\sigma\sqrt{\log n})\right)\\
    &\qquad +\Ex[|C_{-1}\cap N_i|\mid \calbE']\left( o\left(\frac{1}{n(p+2q)}\right)\cdot(-\|\bmu\|\pm 10\sigma\sqrt{\log n})\right)-\frac{\|\bmu\|}{2}\\
    &=-\frac{\|\bmu\|}{2}(1\pm o(1)).
\end{align*}
By similar reasoning we have that for $i\in C_{-1}\cup C_1$, $\Ex\left[\hat x_i\mid\calbE'\right]=\frac{\|\bmu\|}{2}(1\pm o(1))$.

Next, we claim that for each $i \in [n]$ the random variable $\hat{x}_i$ given $\calbE'$ is sub-Gaussian with a small sub-Gaussian constant compared to the above expectation. The following lemma is a straightforward adaptation of Lemma A.11 in~\citet{kimongat}, and we provide its proof for completeness.

\begin{lemma}\label{lem:hat_x_sub_g} Conditioned on $\calbE'$, the random variables $\{\hat{\bx}_i\}_{i}$ are sub-Gaussian with parameter $\tilde\sigma_i^2=O\left(\frac{\sigma^2}{np}\right)$ if $i\in C_0\cup C_1$ and $\tilde\sigma_i^2=O\left(\frac{\sigma^2}{nq}\right)$ otherwise.
\end{lemma}
\begin{proof}
Fix $i \in [n]$, and write $\bX_i = \eps_i\bmu + \sigma \bg_i$ where $\bg_i \sim \Normal(0, \bI_d)$, and $\eps_i$ denotes the class membership. Consider $\hat{x}_i$ as a function of $\bg = [\bg_1 \circ \bg_2 \circ\cdots\circ \bg_n] \in \R^{nd}$, where $\circ$ denotes vertical concatenation. Namely, consider the function
\[
	\hat{x}_i = f_i(\bg) \eqdef \sum_{j \in N_i} \gamma_{ij}(\bg) \, \tilde\bw^T(\eps_j\bmu + \sigma \bg_j)-\|\bmu\|/2, \quad i \in [n].
\]
Since $\bg \sim \Normal(0, \bI_{nd})$, proving that $\hat{x}_i$ given $\calbE'$ is sub-Gaussian for each $i \in [n]$, reduces to showing that the function $f_i : \R^{nd} \rightarrow \R$ is Lipschitz over  $E \subseteq \R^{nd}$ defined by $\calbE'$ and the relation $\bX_i = \eps_i\bmu + \sigma \bg_i$. That is, $E \eqdef \left\{\bg \in \R^{nd} \ \big| \ |\tilde\bw^T \bg_i| \le 10\sqrt{\log n}, \forall i \in [n] \right\}$.
Specifically, we show that conditioning on the event $\calbE'$ (which restricts  $\bg \in E$), the Lipschitz constant $L_{f_i}$ of $f_i$ satisfies $L_{f_i} = O\left(\frac{\sigma}{\sqrt{np}}\right)$ for $i\in C_0\cup C_1$ and $L_{f_i}=O\left(\frac{\sigma}{nq}\right)$ otherwise, and hence proving the claim.

First note that event $\calbE'$ induce a transformation which transforms isotropic Gaussians to truncated Gaussians vectors. Similarly to \citet{kimongat}, we can show that this transformation can be obtained by a push-forward mapping whose Lipschitz constant is $1$.
\begin{equation}
\label{eq:pushforward}
	\bar{\bv} = M(\bv) \eqdef [\tau(\bv_1), \tau(\bv_2), \ldots, \tau(\bv_n)]^T
\end{equation}
where $\tau(x) \eqdef \Phi^{-1}((1-2c) \Phi(x) + c)$ for $c = \Phi(-10\sqrt{\log n})$.

To compute the Lipschitz constant of $f_i(\bg)$ for $i \in [n]$, let us denote $\bX = [\bX_1 \circ \bX_2 \circ \cdots \circ \bX_n]$ and consider the function
\[
	\tilde{f}_i(\bX) \eqdef \sum_{j \in N_i} \gamma_{ij}(\bX) \, \tilde\bw^T\bX_j, \quad i \in [n]
\]
Let us assume without loss of generality that $i \in C_0$ (the cases for $i\in C_1$ and $i\in C_{-1}$ are  obtained identically). Conditioning on the event $\calbE'$, which imposes the restriction that $\bX \in \tilde{E}$ where $$\tilde{E} \eqdef\left\{\bX \in \R^{nd} \ \big| \ |\bX_i - \eps_i\bmu| \le 10\sigma\sqrt{\log n}, \forall i \in [n] \right\}.$$ Conditioning on $\calbE'$ (which restricts $\bX, \bX' \in \tilde{E}$), using \cref{cor:gammas} and recalling that $R$ satisfies $R\|\bmu\|_2 = \omega(1)$, we get\footnote{We drop the $(1\pm o(1))$ in the first line of the computation for compactness and use $\simeq$ as notation.}
\begin{align*}
	&\left|\tilde{f}_i(\bX) - \tilde{f}_i(\bX')\right|\\
	&\simeq \left|\sum_{j \in N_i \cap C_0} \frac{3}{np} \tilde\bw^T(\bX_j - \bX_j') + \sum_{j \in N_i \cap C_1} \frac{3}{np}\cdot e^{-\Theta(R\|\bmu\|_2)} \tilde\bw^T(\bX_j - \bX_j') +\sum_{j \in N_i \cap C_{-1}} \frac{3}{np}\cdot e^{-\Theta(R\|\bmu\|_2)} \tilde\bw^T(\bX_j - \bX_j')\right|\\
	&= \left|\begin{bmatrix*}[l] \frac{3}{np}(1 \pm o(1)) \tilde\bw & \mbox{if} \ j \in N_i \cap C_0 \\  \frac{3}{np}\exp(-\Theta(R\|\bmu\|_2))(1 \pm o(1)) \tilde\bw & \mbox{if} \ j \in N_i \cap C_1 \\\frac{3}{np}\exp(-\Theta(R\|\bmu\|_2))(1 \pm o(1)) \tilde\bw & \mbox{if} \ j \in N_i \cap C_{-1} \\
	0 & \mbox{if} \ j \notin N_i \end{bmatrix*}^T_{j \in [n]} \left(\bX-\bX'\right)\right|\\
	&\le \left\|\begin{bmatrix*}[l] \frac{3}{np}(1 \pm o(1)) \tilde\bw & \mbox{if} \ j \in N_i \cap C_0 \\  \frac{3}{np}\exp(-\Theta(R\|\bmu\|_2))(1 \pm o(1)) \tilde\bw & \mbox{if} \ j \in N_i \cap C_1 \\ \frac{3}{np}\exp(-\Theta(R\|\bmu\|_2))(1 \pm o(1)) \tilde\bw & \mbox{if} \ j \in N_i \cap C_{-1} \\
	0 & \mbox{if} \ j \notin N_i \end{bmatrix*}_{j \in [n]} \right\|_2 \left\|\bX - \bX'\right\|_2\\
	&\le \sqrt{\frac{3}{np}}(1 + o(1))\|\tilde\bw\|_2 \left\|\bX - \bX'\right\|_2\\
	&= \sqrt{\frac{3}{np}}(1 + o(1))\left\|\bX - \bX'\right\|_2.
\end{align*}
This shows the Lipschitz constant of $\tilde{f}_i(\bX)$ over $\tilde{E}$ satisfies $L_{\tilde{f}_i} = O\left(\frac{1}{\sqrt{np}}\right)$. On the other hand, by viewing $\bX$ as a function of $\bg$, it is straightforward to see that the function $h(\bg) : \R^{nd} \rightarrow \R^{nd}$ defined by $h(\bg) \eqdef \bX(\bg)$ has Lipschitz constant $L_h = \sigma$, as
\[
	\|h(\bg) -  h(\bg')\|_2 = \left\| \beps\bmu + \sigma \bg  -  (\beps \bmu + \sigma \bg') \right\|_2 = \sigma \|\bg - \bg'\|_2.
\]
Therefore, since $f_i(\bg) = \tilde{f}_i(h(\bg))$ and $\bg \in E$ if and only if $\bX \in \tilde{E}$, we have that, conditioning on $\calbE'$, the function $\hat{x}_i = f_i(\bg)$ is Lipschitz continuous with Lipschitz constant $L_{f_i} = L_{\tilde{f}_i}L_h = O\left(\frac{\sigma}{\sqrt{np}}\right)$. Since $\bg \sim \Normal(0, \bI_{nd})$, we know that $\hat{x}_i$ is sub-Gaussian with sub-Gaussian constant $\tilde\sigma^2 = L_{f_i}^2 = O\left(\frac{\sigma^2}{np}\right)$. 
\end{proof}

The following lemma will be used for bounding the misclassification probability.
\begin{lemma}[\citet{rigollet2015high}]\label{lem:sub-g-concentration} Let $x_1,\ldots,x_n$ be sub-Gaussian random variables with the same mean and sub-Gaussian parameter $\tilde\sigma^2$. Then,
\[\Ex\left[\max_{i\in [n]}\left(x_i- \Ex[x_i]\right)\right]\le \tilde\sigma\sqrt{2\log n}.\]
Moreover, for any $t>0$
\[\Prx\left[\max_{i\in [n]}\left(x_i-\Ex[x_i]\right)>t\right]\le 2n\exp\left(-\frac{t^2}{2\tilde\sigma^2}\right).\]
\end{lemma}

We bound the probability of misclassification
\[\Prx\left[\max_{i\in C_0}\hat x_i\ge 0\right]\le \Prx\left[\max_{i\in C_0}\hat x_i>t+\Ex[\hat x_i]\right],\] for $t<|\Ex[\hat x_i]|=\frac{\|\bmu\|_2}{2}(1\pm o(1))$. By \cref{lem:hat_x_sub_g}, picking $t=\Theta\left(\sigma\sqrt{\log |C_0|}\right)$ and applying \cref{lem:sub-g-concentration} implies that the above probability is $1/\poly(n)$.

Similarly for class $C_1\cup C_{-1}$ we have that the misclassification probability is 
\begin{align*}
    \Prx\left[\min_{i\in C_1\cup C_{-1}}\hat x_i\le  0\right]
    &=\Prx\left[-\max_{i\in C_1\cup C_{-1}} (-\hat x_i)\le 0\right]
    =\Prx\left[\max_{i\in C_1\cup C_{-1}}(-\hat{ x}_i)\ge 0\right]
    \\
    &\le \Prx\left[\max_{i\in C_1\cup C_{-1}}-\hat{ x_i}>t-\Ex[\hat x_i]\right],
\end{align*}
for $t<\Ex[\hat x_i]$. Picking $t=\Theta\left(\sigma\sqrt{\log |C_1\cup C_{-1}|}\right)$ and applying \cref{lem:sub-g-concentration} and a union bound over the misclassification probabilities of both classes conclude the proof of the corollary.
\end{proof}

Combining \cref{thm:main_GAT} with \cref{lem:Dist_after_conv}, we immediately get \cref{prop:cat} which we restate below.
\begin{corollary} \label{prop:cat_2} Suppose $p,q=\Omega(\log^2 n/n)$ and  $\|\bmu\|\ge \omega\left(\sigma \sqrt{\frac{(p+2q)\log n}{n(p-q)^2}}\right)$. Then, there is a choice of attention architecture $\Psi$ such that, with probability at least $1-o(1)$ over the data $(\bold{X},\bA)\sim\CSBM(n,p,q,\bmu,\sigma^2)$, \ours separates nodes $C_0$ from $C_1\cup C_{-1}$.
\end{corollary}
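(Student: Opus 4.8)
The plan is to reduce \ours separation to the GAT separation result already established in \cref{thm:main_GAT}, by viewing the convolved features as the inputs to a GAT layer and reading off the effective feature parameters from \cref{lem:Dist_after_conv}. Recall from \cref{eq:ours-score} that \ours computes its attention scores on the convolved features $\conv\bX_i=(\bD^{-1}\bA\bX)_i$ rather than on the raw features $\bX_i$. By \cref{lem:Dist_after_conv}, on the high-probability event $\calbE$ we have
\[
\conv\bX_i\sim\Normal\!\left(\eps_i\cdot\tfrac{p-q}{p+2q}\bmu,\;\tfrac{\sigma^2}{n(p+2q)}\right),\qquad i\in[n].
\]
Thus the convolved features obey exactly the same Gaussian mixture structure as the original CSBM feature model, but with effective mean $\bmu'\eqdef\frac{p-q}{p+2q}\bmu$ and effective variance $\sigma'^2\eqdef\frac{\sigma^2}{n(p+2q)}$.

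Next I would apply \cref{thm:main_GAT} verbatim, but with $(\bmu',\sigma')$ in place of $(\bmu,\sigma)$ and with $\conv\bX$ in place of $\bX$. Concretely, one uses the same attention ansatz $\Psi$ from \cref{lem:edge_separation} (with $\tilde\bw=\bmu/\|\bmu\|$, noting $\bmu'$ is a scalar multiple of $\bmu$), so that \cref{cor:gammas} produces the same separation of the coefficients $\gamma_{ij}$, and the estimator $\hat x_i=\sum_{j\in N_i}\gamma_{ij}\tilde\bw^T\conv\bX_j-\|\bmu'\|/2$ separates $C_0$ from $C_{-1}\cup C_1$ provided the hypothesis of \cref{thm:main_GAT} holds for the effective parameters, namely $\|\bmu'\|=\omega(\sigma'\sqrt{\log n})$. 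Substituting $\|\bmu'\|=\frac{|p-q|}{p+2q}\|\bmu\|$ and $\sigma'=\frac{\sigma}{\sqrt{n(p+2q)}}$, this condition becomes
\[
\frac{|p-q|}{p+2q}\|\bmu\|=\omega\!\left(\frac{\sigma\sqrt{\log n}}{\sqrt{n(p+2q)}}\right)
\;\Longleftrightarrow\;
\|\bmu\|=\omega\!\left(\sigma\sqrt{\tfrac{(p+2q)\log n}{n(p-q)^2}}\right),
\]
which is precisely the stated hypothesis. A union bound over the high-probability events from \cref{lem:Dist_after_conv} and \cref{thm:main_GAT} then yields the claimed $1-o(1)$ success probability.

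The main obstacle is that \cref{thm:main_GAT} is proved for \emph{independent} features, whereas the convolved features $\{\conv\bX_i\}_i$ are correlated across nodes (neighbors share summands in their convolutions) and additionally depend on the random graph $\bA$. Carrying the argument over therefore requires checking that the concentration machinery underlying \cref{thm:main_GAT}---in particular the sub-Gaussian/Lipschitz push-forward argument of \cref{lem:hat_x_sub_g} and the validity of the event $\calbE'$---still applies to the convolved features. In substance this is not a problem: conditioned on $\bA$ the convolution $\bD^{-1}\bA$ is a fixed linear map, so $\{\conv\bX_i\}_i$ remains jointly Gaussian and the truncated-Gaussian push-forward together with the Lipschitz bounds go through with $\sigma$ replaced by $\sigma'$. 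Nonetheless, this is the step that must be spelled out with care rather than simply asserted, since the ``immediate'' combination of the two results implicitly relies on the effective feature model inheriting all the distributional properties exploited in the GAT proof.
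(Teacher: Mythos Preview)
Your proposal is correct and follows exactly the paper's approach: the paper states only that the corollary follows ``immediately'' by combining \cref{thm:main_GAT} with \cref{lem:Dist_after_conv}, and you have spelled out precisely this reduction, including the translation of the hypothesis $\|\bmu'\|=\omega(\sigma'\sqrt{\log n})$ into the stated bound on $\|\bmu\|$. Your discussion of the correlation among convolved features is a valid technical caveat that the paper glosses over; your observation that conditioning on $\bA$ makes the convolution a fixed linear map (so the Lipschitz/sub-Gaussian machinery still applies with $\sigma'$) is the right way to handle it.
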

    \clearpage

\section{Synthetic experiments} \label{app:toy_v2}

In this section, we present the complete results for the synthetic data experiments of \cref{sec:exps-synhetic-data}. First, we describe the parameterization we use for the 1-layer GCN, GAT, and \ours models; then, we verify the behavior of the normalized score function ($\gamma_{ij}$) matches that of the theory presented in \cref{cor:gammas}. In particular, we visualize the average of the following three groups of gammas (\cref{figapp:toy_v2_ansatz}):
\begin{itemize}
    \item Gammas $\gamma_{ij}$ included in $ i,j\in C_0^2\cup C_1^2$. Solid lines.
    \item Gammas $\gamma_{ij}$ included in $i,j\in C_{-1}\times C_1$. Dashed lines.
    \item The rest of gammas. Dotted lines.
\end{itemize}

For completeness, we also include the empirical results that validate \cref{prop:gat} and \cref{prop:cat}, which were discussed already in \cref{sec:exps-synhetic-data}.

\paragraph{Experimental setup.} We assume the following parametrization for the  1-layer GCN, GAT, and \ours:

\begin{equation}
    \vh_i^\prime =   \left(\sum_{j \in N_i} \gamma_{ij} \tilde\bw^T \bX_j \right) - C \cdot \|\bmu\|/2 \;,
    \label{eqapp:ansatz_layer}
\end{equation}
where $N_i$ are the set of neighbors of node $i$, $\bX_j$ are the features of node $j$---obtained from the $\CSBM$ described in \cref{sec:convolved-GAT}, and  $\vh_i^\prime$ are the logits of the prediction of node $i$. Note that for GCN we have $\gamma_{ij}= \frac{1}{|N^*_i|}$. Otherwise, we consider the following parameterization of the score function $\Psi$:
\begin{align}
    &\gamma_{ij} = \frac{\exp{(\Psi(   \vh_i,    \vh_j)) }}{\sum_{k  \in N_i^*} \exp({\Psi( \vh_i,   \vh_k))} } \quad\text{where}\quad \\
    &\Psi(\bh_i,\bh_j) \eqdef\boldr^T\cdot \LeakyRelu\left(\bS \cdot\begin{bmatrix} \tilde \bw^T \bh_i\\
\tilde \bw^T \bh_j\end{bmatrix}+\bb \right)\;.
\label{eqapp:ansatz_score}
\end{align}

For these experiments, we define the parameters $\tilde\bw$, $\bS$, $\bb$ and $\boldr$ as in the proofs in \cref{app:theory}:

\begin{align}
\tilde\bw\eqdef\frac{\bmu}{\|\bmu\|},
\qquad 
\bS\eqdef  
\begin{bmatrix}  
1 & 1  \\
-1 & -1 \\
1 & -1\\
-1 & 1\\
0 & 1\\
1 & 0\\
0 &-1 \\
-1 & 0
\end{bmatrix},
\qquad \bb\eqdef\begin{bmatrix}
-3/2\\
-3/2\\
-3/2\\
-3/2\\
-1/2\\
-1/2\\
-1/2\\
-1/2
\end{bmatrix}\cdot\|\bmu\| \cdot C, \qquad
\boldr\eqdef R \cdot \begin{bmatrix} 2 \\ -2 \\ -2 \\ 2 \\ -1 \\-1 \\-1\\-1 \end{bmatrix},
\label{eqapp:ansatz_params}
\end{align}
where  $R>0$ and  $C>0$ are arbitrary scaling parameters. Both $C$ and $R$  and input to the score function are set different for each of the models, as indicated in \cref{tabapp:ansatz-params}. In particular, we set $R=\frac{7}{\|\bmu\|}$ for both GAT and \ours such that: i) all $\gamma_{ij}$ are distinguishable as we decrease $\|\bmu\|$; and ii) we avoid numerical instabilities in the implementation when computing the exponential of $R \times \|\bmu\| $ in order to obtain $\gamma_{ij}$ (see \cref{cor:gammas}), as the exponential of small or large values leads to under/overflow issues. As for $C$, we set $C=1$ for GAT and $C={(p-q)}/{(p+2q)}$ for \ours such that we counteract the fact that the distance between classes shrink as we increase $q$, see \cref{lem:Dist_after_conv}:

\begin{table*}[!ht]
\centering
\caption{Parameters for the synthetic experiments.}
\label{tabapp:ansatz-params}
\begin{tabular}{lccc} 
\toprule
    Model & $C$ & $R$  & $ \vh_i$ \\ \midrule
    GCN & 0  &  $-$ & $-$ \\
    GAT & 1 & $\frac{7}{\|\bmu\|}$ & $\bX_i$\\ 
    \ours & $\frac{p-q}{p+2q}$ & $\frac{7}{\|\bmu\|}$ &$ \frac{1}{|N^*_i|} \sum_{k \in N_i^*}\bX_k$\\
\bottomrule
\end{tabular}
\end{table*}

Regarding the data model, we set (as described in \cref{sec:exps-synhetic-data}) $n=10000$, $p=0.5$, $\sigma=0.1$, and $d=n/\left(5\log^2(n)\right)$. We set the slope of the LeakyReLU activation to $\beta=1/5$ for the GAT and $\beta=0.01$ for \ours,  such that the proof of \cref{cor:gammas} is valid. 
As described in the main paper, to assess the sensitivity to structural noise, we present the complete results for two sets of experiments. First, we vary the noise level $q$ between \num{0} and \num{0.5}, fixing the mean vector $\bmu$. 
We test two values of~$\|\bmu\|$: the first corresponds to the \textit{easy} regime ($\|\bmu\| = 10\sigma\sqrt{2\log n} \approx 4.3$) where classes are far apart; and the second correspond to the \textit{hard} regime ($\|\bmu\| = \sigma = 0.1$) where the distance between the clusters is small. 
In the second experiment we modify instead the distance between the means, sweeping $\|\bmu\|$ in the range $\left[\sigma/20, 20\sigma\sqrt{2\log n}\right]$ which corresponds to $\left[0.005, 8.58\right]$, and thus covering the transition from the hard setting (small $\|\bmu\|$) to the easy one (large $\|\bmu\|$). In these experiments, we fix $q$ to \num{0.1} (low noise) and \num{0.3} (high noise).

\begin{figure}[!t]
\centering
\begin{subfigure}{.33\textwidth} %
  \centering
  \includegraphics[width=.99\linewidth]{images/toyv2_vary_q_True_0_300_q_R2_legend.pdf}
\end{subfigure}\\
\hfill
\begin{subfigure}{.24\textwidth}  %
  \centering
  \includegraphics[width=.99\linewidth]{images/toyv2_vary_mu_True_0_100_mu_R2_y_0.pdf}
\end{subfigure}
\begin{subfigure}{.24\textwidth} 
  \centering
  \includegraphics[width=.99\linewidth]{images/toyv2_vary_mu_True_4_300_mu_R2_0.pdf}
\end{subfigure}
\begin{subfigure}{.24\textwidth}  
  \centering
  \includegraphics[width=.99\linewidth]{images/toyv2_vary_q_True_0_100_q_R2_0.pdf}
\end{subfigure}
\begin{subfigure}{.24\textwidth} 
  \centering
  \includegraphics[width=.99\linewidth]{images/toyv2_vary_q_True_0_300_q_R2_0.pdf}
\end{subfigure}
\hfill
\begin{subfigure}{.55\textwidth} %
  \centering
  \includegraphics[width=.99\linewidth]{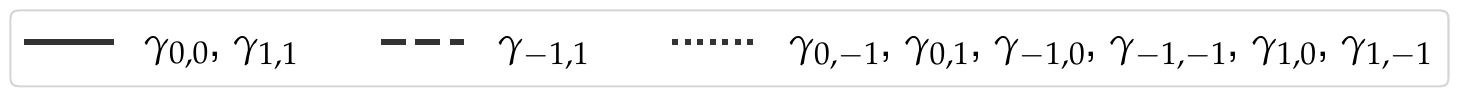}
\end{subfigure}\\
\hfill
\begin{subfigure}{.24\textwidth}  %
  \centering
  \includegraphics[width=.99\linewidth]{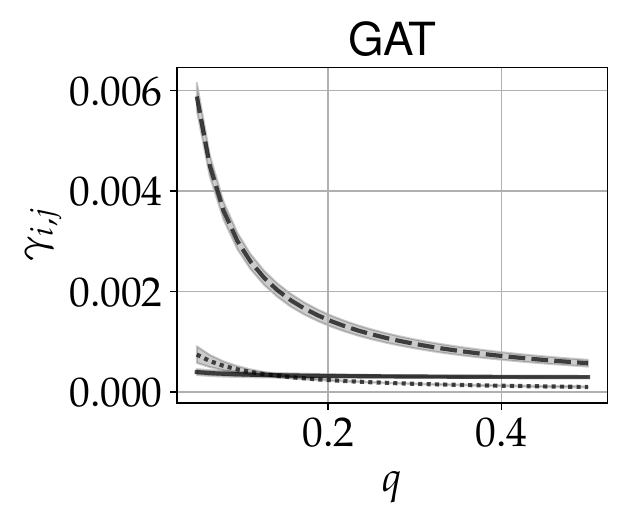}
\end{subfigure}
\begin{subfigure}{.24\textwidth} 
  \centering
  \includegraphics[width=.99\linewidth]{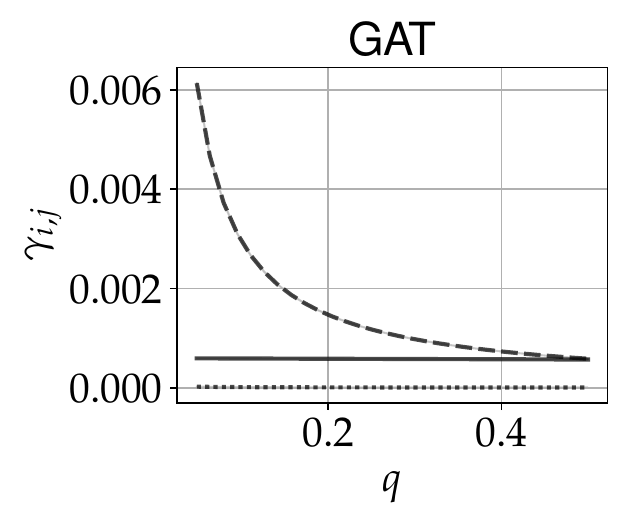}
\end{subfigure}
\begin{subfigure}{.24\textwidth}  
  \centering
  \includegraphics[width=.99\linewidth]{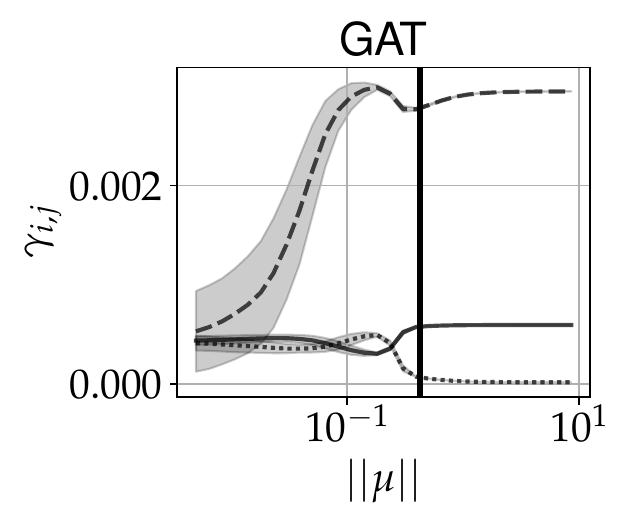}
\end{subfigure}
\begin{subfigure}{.24\textwidth} 
  \centering
  \includegraphics[width=.99\linewidth]{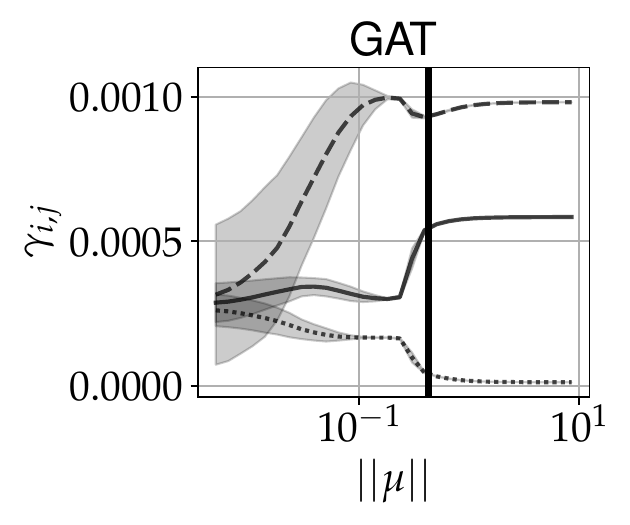}
\end{subfigure}
\hfill
\begin{subfigure}{.24\textwidth}  %
  \centering
  \includegraphics[width=.99\linewidth]{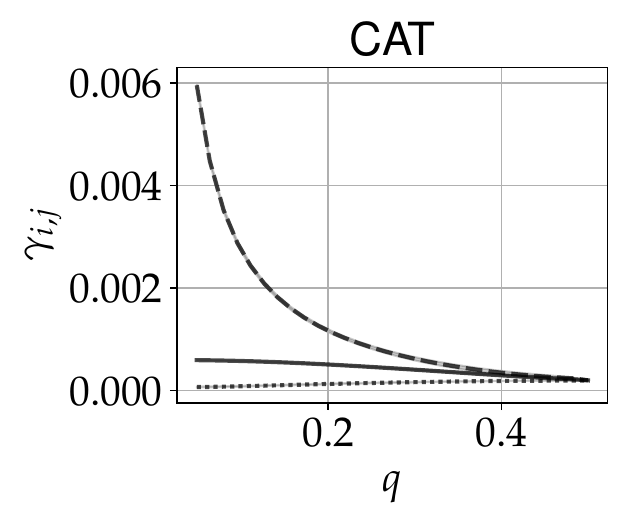}
\end{subfigure}
\begin{subfigure}{.24\textwidth} 
  \centering
  \includegraphics[width=.99\linewidth]{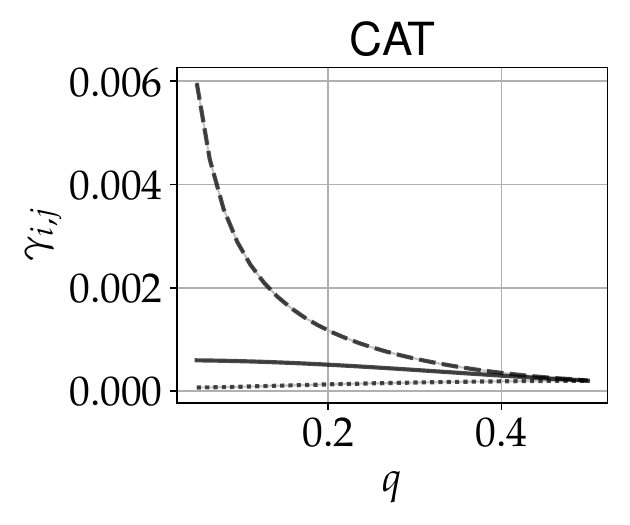}
\end{subfigure}
\begin{subfigure}{.24\textwidth}  
  \centering
  \includegraphics[width=.99\linewidth]{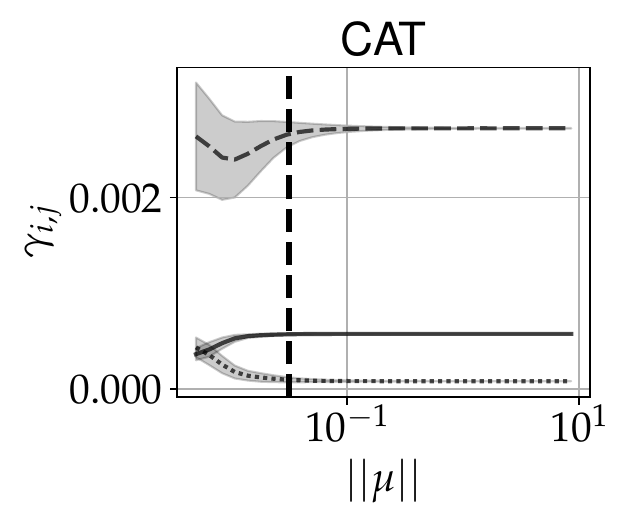}
\end{subfigure}
\begin{subfigure}{.24\textwidth} 
  \centering
  \includegraphics[width=.99\linewidth]{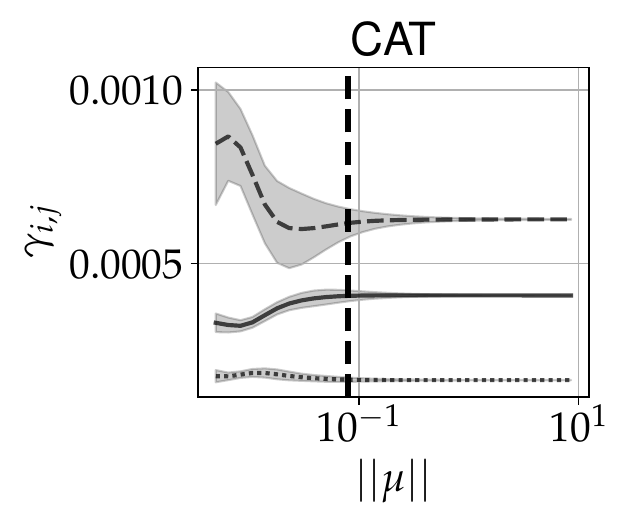}
\end{subfigure}
\hfill
\caption{Synthetic data results. On the top row,  we show the node classification, and in the following two rows we show the $\gamma_{ij}$ values for GAT and \ours respectively. In the two left-most figures, we show how the results vary with the noise level $q$ for $\|\bmu\|=0.1$ and  $\|\bmu\|=4.3$. In the two right-most figures, we show how the results vary with the norm of the means $\|\bmu\|$  for $q=0.1$ and $q=0.3$. We use two vertical lines to present the classification threshold stated in \cref{prop:gat} (solid line) and \cref{prop:cat} (dashed line).}
\label{figapp:toy_v2_ansatz}
\end{figure}

\paragraph{Results} are summarized in \cref{figapp:toy_v2_ansatz}. The top row contains the node classification performance  for each of the models (i.e., \cref{fig:toy_v2_ansatz}), the next two rows contain the $\gamma_{ij}$ values for GAT and \ours respectively.
The two left-most columns of \cref{figapp:toy_v2_ansatz} show the results for the hard and easy regimes, respectively, as we vary the noise level $q$. 
In the hard regime, we observe that GAT is unable to achieve separation for any value of $q$, whereas \ours achieves perfect classification when $q$ is small enough.
The gamma plots help shed some light on this question. For GAT, we observe that the gammas represented with the dotted and solid lines collapse for any value of $q$ (see middle plot), while this does not happen for \ours when the noise level is low (see bottom plot). This exemplifies the advantage of \ours over GAT as stated in \cref{prop:cat}.
When the distance between the means is large enough, we see that GAT achieves perfect results independently of $q$, as stated in \cref{prop:gat}. We also observe that, in this case, the gammas represented with the dotted and solid lines do not collapse for any value of $q$.
In contrast, as we increase $q$, \ours fails to satisfy the condition in \cref{prop:cat}, and therefore achieves inferior performance. 
We  note that the low performance is due to the fact that all  gammas collapse to the same value for large noise levels.

For the  second set of experiments (two right-most columns of \cref{fig:toy_v2_ansatz}), where we fix $q$ and sweep $\|\bmu\|$, 
we observe that, for both values of $q$, there exists a transition in the accuracy of both GAT and \ours as a function of $\|\bmu\|$. 
As shown in the main manuscript,  GAT achieves perfect accuracy when the distance between the means satisfies the condition in \cref{prop:gat} (solid vertical line in \cref{fig:toy_v2_ansatz}). Moreover, we can see the improvement \ours obtains over GAT. Indeed, when $\|\bmu\|$ satisfies the conditions of \cref{prop:cat} (dashed vertical line in \cref{fig:toy_v2_ansatz}), the classification threshold is improved. As we increase $q$, we see that the gap between the two vertical lines decreases, which means that the improvement decreases as $q$ increments, exactly as stated in \cref{prop:cat}. This transition from the hard regime to the easy regime is also observed in the gamma plots: we observe the largest difference in value between the different groups of lambdas for values of $\|\bmu\|$ that satisfy the condition in \cref{prop:gat} (that is to the right of the vertical lines).

\subsection{Other experiments}

\begin{figure}[!t]
\centering
\begin{subfigure}{.5\textwidth} %
  \centering
  \includegraphics[width=.99\linewidth]{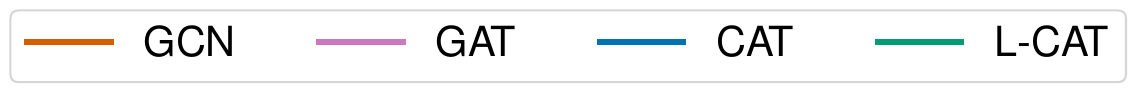}
\end{subfigure}\\
\hfill
\begin{subfigure}{.24\textwidth}  %
  \centering
  \includegraphics[width=.99\linewidth]{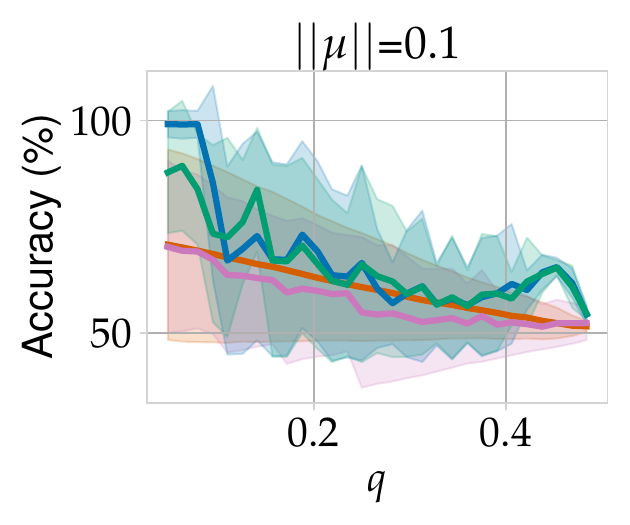}
\end{subfigure}
\begin{subfigure}{.24\textwidth} 
  \centering
  \includegraphics[width=.99\linewidth]{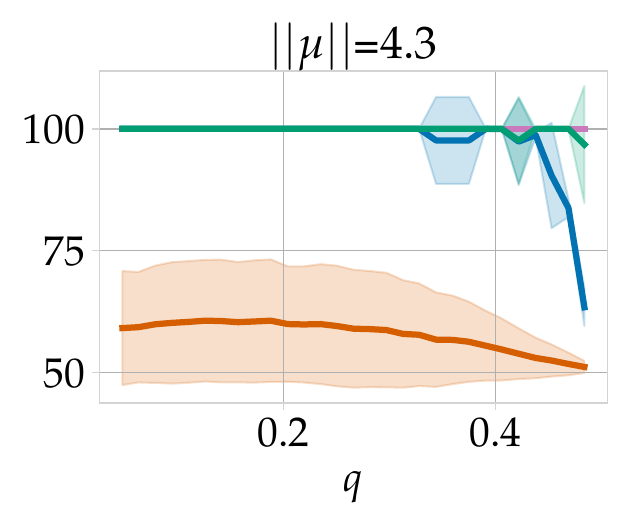}
\end{subfigure}
\begin{subfigure}{.24\textwidth}  
  \centering
  \includegraphics[width=.99\linewidth]{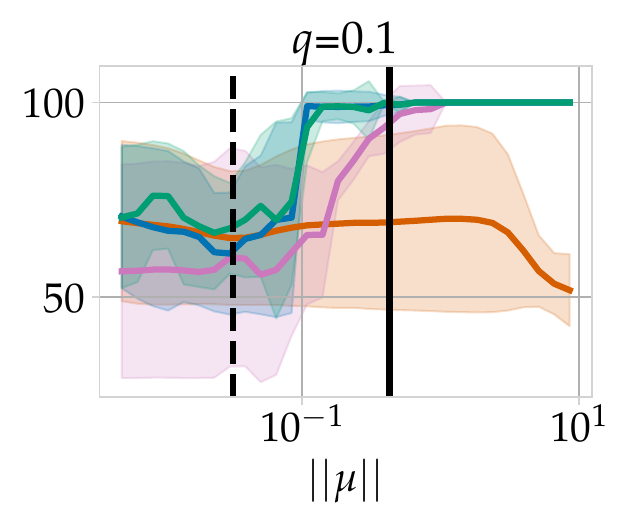}
\end{subfigure}
\begin{subfigure}{.24\textwidth} 
  \centering
  \includegraphics[width=.99\linewidth]{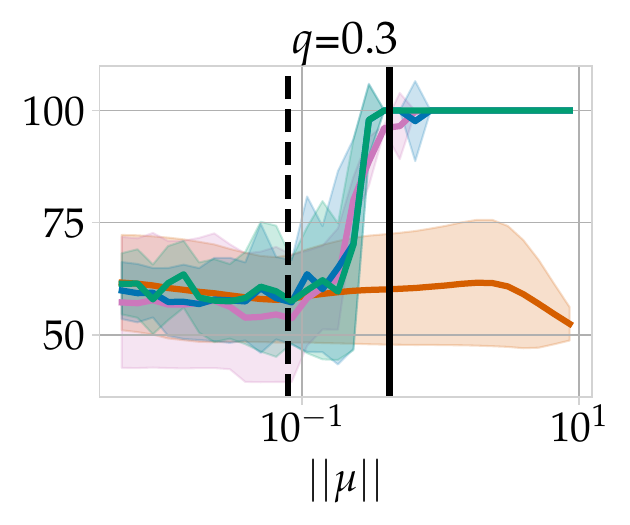}
\end{subfigure}
\hfill
\begin{subfigure}{.2\textwidth} %
  \centering
  \includegraphics[width=.99\linewidth]{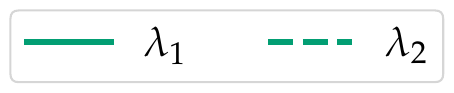}
\end{subfigure}\\
\hfill
\begin{subfigure}{.24\textwidth}  %
  \centering
  \includegraphics[width=.99\linewidth]{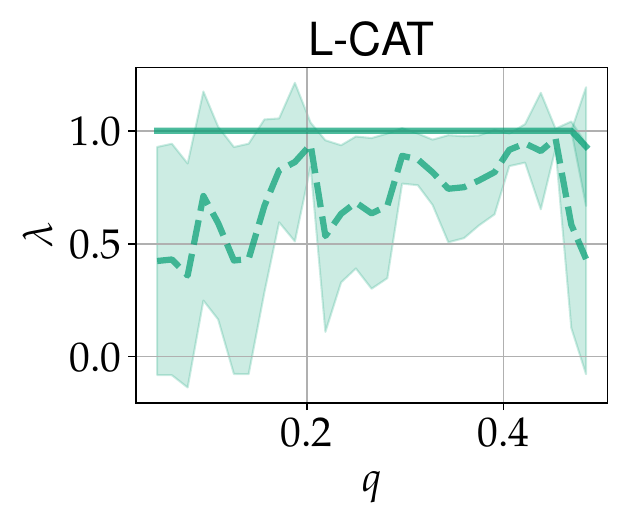}
\end{subfigure}
\begin{subfigure}{.24\textwidth} 
  \centering
  \includegraphics[width=.99\linewidth]{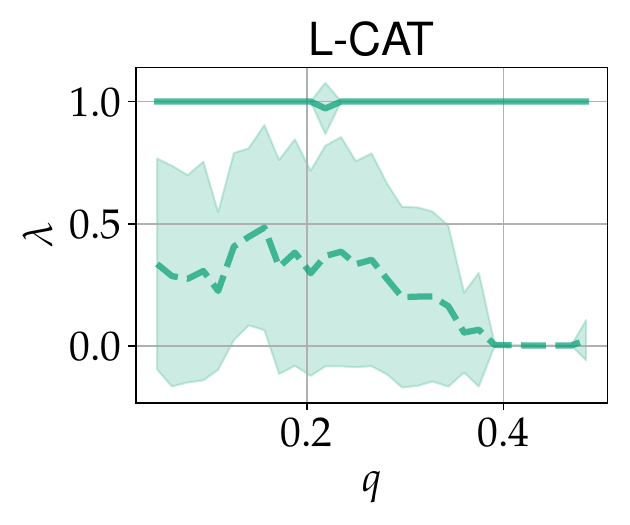}
\end{subfigure}
\begin{subfigure}{.24\textwidth}  
  \centering
  \includegraphics[width=.99\linewidth]{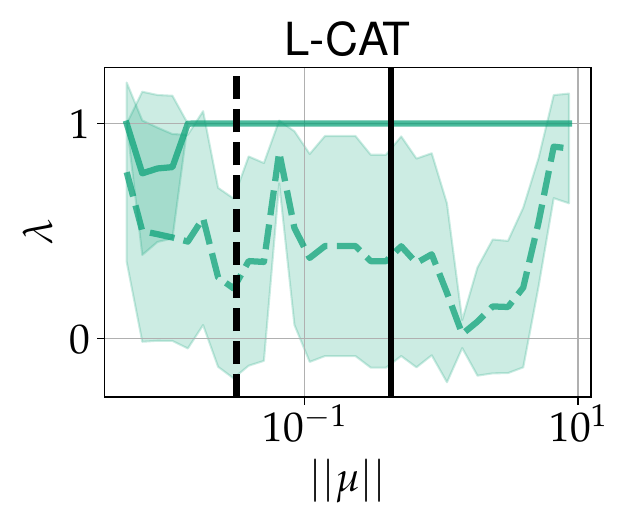}
\end{subfigure}
\begin{subfigure}{.24\textwidth} 
  \centering
  \includegraphics[width=.99\linewidth]{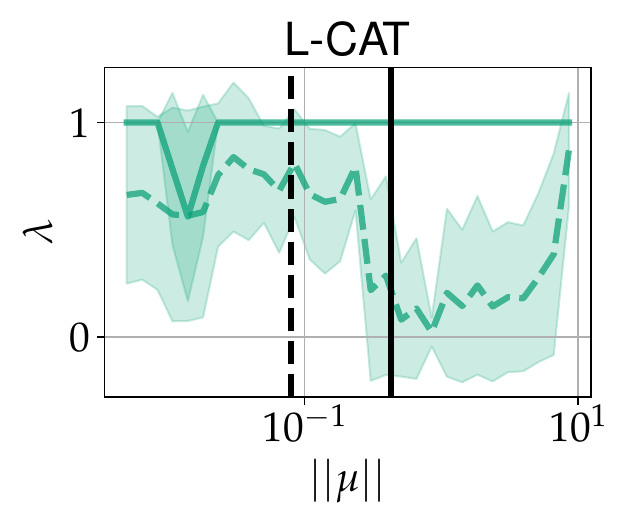}
\end{subfigure}
\hfill
\caption{Synthetic data results learning $C$, $\lambda_1$ and $\lambda_2$. On the top row,  we show the node classification accuracy, and in the bottom row we show the learned values of $\lambda_1$ and $\lambda_2$ for \ours*. In the two left-most figures, we show how the results vary with the noise level $q$ for $\|\bmu\|=0.1$ and  $\|\bmu\|=4.3$. In the two right-most figures, we show how the results vary with the norm of the means $\|\bmu\|$  for $q=0.1$ and $q=0.3$. We use two vertical lines to present the classification threshold stated in \cref{prop:gat} (solid line) and \cref{prop:cat} (dashed line).}
\label{figapp:toy_v2_learn}
\end{figure}

In the following, we extend the results for the synthetic data presented above. In particular, we aim to evaluate if \ours* is able to achieve top performance regardless of the scenario. That is, we want to evaluate if \ours* consistently performs at least as good as the best-performing model. We change the fixed-parameter setting of the previous section and, instead, we evaluate the performance of GCN, GAT, \ours and \ours* when we learn the  model-dependent parameters. 

\paragraph{Experimental setup.} We assume the same parametrization for the  1-layer GCN, GAT and \ours described in \cref{eqapp:ansatz_layer} and  \cref{eqapp:ansatz_score}. For \ours*, we add the parameters $\lambda_1$ and $\lambda_2$, as indicated in \cref{eq:ours-score-lmbda}. 
We fix the parameters shared among the models, that is, $\tilde\bw$, $\bS$, $\bb$, $\boldr$, and $R$, with the values indicated in \cref{eqapp:ansatz_params}. Different from previous experiments,  we now learn $C$ and, for \ours*, we also learn  $\lambda_1$ and $\lambda_2$.
We %
choose to fix part of the parameters (instead of learning them all) %
to keep the problem as similar as possible to the theoretical analysis we provided in \cref{sec:convolved-GAT} and \cref{app:theory}. 
If we instead learn all the parameters, it takes a single dimension of the features to be (close to) linearly separable %
to find a solution that achieves a similar performance regardless of the model, which hinders the analysis. 
This is a consequence of the probabilistic nature of the features. One way of solving this issue would be to make $n$ big enough. Instead, we opt to have a fixed $n$ and reduce the degrees of freedom of the models by fixing the parameters shared across all models.
The rest of the experimental setup matches the one %
from \cref{app:toy_v2}. 
Additionally, we use the Adam optimizer~\citep{kingma-adam} with a learning rate of $0.05$, and we train for $100$.

\paragraph{Results} are summarized in \cref{figapp:toy_v2_learn}. The top row contains the node classification performance for every model, while the bottom row contains the learned values of $\lambda_1$ (solid line) and  $\lambda_2$ (dashed line) with \ours*.
The two left-most columns of \cref{figapp:toy_v2_learn} show the results for the hard and easy regimes, respectively, as we vary the noise level $q$. 
In the hard regime, we see rather noisy results. Still, the behaviour is similar to that of \cref{figapp:toy_v2_ansatz}:  the performance of \ours degrades as we increase $q$. We also observe that, on average, \ours outperforms GAT. In this case, we observe that \ours* achieves similar performance as \ours, which can be explained by inspecting the learned values of lambda in the bottom row. We observe that $\lambda_1=1$ and $\lambda_2 \geq 0.5$ on average for all values of $q$. This indicates that \ours* is closer to \ours than to GAT.
When the distance between the means is large enough (\ie, $\|\bmu\|=4.3$), we see that GAT achieves perfect results independently of $q$ while the performance of \ours deteriorates with large values of $q$, the same trend as in \cref{figapp:toy_v2_ansatz}.
Remarkably, we observe that \ours* also achieves perfect results independently of $q$. If we inspect the lambda values, we first see that $\lambda=1$ for all $q$, thus the interpolation happens between \ours and GAT. Looking at the values of $\lambda_2$, we observe that, for small values of $q$, $\lambda_2$ is pretty noisy, which is expected since any solution achieves perfect performance. Interestingly, we have that $\lambda_2=0$ for large values of $q$, with negligible variance. This indicates that \ours* learns that it must behave like GAT in order to perform well.

For the  second set of experiments (two right-most columns of \cref{figapp:toy_v2_learn}),  we fix $q$ and sweep $\|\bmu\|$ like we did in \cref{figapp:toy_v2_ansatz}. Here, we observe a similar trend: for both values of $q$, there exists a transition in the accuracy of both GAT and \ours as a function of $\|\bmu\|$. Yet once again, we observe that \ours* consistently achieves a similar performance to the best-performing model in every situation.

\section{Dataset description} \label{app:dataset}

We present further details about the datasets used in our experiments, summarized in \cref{tab:datasets-statistics}. All datasets are undirected (or transformed to undirected otherwise) and transductive. 

The upper rows of the table refer to datasets used in \cref{sec:experiments-graphgym} taken from the PyTorch Geometric framework.\footnote{\url{https://pytorch-geometric.readthedocs.io/en/latest/modules/datasets.html}}
The following paragraphs present a short description of such datasets.

\paragraph{Amazon Computers \& Photos} are datasets taken from \citet{shchur2018pitfalls}, in which nodes represent products, and edges indicate that the products are usually bought together. The node features are a Bag of Words (BoW) representation of the product reviews. The task is to predict the category of the products.

\paragraph{GitHub} is a dataset introduced in \citet{rozemberczki2021multi}, in which nodes correspond to  developers, and edges indicate mutual follow relationship. Node features are embeddings extracted from the developer's starred repositories and profile information (e.g., location or employer). The task is to infer whether a node relates to web or machine learning development.

\paragraph{FacebookPagePage} is a dataset introduced in \citet{rozemberczki2021multi}, where nodes are Facebook pages, and edges imply mutual likes between the pages. Nodes features are text embeddings extracted from the pages' description. The task consist on identifying the page's category.

\paragraph{TwitchEN} is a dataset introduced in \citet{rozemberczki2021multi}. Here, nodes correspond to Twitch gamers, and links reveal mutual friendship. Node features are an embedding of {games liked, location, and streaming habits}. The task is to infer if a gamer uses explicit content.

\paragraph{Coauthor Physics \& CS} are datasets introduced in \citet{shchur2018pitfalls}. In this case, nodes represent authors which are connected with an edge if they have co-authored a paper. Node features are BoW representations of the keywords of the author's papers. The task consist on mapping each author to their corresponding field of study.

\paragraph{DBLP} is a dataset introduced in \citet{bojchevski2017deep} that represents a citation network. In this dataset, nodes represent papers and edges correspond to citations.  Node features are BoW representations of the keywords of the papers. The task is to predict the research area of the papers.

\paragraph{PubMed, Cora \& CiteSeer} are citation networks introduced in \citet{yang2016revisiting}. Nodes represent documents, and edges refer to citations between the documents. Node features are BoW representations of the documents. The task is to infer the topic of the documents.

The bottom rows of \cref{tab:datasets-statistics} refer to the datasets from  Open Graph Benchmark (OGB) \citep{hu2020open} \footnote{\url{https://ogb.stanford.edu/docs/nodeprop}} used in \cref{sec:ogb-experiments}. We include a short description of them in the paragraphs below.

\paragraph{ogbn-arxiv} is a citation network of computer science papers in arXiv~\citep{wang2020microsoft}. Nodes represent papers, and directed edges refer to citations among them. Node features are embeddings of the title and abstract of the papers. The task is to predict the research area of the nodes.

\paragraph{ogbn-products} contains a co-purchasing network~\citep{Bhatia16}. Nodes represent products, and links are present whenever two products are bought together. Node features are embeddings of a BoW representation of the product description. The task is to infer the category of the products.

\paragraph{ogbn-mag} is a heterogeneous network formed from a subgraph of the Microsoft Academic Graph (MAG)~\citep{wang2020microsoft}. Nodes can belong to one of these four types: authors, papers, institutions and fields of study. Moreover, directed edges belong to one of the following categories: ``author is affiliated with an institution,'' ``author has written a paper,'' ``paper cites a paper,'' and ``paper belongs to a research area.''
Only nodes that are papers contain node features, which are a text embedding of the document content. The task is to predict the venue of the nodes that are papers.

\paragraph{ogbn-proteins} is a network whose nodes represent proteins and edges indicate different types of associations among them. This dataset does not contain node features. The tasks are to predict multiple protein functions, each of them being a binary classification problem.

\begin{table*}[!htbp]
\centering
\caption{Dataset statistics. On the top part of the table, we show the datasets used in \cref{sec:experiments-graphgym}. On the bottom part of the table, we show the datasets used in \cref{sec:ogb-experiments}.}
\resizebox{\textwidth}{!}{
\begin{tabular}{lcccccccc}
\toprule
Name  &  \#Nodes  & \#Edges  & \thead{Avg.\\degree} & \thead{\#Node\\feats.}  & \thead{\#Edge\\feats.}  & \#Tasks  & Task Type \\ \midrule
AmazonComp. & 13,752  & 491,722  & 35.76  & 767  & -  & 1  & 10-class clf. \\
AmazonPhoto & 7,650  & 238,162  & 31.13  & 745  & -  & 1  & 8-class clf. \\
GitHub & 37,700  & 578,006  & 15.33  & 128  & -  & 1  & Binary clf. \\
FacebookP. & 22,470  & 342,004  & 15.22  & 128  & -  & 1  & 4-class clf. \\
CoauthorPh. & 34,493  & 495,924  & 14.38  & 8415  & -  & 1  & 5-class clf. \\
TwitchEN & 7,126  & 77,774  & 10.91  & 128  & -  & 1  & Binary clf. \\
CoauthorCS & 18,333  & 163,788  & 8.93  & 6805  & -  & 1  & 15-class clf. \\
DBLP & 17,716  & 105,734  & 5.97  & 1639  & -  & 1  & 4-class clf. \\
PubMed & 19,717  & 88,648  & 4.50  & 500  & -  & 1  & 3-class clf. \\
Cora & 2,708  & 10,556  & 3.90  & 1433  & -  & 1  & 7-class clf. \\
CiteSeer & 3,327  & 9,104  & 2.74  & 3703  & -  & 1  & 6-class clf. \\
  \midrule 
 ogbn-arxiv & 169,343  & 1,166,243  & 6.89  & 128  & -  & 1  & 40-class clf. \\
 ogbn-products & 2,449,029  & 123,718,280  & 50.52  & 100  & -  & 1  & 47-class clf. \\
ogbn-mag & 1,939,743  & 21,111,007  & 18.61 & 128  & 4  & 1  & 349-class clf. \\
 ogbn-proteins & 132,534  & 79,122,504  & 597.00  & -  & 8  & 112  & Multi-task \\
\bottomrule
\end{tabular}
\label{tab:datasets-statistics}
}
\end{table*}

\section{Real data experiments} \label{app:extra-results-node}  %

\subsection{Experimental details} \label{app:extra-results-node-details}

\paragraph{Computational resources.} We used CPU cores to run this set of experiments. In particular, for each trial, we used 2 CPU cores and up to \SI{16}{\giga\byte} of memory. We ran the experiments in parallel using a shared cluster with $10000$ CPU cores approximately.

\paragraph{General experimental setup.} 
As mentioned in \cref{sec:experiments-graphgym}, we repeat all experiments $10$ times, which correspond to $10$ different random initialization of the parameters of the GNNs. 
In all cases, we choose the model parameters with the best validation performance during training. In order to run the experiments and collect the results, we used the GraphGym framework~\citep{you2020design}, which includes the data processing and loading of the datasets, as well as the evaluation and collection of the results. We split the datasets in \SI{70}{\percent} training, \SI{15}{\percent} validation, and \SI{15}{\percent} test.

We cross-validate the number of message-passing layers in the network ($2,3,4$), as well as the learning rate ($[0.01, 0.005]$). Then, we report the results of the best validation error among the $4$ possible combinations. However, in practice we found the best performance always to use $4$ message-passing layers, and thus the only difference in configuration lies in the learning rate.

We use residual connections between the GNN layers, $4$ heads in the attention models, and the Parametric ReLU (PReLU)~\citep{he2015delving} as the nonlinear activation function. We do not use batch normalization~\citep{ioffe2015batch}, nor dropout~\citep{srivastava14a-dropout}. We use the Adam optimizer~\citep{kingma-adam} with $\beta = (0.9, 0.999)$, and an exponential learning-rate scheduler with $\gamma=0.998$. We train all the models for $2500$ epochs. Importantly, we do not use weight decay, since this will bias the solution towards $\lambda_1=0$ and $\lambda_2=1$.

We use the Pytorch Geometric~\citep{fey-pytorch-geometric} implementation of \ours* for all experiments, switching between models by properly by setting $\lambda_1$ and $\lambda_2$. 
We parametrize $\lambda_1$ and $\lambda_2$ as free-parameters in log-space that pass through a sigmoid function---\ie, $\texttt{sigmoid}(10^x)$---such that they are constrained to the unit interval, and they are learned quickly.

\subsection{Additional results}\label{app:extra-results-node-extra-results}

\Cref{tabapp:performance_node_small1} shows the results presented in the main paper (with the addition of a dense feed-forward network), while  \cref{tabapp:performance_node_small2} presents the results for the remaining datasets, with smaller average degree. 

If we focus on \cref{tabapp:performance_node_small2}, we observe that all models perform equally well, yet in a few cases \ours and \ours* are significantly better than the baselines---\eg, \ours*[v2] in \spname{CoauthorCS}, or \ours* in \spname{Cora}. 
Following a similar discussion as the one presented in the main paper, these results indicates that \ours* achieves similar or better performance than baseline models and thus, should be the preferred architecture.
 
 \paragraph{Competitive performance without the graph.} We also include in \cref{tabapp:performance_node_small1,tabapp:performance_node_small2} the performance of a feed-forward network, referred to as Dense (first row). 
 Note that the only data available to this model are the  node features, and thus no graph information is provided. 
 Therefore,  we should expect a significant drop in performance, which indeed  happens for some datasets such as \spname{Amazon Computers} ($\approx7\%$ drop), \spname{FacebookPagePage}  ($\approx 20\%$ drop), \spname{DBLP}  ($\approx 9\%$ drop) and \spname{Cora}  ($\approx 14\%$ drop). Still, we found that for other commonly used datasets the performance is similar, \eg, \spname{Coauthor Physics} and \spname{PubMed}; or \emph{it is even better} \spname{CoauthorCS}. 
These results manifest the importance of a proper benchmarking, and of carefully considering the datasets used to evaluate GNN models.

\begin{table*}[!ht]
\centering
\caption{Test accuracy (\%) of the considered convolution and attention models for different datasets (sorted by their average node degree), and averaged over ten runs. Bold numbers are statistically different to their baseline model ($\alpha = 0.05$). Best average performance is underlined.}
\label{tabapp:performance_node_small1}
\resizebox{\textwidth}{!}{
\begin{tabular}{lcccccc}
    \toprule
    Dataset & \thead{\spname{Amazon}\\\spname{Computers}} & \thead{\spname{Amazon}\\\spname{Photo}} & \spname{GitHub} & \thead{\spname{Facebook}\\\spname{PagePage}} & \thead{\spname{Coauthor}\\\spname{Physics}} & \spname{TwitchEN} \\
    Avg. Deg. & 35.76 & 31.13 & 15.33 & 15.22 & 14.38 & 10.91 \\ 
    \midrule
    Dense & 83.73 $\pm$ 0.34 & 91.74 $\pm$ 0.46 & 81.21 $\pm$ 0.30 & 75.89 $\pm$ 0.66 & 95.41 $\pm$ 0.14 & 56.26 $\pm$ 1.74 \\
    \midrule
    GCN & \best{90.59 $\pm$ 0.36} & \best{95.13 $\pm$ 0.57} & 84.13 $\pm$ 0.44 & 94.76 $\pm$ 0.19 & 96.36 $\pm$ 0.10 & 57.83 $\pm$ 1.13 \\
    \midrule
    GAT & 89.59 $\pm$ 0.61 & 94.02 $\pm$ 0.66 & 83.31 $\pm$ 0.18 & 94.16 $\pm$ 0.48 & 96.36 $\pm$ 0.10 & 57.59 $\pm$ 1.20 \\
    \ours & \better{90.58 $\pm$ 0.40} & \better{94.77 $\pm$ 0.47} & \better{84.11 $\pm$ 0.66} & \better{94.71 $\pm$ 0.30} & \best{96.40 $\pm$ 0.10} & \best{58.09 $\pm$ 1.61} \\
    \ours* & \better{90.34 $\pm$ 0.47} & \better{94.93 $\pm$ 0.37} & \better{84.05 $\pm$ 0.70} & \best{\better{94.81 $\pm$ 0.25}} & 96.35 $\pm$ 0.10 & 57.88 $\pm$ 2.07 \\
    \midrule
    GATv2 & 89.49 $\pm$ 0.53 & 93.47 $\pm$ 0.62 & 82.92 $\pm$ 0.45 & 93.44 $\pm$ 0.30 & 96.24 $\pm$ 0.19 & 57.70 $\pm$ 1.17 \\
    \ours[v2] & \better{90.44 $\pm$ 0.46} & \better{94.81 $\pm$ 0.55} & \better{84.10 $\pm$ 0.88} & \better{94.27 $\pm$ 0.31} & 96.34 $\pm$ 0.12 & 57.99 $\pm$ 2.02 \\
    \ours*[v2] & \better{90.33 $\pm$ 0.44} & \better{94.79 $\pm$ 0.61} & \best{\better{84.31 $\pm$ 0.59}} & \better{94.44 $\pm$ 0.39} & 96.29 $\pm$ 0.13 & 57.89 $\pm$ 1.53 \\
    \bottomrule
\end{tabular}
}
\end{table*}

\begin{table*}[!ht]
\centering
\caption{Test accuracy (\%) of the considered convolution and attention models for different datasets (sorted by their average node degree), and averaged over ten runs. Bold numbers are statistically different to their baseline model ($\alpha = 0.05$). Best average performance is underlined.}
\label{tabapp:performance_node_small2}
\resizebox{\textwidth}{!}{
\begin{tabular}{lccccccccccc}
\toprule
Dataset & \spname{CoauthorCS} & \spname{DBLP} & \spname{PubMed} & \spname{Cora} & \spname{CiteSeer}  \\
Avg. Deg. & 8.93 &  5.97 &  4.5 & 3.9 &  2.74 \\ \midrule
Dense & \best{94.88 $\pm$ 0.21} & 75.46 $\pm$ 0.27 & 88.13 $\pm$ 0.33 & 72.75 $\pm$ 1.72 & 73.02 $\pm$ 1.01 \\
\midrule
GCN & 93.85 $\pm$ 0.23 & 84.18 $\pm$ 0.40 & 88.50 $\pm$ 0.18 & \best{86.68 $\pm$ 0.78} & \best{75.76 $\pm$ 1.09} \\
\midrule
GAT & 93.80 $\pm$ 0.38 & 84.15 $\pm$ 0.39 & \best{88.62 $\pm$ 0.18} & 85.95 $\pm$ 0.95 & 75.40 $\pm$ 1.43 \\
\ours & 93.70 $\pm$ 0.31 & 84.10 $\pm$ 0.29 & 88.58 $\pm$ 0.25 & 85.85 $\pm$ 0.79 & 75.64 $\pm$ 0.91 \\
\ours* & 93.65 $\pm$ 0.23 & 84.13 $\pm$ 0.26 & 88.45 $\pm$ 0.32 & \better{86.66 $\pm$ 0.87} & 75.04 $\pm$ 1.12 \\
\midrule
GATv2 & 93.19 $\pm$ 0.64 & \best{84.33 $\pm$ 0.18} & 88.52 $\pm$ 0.27 & 85.65 $\pm$ 1.01 & 75.14 $\pm$ 1.20 \\
\ours[v2] & 93.51 $\pm$ 0.34 & \worse{84.15 $\pm$ 0.41} & 88.54 $\pm$ 0.29 & 85.50 $\pm$ 0.94 & 74.68 $\pm$ 1.30 \\
\ours*[v2]   & \better{93.65 $\pm$ 0.20} & 84.31 $\pm$ 0.31 & 88.48 $\pm$ 0.24 & 85.75 $\pm$ 0.72 & 75.04 $\pm$ 1.30 \\
\bottomrule
\end{tabular}
}
\end{table*}

\section{Open Graph Benchmark experiments} \label{app:ogb}

\subsection{Experimental details} \label{app:ogb-details}

\paragraph{Computational resources.} For this set of experiments, we had at our disposal a set of $16$ Tesla V100-SXM GPUs with $160$ CPU cores, shared among the rest of the department.

\paragraph{Statistical significance.} For each \ours and \ours* model, we highlight significant improvements according to a two-sided paired t-test ($\alpha = 0.05$), with respect to its corresponding baseline model. For example, for \ours*[v2] with $8$ heads we perform the test with respect to GATv2 with 8 heads.

\paragraph{General experimental setup.} 
As mentioned in \cref{sec:ogb-experiments}, we repeat all experiments with OGB datasets $5$ times. 
In all cases, we choose the model parameters with the best validation performance during training. Moreover, when we show the results without specifying the number of heads, we take the model with the best validation error among the two models with $1$ and $8$ heads.

We use the same implementation of \ours* for all experiments, switching between models by properly setting $\lambda_1$ and $\lambda_2$. 
Experiments on \spname{arxiv}, \spname{mag}, \spname{products} use a version of \ours* implemented in Pytorch Geometric~\citep{fey-pytorch-geometric}. Experiments on \spname{proteins} use a version of \ours* implemented in DGL~\citep{wang2019dgl}.
We parametrize $\lambda_1$ and $\lambda_2$ as free-parameters in log-space that pass through a sigmoid function---\ie, $\texttt{sigmoid}(10^x)$---such that they are constrained to the unit interval, and they are learned quickly.

\paragraph{ArXiv.} 
As described in \cref{sec:ogb-experiments}, we use the example code from the OGB framework~\citep{hu2020open}.
The network is composed of $3$ GNN layers with a hidden size of $128$. We use batch normalization~\citep{ioffe2015batch} and a dropout~\citep{srivastava14a-dropout} of $0.5$ between the GNN layers, and Adam~\citep{kingma-adam} with a learning rate of $0.01$. We use the ReLU as activation function.
For the initial experiments, we train for \num{1500} epochs, while we train for \num{500} epochs for the noise experiments in~\cref{subsec:exp-noise}.
This is justified given the convergence plots in \cref{fig:arxiv-training}.

\paragraph{MAG.}
We adapted the official code from~\citep{brody2021attentive}. 
The network is composed of $2$ layers with $128$ hidden channels.
This time, we use layer normalization~\citep{ba2016layernorm} and a dropout of $0.5$ between the layers. Again, we use ReLU as the activation function, and add residual connections to the network.
As with \spname{arxiv}, we use Adam~\citep{kingma-adam} with learning rate $0.01$. We set a batch size of $20000$ and train for $100$ epochs.

\paragraph{Products.} We use the same setup as \citep{brody2021attentive}, with a network of $3$ GNN layers and $128$ hidden dimensions. We apply residual connections once again, with a dropout~\citep{srivastava14a-dropout} of $0.5$ between layers. This time, we use ELU as the activation function. The batch size is set to $256$. Adam~\citep{kingma-adam} is again the optimizer in use, this time with a learning rate of $0.001$. We train for $100$ epochs, although we apply early stopping whenever the validation accuracy stops increasing for more than $10$ epochs. Note the training split of this dataset only contains \SI{8}{\percent} of the data.

\paragraph{Proteins.} We follow once more the setup of \citep{brody2021attentive}. The network we use has $6$ GNN layers of hidden size $64$. Dropout~\citep{srivastava14a-dropout} is set to $0.25$ between layers, with an input dropout of $0.1$.
At the beginning of the network, we place a linear layer followed by a ReLU activation to encode the nodes, and a linear layer at the end of the network to predict the class. Moreover, we use batch normalization~\citep{ioffe2015batch} between layers and ReLU as the activation function.
We train the model for $1200$ epochs at most, with early stopping after not improving for $10$ epochs.

\subsection{Additional results}\label{app:ogb-extra-results}

We show in \cref{tab:results-arxiv-full,tab:results-mag-full,tab:results-products-full} the results of the main paper for the \spname{arxiv}, \spname{mag}, \spname{products} datasets, respectively, without selecting the best configuration for each type of model.
That is, we show the results for both number of heads.
Note that we already show the full table of results for the \spname{protein} datasets in the main paper (\cref{tab:results-proteins-init}).
All the trends discussed in the main paper hold.

\begin{table*}[!hbtp]
\centering
\caption{Test accuracy on the \spname{arxiv} dataset for attention models using \num{1} head and \num{8} heads.}
\label{tab:results-arxiv-full}

\resizebox{\textwidth}{!}
{
\begin{tabular}{lccccccc}
\toprule
& GCN & GAT &  \ours & \ours* & GATv2 & \ours[v2] & \ours*[v2] \\
\midrule
1h & 71.58 $\pm$ 0.19 & 71.58 $\pm$ 0.15 & \best{\better{72.04 $\pm$ 0.20}} & \better{72.00 $\pm$ 0.11} & 71.70 $\pm$ 0.14 & \better{72.02 $\pm$ 0.08} & 71.96 $\pm$ 0.21 \\
8h & $\minusmark$ & 71.63 $\pm$ 0.11 & \best{\better{72.14 $\pm$ 0.20}} & \better{71.98 $\pm$ 0.08} & 71.72 $\pm$ 0.24 & 71.76 $\pm$ 0.14 & 71.91 $\pm$ 0.16 \\
\bottomrule
\end{tabular}
}
\end{table*}

\begin{table*}[!hbtp]
\centering
\caption{Test accuracy on the \spname{mag} dataset for attention models using \num{1} head and \num{8} heads.}
\label{tab:results-mag-full}

\resizebox{\textwidth}{!}
{
\begin{tabular}{lccccccc}
\toprule
& GCN & GAT &  \ours & \ours* & GATv2 & \ours[v2] & \ours*[v2] \\
\midrule
1h & \best{32.77 $\pm$ 0.36} & 32.35 $\pm$ 0.24 & 31.98 $\pm$ 0.46 & 32.47 $\pm$ 0.38 & 32.76 $\pm$ 0.18 & \worse{ 32.43 $\pm$ 0.22 } & 32.68 $\pm$ 0.50 \\
8h & $\minusmark$ & 32.15 $\pm$ 0.31 & \worse{ 31.58 $\pm$ 0.22 } & 32.49 $\pm$ 0.21 & \best{32.85 $\pm$ 0.21} & \worse{ 32.34 $\pm$ 0.18 } & \worse{ 32.38 $\pm$ 0.28 } \\
\bottomrule
\end{tabular}
}
\end{table*}

\begin{table*}[!hbtp]
\centering
\caption{Test accuracy on the \spname{products} dataset for attention models using \num{1} head and \num{8} heads.}
\label{tab:results-products-full}

\resizebox{\textwidth}{!}
{
\begin{tabular}{lccccccc}
\toprule
& GCN & GAT &  \ours & \ours* & GATv2 & \ours[v2] & \ours*[v2] \\
\midrule
1h & 74.12 $\pm$ 1.20 & \best{78.53 $\pm$ 0.91} & \worse{77.38 $\pm$ 0.36} & 77.19 $\pm$ 1.11 & 73.81 $\pm$ 0.39 & 74.81 $\pm$ 1.12 & \better{76.37 $\pm$ 0.92} \\
8h & $\minusmark$ & \best{78.23 $\pm$ 0.25} & \worse{76.63 $\pm$ 1.15} & \worse{76.56 $\pm$ 0.45} & 76.40 $\pm$ 0.71 & 75.20 $\pm$ 0.92 & \worse{74.70 $\pm$ 0.28} \\
\bottomrule
\end{tabular}
}
\end{table*}

\paragraph{Extrapolation ablation study.}
Due to page constraints, these results were not added to the main paper.
Here, we study two questions. First, how important are $\lambda_1$ and $\lambda_2$ in the formulation of \ours* (\cref{eq:ours-score-lmbda})? 
For the sake of completeness, the second question we attempt to answer here is whether we can obtain similar performance by just interpolating between GCN and GAT (fixing $\lambda_2 = 0$)?
Note that we theoretically showed in \cref{sec:convolved-GAT,sec:exps-synhetic-data} that \ours fills up a gap between GCN and GAT, making it preferable in certain settings.

To this end, we repeat the experiments for network-initialization robustness in \cref{sec:exp-robustness-init}, since they showed to be the best ones to tell apart the performance across models.
We include three additional models: GCN-GAT, which interpolates between GCN and GAT (or GATv2) by learning $\lambda_1$ and fixing $\lambda_2 = 0$; \ours-$\lambda_1$ which interpolates between GCN and \ours by learning $\lambda_1$ and fixing $\lambda_2 = 1$; and \ours-$\lambda_2$, which interpolates between GAT and \ours by learning $\lambda_2$ and fixing $\lambda_1 = 1$.

Results using GAT and shown in \cref{tab:results-proteins-gat-full}, and using GATv2 in \cref{tab:results-proteins-gatv2-full}.
We can observe that GCN-GAT obtains results in between GCN and GAT for all settings, despite being able to interpolate between both layers in each of the six layers of the network.
Regarding learning $\lambda_1$ and $\lambda_2$, we can observe that there is a clear difference between learning boths (\ours*), and learning a single one.
For both attention models, \ours-$\lambda_1$ obtains better results than \ours-$\lambda_2$ in all settings, but \spname{uniform} with $8$ heads.
Still, the results of both variants are substantially worse than those of \ours* in all cases, \emph{demonstrating the importance of learning to interpolate between the three layer types}.

\begin{table*}[!hbtp]
\centering
\caption{Test accuracy on the \spname{proteins} dataset for GCN~\citep{kipf2016semi} and GAT~\citep{velivckovic2017graph} attention models using two network initializations, and two numbers of heads (\num{1} and \num{8}).}
\label{tab:results-proteins-gat-full}

\resizebox{\textwidth}{!}
{
\begin{tabular}{lccccccc}
\toprule
& GCN & GCN-GAT & GAT &  \ours & \ours* & \ours-${\lambda_1}$ & \ours-${\lambda_2}$ \\
\midrule
& \multicolumn{7}{c}{\spname{uniform} initialization} \\ \midrule
1h & 61.08 $\pm$ 2.86 & \better{70.44 $\pm$ 1.56 } & 59.73 $\pm$ 4.04 & \better{74.19 $\pm$ 0.72 } & \best{\better{77.77 $\pm$ 1.44 }} & \better{71.97 $\pm$ 3.78 } & \better{73.55 $\pm$ 1.36 } \\
8h & $\minusmark$ & \worse{ 68.51 $\pm$ 0.91 } & 72.23 $\pm$ 3.20 & 73.60 $\pm$ 1.27 & \best{\better{78.85 $\pm$ 1.76 }} & \better{76.43 $\pm$ 2.47 } & 72.76 $\pm$ 2.79 \\ \midrule
& \multicolumn{7}{c}{\spname{normal} initialization} \\ \midrule
1h & \best{80.10 $\pm$ 0.61} & 66.51 $\pm$ 3.23 & 66.38 $\pm$ 7.76 & 73.26 $\pm$ 1.84 & \better{78.06 $\pm$ 1.40 } & \better{76.77 $\pm$ 1.91 } & 73.39 $\pm$ 1.25 \\
8h & $\minusmark$ & \worse{ 69.93 $\pm$ 1.93 } & 79.08 $\pm$ 1.64 & \worse{ 74.67 $\pm$ 1.29 } & \best{79.63 $\pm$ 0.79} & 78.86 $\pm$ 1.07 & \worse{ 73.32 $\pm$ 1.15 } \\
\bottomrule
\end{tabular}
}
\end{table*}

\begin{table*}[!hbtp]
\centering
\caption{Test accuracy on the \spname{proteins} dataset for GCN~\citep{kipf2016semi} and GATv2~\citep{brody2021attentive} attention models using two network initializations, and two numbers of heads (\num{1} and \num{8}).}
\label{tab:results-proteins-gatv2-full}

\resizebox{\textwidth}{!}
{
\begin{tabular}{lccccccc}
\toprule
& GCN & GCN-GATv2 & GATv2 &  \ours[v2] & \ours*[v2] & \ours[v2]-${\lambda_1}$ & \ours[v2]-${\lambda_2}$ \\
\midrule
& \multicolumn{7}{c}{\spname{uniform} initialization} \\ \midrule
1h & 61.08 $\pm$ 2.86 & \better{69.69 $\pm$ 1.59 } & 59.85 $\pm$ 3.05 & \better{64.32 $\pm$ 2.61 } & \best{\better{79.08 $\pm$ 1.06 }} & 63.24 $\pm$ 1.55 & \better{73.41 $\pm$ 0.34 } \\
8h & $\minusmark$ & \worse{ 69.94 $\pm$ 1.62 } & 75.21 $\pm$ 1.80 & 74.16 $\pm$ 1.45 & \best{\better{78.77 $\pm$ 1.09 }} & \better{77.61 $\pm$ 1.32 } & 73.96 $\pm$ 1.27 \\ \midrule
& \multicolumn{7}{c}{\spname{normal} initialization} \\ \midrule
1h & \best{80.10 $\pm$ 0.61} & 68.54 $\pm$ 1.63 & 69.13 $\pm$ 9.48 & 74.33 $\pm$ 1.06 & \better{79.07 $\pm$ 1.09 } & 78.41 $\pm$ 0.93 & 74.07 $\pm$ 1.17 \\
8h & $\minusmark$ & \worse{ 68.71 $\pm$ 1.96 } & 78.65 $\pm$ 1.61 & \worse{ 73.40 $\pm$ 0.62 } & \best{79.30 $\pm$ 0.55} & 78.76 $\pm$ 1.41 & \worse{ 73.22 $\pm$ 0.77 } \\
\bottomrule
\end{tabular}
}
\end{table*}

	\clearpage

\section{Extending L-CAT to other GNN models} \label{app:extra-results-baselines}  %

Due to their simplicity and popularity, in the manuscript we focus on the simplest form of GCNs, as described in \cref{sec:preliminaries}.
However, we consider important to remark that L-CAT can be effortless extended to a large range of existing GNN models.
A more general formulation of a message-passing GNN layer than the one given in \cref{eq:gnn} is the following:
\begin{equation}
    \tilde\vh_i = f(\vh_i^\prime) \quad \text{where} \quad \vh_i^\prime \eqdef \bigoplus_{j\in N_i^*} \widehat\gamma_{ij} M\left(\vh_i, \vh_j; \theta_M \right) \;, 
    \label{eqapp:gnn}
\end{equation}
where $\widehat\gamma_{ij}$ is a scalar value, $\bigoplus$ refers to any permutation invariant operation---\eg, sum, mean, maximum, or minimum operations---and $M$ is the message operator, which can be parameterized, and produces a message based on the sender and receiver representations.
This formulation comprises most GNN architectures present in the current literature. 
For example:

\begin{itemize}
	\item GCN~\citep{kipf2016semi}: $\vh_i^\prime =  \sum_{j\in N_i^*} \widehat\gamma_{ij} \mW_v  \vh_j $ where  $\widehat\gamma_{ij} = \frac{1}{|N_i^*|}$ as consider in the main paper, or $\widehat\gamma_{ij} = \frac{1}{\sqrt{d_j d_i}}$, where $d_i$ is the number of neighbors of node $i$ (including self-loops), if we consider the symmetric normalized adjacency matrix instead.
    \item GIN~\citep{xu2018powerful}: $\vh_i^\prime = (1 +  \epsilon) \widehat\gamma_{ii}\vh_i + \sum_{j\in N_i} \widehat\gamma_{ij} \vh_j $ with $\widehat\gamma_{ij} = 1$.
    \item PNA~\citep{corso2020principal}: $\vh_i^\prime = \bigoplus_{j\in N_i^*} \widehat\gamma_{ij} M \left(\vh_i, \vh_j; \theta_M \right)$ where $\widehat\gamma_{ij} = 1$, $M$ is an multi-layer perceptron,  and $\bigoplus$ is a set of permutation invariant operations, e.g., $\bigoplus = [\mu, \sigma, \text{max}, \text{min}]$.
    \item GCNII~\citep{chen2020simple}: $\vh_i^\prime =   \left( \alpha \vh_i^0 +  (1-\alpha) \sum_{j\in N_i^*} \widehat\gamma_{ij} \vh_j \right) \left( (1- \beta) \mI  + \beta   \mW_v  \right)  $ where, just as in the GCN case, $\widehat\gamma_{ij} = \frac{1}{\sqrt{d_j d_i}}$, and $0\leq \alpha,\beta \leq 1$.
\end{itemize}

In all the models above, the values $\widehat\gamma_{ij}$ are taken from the adjacency matrix $A$ (whose entries are $1$ if there exists an edge between nodes $i$ and $j$, and $0$ otherwise), or a matrix derived from it, \eg, the symmetric normalized adjacency matrix.

Note that the attention coefficients $\gamma_{ij}$ defined in \cref{eq:gat} can be understood as an attention-equivalent of the adjacency matrix. Indeed, by defining $A^{att}$ as a matrix whose entries are $|N_i^*|\gamma_{ij}$, one can obtain a row-stochastic matrix that can substitute the adjacency matrix of any GNN model.
This technique to generalize attention models to GNN variants more complex than a GCN has been successfully applied in prior literature~\citep{wang2021bag}.

With this new interpretation of attention-models, the interpolation performed by L-CAT (see \cref{eq:ours-score-lmbda}) can similarly be re-interpreted.
Indeed, L-CAT learns to interpolate between the adjacency matrix $A$, and an attention-based adjacency matrix $A^{att}$, which can be produced by either GAT~(\cref{eq:gat}) or CAT~(\cref{eq:ours-score}), depending on the value of $\lambda_2$.

\subsection{PNA experiments}

In \cref{tabapp:performance_pna_1,tabapp:performance_pna_2}, we show the results---for the datasets described in \cref{app:dataset}---using \ours* and \ours in conjunction with the PNA model~\citep{corso2020principal}.
First, we note that standard PNA works quite well in most cases.
Second, if we focus on \cref{tabapp:performance_pna_1}, we observe that the standard attention models (\ie, PNAGAT and PNAGATv2) perform significantly worse than the other approaches, in particular on datasets with large average degree, \eg, on \spname{Amazon Computers}.
\removed{2}{We suspect that this could an artefact of the current implementation, as we reuse the PNA weights to compute the attention scores to keep the same number of parameters ($\mW_q = \mW_k = \mW_v$).
While this works well in the main paper, the role of $\mW_v$ is completely different from that of $\mW_q$ and $\mW_v$, which may hinder learning.
}%
Finally, we observe that the \ours* models (\ie, L-PNACAT and L-PNACATv2) drastically improve the performance of their attention counterparts and achieve similar performance as the PNA model, with lower performance on the \spname{GitHub} and \spname{Facebook} datasets and higher performance on \spname{Cora} and \spname{CiteSeer}.

	To keep the same number of parameters, we reuse the PNA weights to compute the attention scores ($\mW_q = \mW_k = \mW_v$). However, this could be detrimental, as the role of $\mW_v$ is completely different from that of $\mW_q$ and $\mW_v$. 
	\Cref{tabapp:performance_pna_3,tabapp:performance_pna_4} show the same experiments as before, but using different parameters to compute the keys and queries (\ie, $\mW_q = \mW_k \neq \mW_v$). We observe that the increase of parameters generally helps both CAT and L-CAT models, now outperform the base PNA model in some settings.

\begin{table*}[!ht]
\centering
\caption{Test accuracy (\%) of the PNA \citep{corso2020principal} models for different datasets (sorted by average node degree), averaged over ten runs. Bold numbers are statistically different to their baseline model ($\alpha = 0.05$). Best average performance is underlined.}
\label{tabapp:performance_pna_1}
\resizebox{\textwidth}{!}{
\begin{tabular}{lcccccc}
\toprule
Dataset & \thead{\spname{Amazon}\\\spname{Computers}} & \thead{\spname{Amazon}\\\spname{Photo}} & \spname{GitHub} & \thead{\spname{Facebook}\\\spname{PagePage}} & \thead{\spname{Coauthor}\\\spname{Physics}} & \spname{TwitchEN} \\
    Avg. Deg. & 35.76 & 31.13 & 15.33 & 15.22 & 14.38 & 10.91 \\ 
\midrule
PNA& \best{86.51 $\pm$ 1.22}& \best{93.23 $\pm$ 0.65}& \best{82.33 $\pm$ 0.51}& \best{94.28 $\pm$ 0.34}&  96.09 $\pm$ 0.14& \best{59.25 $\pm$ 1.19} \\
\midrule
PNAGAT&  57.59 $\pm$ 10.19&  74.78 $\pm$ 8.74&  72.77 $\pm$ 2.06&  71.49 $\pm$ 11.23&  96.05 $\pm$ 0.25&  54.22 $\pm$ 3.02 \\
PNACAT &  \bfseries 81.48 $\pm$ 3.81&  \bfseries 91.73 $\pm$ 1.24&  75.55 $\pm$ 3.33&  \bfseries 93.10 $\pm$ 0.41&  96.16 $\pm$ 0.15&  \bfseries 59.11 $\pm$ 1.94 \\
L-PNACAT &  \bfseries 86.45 $\pm$ 1.42&  \bfseries 92.76 $\pm$ 0.74&  \bfseries 78.74 $\pm$ 2.91&  \bfseries 93.59 $\pm$ 0.39& \best{96.24 $\pm$ 0.13}&  \bfseries 59.12 $\pm$ 2.74 \\
\midrule
PNAGATv2&  36.93 $\pm$ 4.07&  60.13 $\pm$ 4.81&  73.93 $\pm$ 1.89&  58.91 $\pm$ 3.42&  95.61 $\pm$ 0.29&  54.45 $\pm$ 1.60 \\
PNACATv2 &  \bfseries 79.08 $\pm$ 2.62&  \bfseries 88.61 $\pm$ 3.24&  75.11 $\pm$ 2.79&  \bfseries 92.77 $\pm$ 0.50&  \bfseries 96.06 $\pm$ 0.18&  \bfseries 56.72 $\pm$ 2.43 \\
L-PNACATv2 &  \bfseries 85.10 $\pm$ 1.70&  \bfseries 92.19 $\pm$ 0.55&  \bfseries 79.79 $\pm$ 1.40&  \bfseries 93.54 $\pm$ 0.36&  \bfseries 96.03 $\pm$ 0.19&  \bfseries 58.19 $\pm$ 1.53 \\
\bottomrule
\end{tabular}
}
\end{table*}

\begin{table*}[!ht]
\centering
\caption{Test accuracy (\%) of the PNA \citep{corso2020principal} models for different datasets (sorted by average node degree), averaged over ten runs. Bold numbers are statistically different to their baseline model ($\alpha = 0.05$). Best average performance is underlined.}
\label{tabapp:performance_pna_2}
\resizebox{\textwidth}{!}{
\begin{tabular}{lccccc}
\toprule
Dataset & \spname{CoauthorCS} & \spname{DBLP} & \spname{PubMed} & \spname{Cora} & \spname{CiteSeer}  \\
Avg. Deg. & 8.93 &  5.97 &  4.5 & 3.9 &  2.74 \\ \midrule
PNA& \best{93.30 $\pm$ 0.31}&  83.37 $\pm$ 0.32&  88.37 $\pm$ 0.73&  84.94 $\pm$ 1.19&  73.92 $\pm$ 0.97 \\
\midrule
PNAGAT&  92.46 $\pm$ 0.95&  83.42 $\pm$ 0.39& \best{88.40 $\pm$ 0.33}&  84.67 $\pm$ 0.69&  74.64 $\pm$ 0.82 \\
PNACAT&  92.90 $\pm$ 0.24&  83.35 $\pm$ 0.40&  88.24 $\pm$ 0.30&  \bfseries 85.58 $\pm$ 1.00&  74.94 $\pm$ 1.68 \\
L-PNACAT&  93.11 $\pm$ 0.24&  83.21 $\pm$ 0.55&  88.22 $\pm$ 0.40& \best{\bfseries 85.77 $\pm$ 1.01}& \best{75.08 $\pm$ 1.05} \\
\midrule
PNAGATv2&  90.14 $\pm$ 0.82&  83.37 $\pm$ 0.34&  88.14 $\pm$ 0.45&  85.04 $\pm$ 0.86&  74.50 $\pm$ 1.18 \\
PNACATv2&  \bfseries 92.78 $\pm$ 0.27&  83.22 $\pm$ 0.38&  88.28 $\pm$ 0.30&  85.41 $\pm$ 0.98&  74.42 $\pm$ 1.11 \\
L-PNACATv2 &  \bfseries 93.02 $\pm$ 0.37& \best{83.54 $\pm$ 0.65}&  88.23 $\pm$ 0.58&  85.48 $\pm$ 0.98&  74.76 $\pm$ 1.57 \\
\bottomrule
\end{tabular}
}
\end{table*}

\begin{table*}[!ht]
	\centering
	\caption{Test accuracy (\%) of the PNA \citep{corso2020principal} extended models with $\mW_q = \mW_k \neq \mW_v$ for different datasets, averaged over ten runs. Bold numbers are statistically different to their baseline model ($\alpha = 0.05$). Best average performance is underlined.}
	\label{tabapp:performance_pna_3}
	\resizebox{\textwidth}{!}{
	\begin{tabular}{lcccccc}
		\toprule
		Dataset & \thead{\spname{Amazon}\\\spname{Computers}} & \thead{\spname{Amazon}\\\spname{Photo}} & \spname{GitHub} & \thead{\spname{Facebook}\\\spname{PagePage}} & \thead{\spname{Coauthor}\\\spname{Physics}} & \spname{TwitchEN} \\
		Avg. Deg. & 35.76 & 31.13 & 15.33 & 15.22 & 14.38 & 10.91 \\ 
		\midrule
		PNA & \best{86.51 $\pm$ 1.22}& \best{93.23 $\pm$ 0.65}& \best{82.33 $\pm$ 0.51}& \best{94.28 $\pm$ 0.34}&  96.09 $\pm$ 0.14& \best{59.25 $\pm$ 1.19} \\ \midrule
		PNAGAT &  48.65 $\pm$ 19.25&  68.01 $\pm$ 20.32&  72.97 $\pm$ 1.07&  70.17 $\pm$ 12.02&  96.02 $\pm$ 0.34&  53.27 $\pm$ 2.54 \\
		PNACAT &  \bfseries 83.45 $\pm$ 2.60&  \bfseries 91.62 $\pm$ 1.30&  75.35 $\pm$ 2.71&  \bfseries 93.31 $\pm$ 0.55&  96.22 $\pm$ 0.13&  \bfseries 59.23 $\pm$ 2.25 \\
		L-PNACAT & {   \bfseries 87.18 $\pm$ 1.22 }&  \bfseries 92.79 $\pm$ 0.63&  \bfseries 79.64 $\pm$ 2.54& {   \bfseries 93.78 $\pm$ 0.39 }& {   \bfseries 96.31 $\pm$ 0.17 }&  \bfseries 59.09 $\pm$ 2.50 \\
		\midrule
		PNAGATv2 &  39.49 $\pm$ 4.09&  62.19 $\pm$ 11.30&  73.97 $\pm$ 1.67&  63.00 $\pm$ 4.95&  95.83 $\pm$ 0.36&  55.21 $\pm$ 1.05 \\
		PNACATv2 & \bfseries 81.20 $\pm$ 3.63&  \bfseries 91.32 $\pm$ 0.80&  74.57 $\pm$ 2.18&  \bfseries 92.98 $\pm$ 0.36&  \bfseries 96.14 $\pm$ 0.16&  56.21 $\pm$ 2.01 \\
		L-PNACATv2 &  \bfseries 86.22 $\pm$ 0.83& {   \bfseries 92.98 $\pm$ 0.89 }& {   \bfseries 79.78 $\pm$ 2.48 }&  \bfseries 93.44 $\pm$ 0.37&  \bfseries 96.13 $\pm$ 0.12& {   \bfseries 60.26 $\pm$ 1.25 } \\
		\bottomrule
	\end{tabular}
	}
\end{table*}

\begin{table*}[!ht]
	\centering
	\caption{Test accuracy (\%) of the PNA \citep{corso2020principal} extended models with $\mW_q = \mW_k \neq \mW_v$ for different datasets, averaged over ten runs. Bold numbers are statistically different to their baseline model ($\alpha = 0.05$). Best average performance is underlined.}
	\label{tabapp:performance_pna_4}
	\resizebox{\textwidth}{!}{
	\begin{tabular}{lccccc}
		\toprule
		Dataset & \spname{CoauthorCS} & \spname{DBLP} & \spname{PubMed} & \spname{Cora} & \spname{CiteSeer}  \\
		Avg. Deg. & 8.93 &  5.97 &  4.5 & 3.9 &  2.74 \\ \midrule
		PNA& \best{93.30 $\pm$ 0.31}&  83.37 $\pm$ 0.32&  88.37 $\pm$ 0.73&  84.94 $\pm$ 1.19&  73.92 $\pm$ 0.97 \\ \midrule
		PNAGAT&  92.50 $\pm$ 0.46&  83.22 $\pm$ 0.45&  88.43 $\pm$ 0.29&  84.89 $\pm$ 1.15&  75.76 $\pm$ 1.29 \\
		PNAGAT&  \bfseries 92.97 $\pm$ 0.47&  83.28 $\pm$ 0.59&  88.27 $\pm$ 0.43&  85.09 $\pm$ 0.70&  75.44 $\pm$ 1.51 \\
		PNAGAT& {   \bfseries 93.17 $\pm$ 0.30 }& {   83.50 $\pm$ 0.29 }& {   88.54 $\pm$ 0.45 }& {   85.63 $\pm$ 0.92 }&  75.22 $\pm$ 1.12 \\
		\midrule
		PNAGATv2&  90.00 $\pm$ 1.01&  83.40 $\pm$ 0.48&  88.14 $\pm$ 0.31&  85.16 $\pm$ 0.91& {   76.14 $\pm$ 1.33 } \\
		PNAGATv2&  \bfseries 92.74 $\pm$ 0.20&  \worse{83.05 $\pm$ 0.49}&  \bfseries 88.38 $\pm$ 0.31&  85.21 $\pm$ 0.83&  75.80 $\pm$ 1.26 \\
		PNAGATv2&  \bfseries 93.02 $\pm$ 0.30&  83.24 $\pm$ 0.44&  88.28 $\pm$ 0.35&  85.04 $\pm$ 0.94&  75.80 $\pm$ 1.19 \\
		\bottomrule
	\end{tabular}
}
\end{table*}

\newpage
\subsection{GCNII experiments}

Similarly, we have run the experiments from \cref{sec:experiments-graphgym}, this time combining GCNII~\citep{chen2020simple} with GAT, \ours, and \ours* as explained above.
Results are shown in \cref{tabapp:performance_gcn2_1,tabapp:performance_gcn2_2}, in which we can observe that the baseline model obtains the best results so far in the manuscript (in comparison with both GCN and PNA).
And just as before, we observe again that \ours and \ours* always improve with respect to their base models, staying on par with the baseline GNCII model and, sometimes, even outperforming the baseline model on average (\eg, \spname{Coauthor Physics}, \spname{TwitchEN}).
As with the experiments for PNA, \cref{tabapp:performance_gcn2_3,tabapp:performance_gcn2_4} shows the results when the attention matrices are different from the value matrix ($\mW_q = \mW_k \neq \mW_v$). We can similarly observe that most of the results are improved with the additional parameters, beating the baseline model in different datasets.

\begin{table*}
	\centering
	\caption{Test accuracy (\%) of the GCNII \citep{chen2020simple} models for different datasets, averaged over ten runs. Bold numbers are statistically different to their baseline model ($\alpha = 0.05$). Best average performance is underlined.}
	\label{tabapp:performance_gcn2_1}
	\resizebox{\textwidth}{!}{
\begin{tabular}{lcccccc}
	\toprule
	Dataset & \thead{\spname{Amazon}\\\spname{Computers}} & \thead{\spname{Amazon}\\\spname{Photo}} & \spname{GitHub} & \thead{\spname{Facebook}\\\spname{PagePage}} & \thead{\spname{Coauthor}\\\spname{Physics}} & \spname{TwitchEN} \\
	Avg. Deg. & 35.76 & 31.13 & 15.33 & 15.22 & 14.38 & 10.91 \\ 
	\midrule
	GCNII & \best{90.82 $\pm$ 0.20}& \best{95.51 $\pm$ 0.48}& \best{84.11 $\pm$ 0.76}& \best{94.03 $\pm$ 0.30}&  96.58 $\pm$ 0.11&  60.94 $\pm$ 1.66 \\ \midrule
	GCNIIGAT&  89.04 $\pm$ 0.87&  94.74 $\pm$ 0.57&  82.34 $\pm$ 0.64&  91.18 $\pm$ 0.82&  96.69 $\pm$ 0.13&  57.76 $\pm$ 1.76 \\
	GCNIICAT&  \better{89.83 $\pm$ 0.42}&  \better{95.31 $\pm$ 0.25}&  \better{83.15 $\pm$ 0.51}&  \better{93.25 $\pm$ 0.37}&  96.69 $\pm$ 0.09&  \better{60.51 $\pm$ 1.12} \\
	L-GCNIICAT &  \better{90.03 $\pm$ 0.42}&  95.23 $\pm$ 0.39&  \better{83.50 $\pm$ 0.57}&  \better{93.71 $\pm$ 0.33}& \best{\better{96.87 $\pm$ 0.14}}& \best{\better{61.14 $\pm$ 1.64}} \\
	\midrule
	GCNIIGATv2&  84.26 $\pm$ 2.80&  89.23 $\pm$ 5.30&  81.23 $\pm$ 0.45&  83.82 $\pm$ 1.24&  96.14 $\pm$ 0.28&  56.25 $\pm$ 1.56 \\
	GCNIICATv2&  \better{89.59 $\pm$ 0.45}&  \better{95.03 $\pm$ 0.55}&  \better{82.45 $\pm$ 0.30}&  \better{92.55 $\pm$ 0.52}&  \better{96.50 $\pm$ 0.09}&  \better{59.04 $\pm$ 1.49} \\
	L-GCNIICATv2&  \better{89.81 $\pm$ 0.48}&  \better{95.24 $\pm$ 0.35}&  \better{83.05 $\pm$ 0.49}&  \better{93.68 $\pm$ 0.35}&  \better{96.75 $\pm$ 0.11}&  \better{61.10 $\pm$ 1.11} \\
	\bottomrule
\end{tabular}
}
\end{table*}

\begin{table*}
	\centering
	\caption{Test accuracy (\%) of the GCNII \citep{chen2020simple} models for different datasets, averaged over ten runs. Bold numbers are statistically different to their baseline model ($\alpha = 0.05$). Best average performance is underlined.}
	\label{tabapp:performance_gcn2_2}
	\resizebox{\textwidth}{!}{
\begin{tabular}{lccccc}
	\toprule
	Dataset & \spname{CoauthorCS} & \spname{DBLP} & \spname{PubMed} & \spname{Cora} & \spname{CiteSeer}  \\
	Avg. Deg. & 8.93 &  5.97 &  4.5 & 3.9 &  2.74 \\ \midrule
	GCNII& \best{95.36 $\pm$ 0.18}&  83.86 $\pm$ 0.14&  89.05 $\pm$ 0.28& \best{86.49 $\pm$ 0.79}&  76.46 $\pm$ 0.71 \\
	\midrule
	GCNIIGAT&  95.32 $\pm$ 0.27&  83.45 $\pm$ 0.60&  88.24 $\pm$ 0.34&  85.72 $\pm$ 1.05&  75.78 $\pm$ 0.77 \\
	GCNIICAT&  95.12 $\pm$ 0.25&  83.86 $\pm$ 0.23&  88.65 $\pm$ 0.40&  85.51 $\pm$ 0.95&  \better{76.60 $\pm$ 0.50} \\
	L-GCNIICAT &  95.30 $\pm$ 0.29&  83.76 $\pm$ 0.26&  \better{88.72 $\pm$ 0.35}&  85.48 $\pm$ 0.96& \best{\better{76.76 $\pm$ 0.51}} \\
	\midrule
	GCNIIGATv2&  93.37 $\pm$ 0.73&  83.70 $\pm$ 0.42&  88.49 $\pm$ 0.34&  85.82 $\pm$ 1.35&  75.98 $\pm$ 0.71 \\
	GCNIICATv2&  \better{95.01 $\pm$ 0.32}&  83.93 $\pm$ 0.36&  88.60 $\pm$ 0.27&  85.72 $\pm$ 1.03&  76.62 $\pm$ 0.63 \\
	L-GCNIICATv2&  \better{95.29 $\pm$ 0.23}& \best{83.93 $\pm$ 0.41}& \best{\better{89.12 $\pm$ 0.35}}&  85.73 $\pm$ 0.88&  76.22 $\pm$ 0.98 \\
	\bottomrule
\end{tabular}
}
\end{table*}

\begin{table*}
	\centering
	\caption{Test accuracy (\%) of the GCNII \citep{chen2020simple} extended models with $\mW_q = \mW_k \neq \mW_v$ for different datasets, averaged over ten runs. Bold numbers are statistically different to their baseline model ($\alpha = 0.05$). Best average performance is underlined.}
	\label{tabapp:performance_gcn2_3}
	\resizebox{\textwidth}{!}{
\begin{tabular}{lcccccc}
	\toprule
		Dataset & \thead{\spname{Amazon}\\\spname{Computers}} & \thead{\spname{Amazon}\\\spname{Photo}} & \spname{GitHub} & \thead{\spname{Facebook}\\\spname{PagePage}} & \thead{\spname{Coauthor}\\\spname{Physics}} & \spname{TwitchEN} \\
	Avg. Deg. & 35.76 & 31.13 & 15.33 & 15.22 & 14.38 & 10.91 \\ 
	\midrule
	GCNII & \best{90.82 $\pm$ 0.20}&  95.51 $\pm$ 0.48& \best{84.11 $\pm$ 0.76}& \best{94.03 $\pm$ 0.30}&  96.58 $\pm$ 0.11& \best{60.94 $\pm$ 1.66} \\
	\midrule
	GCNIIGAT&  89.24 $\pm$ 0.59&  94.66 $\pm$ 0.59&  82.45 $\pm$ 0.65&  90.90 $\pm$ 0.71& \best{96.90 $\pm$ 0.16}&  58.12 $\pm$ 2.02 \\
	GCNIICAT &  \better{89.94 $\pm$ 0.40}&  95.03 $\pm$ 0.37&  \better{83.12 $\pm$ 0.37}&  \better{93.39 $\pm$ 0.31}&  \worse{96.59 $\pm$ 0.07}&  59.60 $\pm$ 0.76 \\
	L-GCNIICAT &  \better{90.35 $\pm$ 0.46}&  \best{\better{95.53 $\pm$ 0.35}} &  \better{83.48 $\pm$ 0.47}&  \better{93.63 $\pm$ 0.39}&  \worse{96.80 $\pm$ 0.09}&  \better{60.77 $\pm$ 2.15} \\
	\midrule
	GCNIIGATv2 &  85.70 $\pm$ 2.58&  91.24 $\pm$ 2.43&  81.43 $\pm$ 0.39&  84.59 $\pm$ 0.79&  96.55 $\pm$ 0.20&  55.23 $\pm$ 2.15 \\
	GCNIICATv2 &  \better{89.56 $\pm$ 0.67}&  \better{95.46 $\pm$ 0.54}&  \better{82.50 $\pm$ 0.47}&  \better{93.04 $\pm$ 0.40}&  96.55 $\pm$ 0.10&  \better{59.57 $\pm$ 1.45} \\
	L-GCNIICATv2 &  \better{90.24 $\pm$ 0.32}& \best{\better{95.53 $\pm$ 0.28}}&  \better{83.34 $\pm$ 0.45}&  \better{93.67 $\pm$ 0.47}&  96.73 $\pm$ 0.14&  \better{60.65 $\pm$ 1.13} \\
	\bottomrule
\end{tabular}
}
\end{table*}

\begin{table*}
	\centering
	\caption{Test accuracy (\%) of the GCNII \citep{chen2020simple} extended models with $\mW_q = \mW_k \neq \mW_v$ for different datasets, averaged over ten runs. Bold numbers are statistically different to their baseline model ($\alpha = 0.05$). Best average performance is underlined.}
	\label{tabapp:performance_gcn2_4}
	\resizebox{\textwidth}{!}{
		\begin{tabular}{lccccc}
			\toprule
				Dataset & \spname{CoauthorCS} & \spname{DBLP} & \spname{PubMed} & \spname{Cora} & \spname{CiteSeer}  \\
			Avg. Deg. & 8.93 &  5.97 &  4.5 & 3.9 &  2.74 \\ \midrule
			GCNII &  95.36 $\pm$ 0.18&  83.86 $\pm$ 0.14& \best{89.05 $\pm$ 0.28}&  \best{86.49 $\pm$ 0.79} &  76.46 $\pm$ 0.71 \\
			\midrule
			GCNIIGAT&  95.36 $\pm$ 0.20&  83.60 $\pm$ 0.32&  88.33 $\pm$ 0.35&  85.26 $\pm$ 1.19&  76.30 $\pm$ 0.78 \\
			GCNIICAT &  \worse{95.20 $\pm$ 0.12}& \best{\better{83.89 $\pm$ 0.29}}&  88.45 $\pm$ 0.29&  86.44 $\pm$ 1.22&  76.70 $\pm$ 0.60 \\
			L-GCNIICAT & \best{95.47 $\pm$ 0.16}&  83.70 $\pm$ 0.40&  88.08 $\pm$ 0.47&  86.02 $\pm$ 1.43&  76.54 $\pm$ 0.59 \\
			\midrule
			GCNIIGATv2&  93.97 $\pm$ 0.57&  83.67 $\pm$ 0.24&  88.24 $\pm$ 0.16&  85.28 $\pm$ 1.11&  76.58 $\pm$ 0.64 \\
			GCNIICATv2 &  \better{95.05 $\pm$ 0.33}&  83.78 $\pm$ 0.35&  88.35 $\pm$ 0.34&  \better{86.49 $\pm$ 0.90}&  \worse{75.28 $\pm$ 0.84} \\
			L-GCNIICATv2 &  \better{95.45 $\pm$ 0.18}&  83.86 $\pm$ 0.24&  88.06 $\pm$ 0.32& \best{86.49 $\pm$ 1.31}& \best{76.80 $\pm$ 0.42} \\
			\bottomrule
		\end{tabular}
	}
\end{table*}

\end{document}